\newtheorem{thm}{\hspace{1em} Theorem}
\newtheorem{lem}{\hspace{1em} Lemma}
\newtheorem{assumption}{\hspace{1em} Assumption}
\newtheorem{remark}{Remark}
\begin{document}

\title{FedHL: Federated Learning for Heterogeneous
Low-Rank Adaptation via Unbiased Aggregation}

\author{
    Zihao Peng, Jiandian Zeng,~\IEEEmembership{Member,~IEEE}, Boyuan Li,~\IEEEmembership{Student Member,~IEEE}, Guo Li,~\IEEEmembership{Student Member,~IEEE}, Shengbo Chen,~\IEEEmembership{Member,~IEEE}, Tian Wang*,~\IEEEmembership{Senior Member,~IEEE}%
    \thanks{This work was supported in part by grants from the National Natural Science Foundation of China (NSFC) (62172046, 62372047, 62302049), the Beijing Natural Science Foundation (No. 4232028), the Natural Science Foundation of Guangdong Province (2024A1515011323), Zhuhai Basic and Applied Basic Research Foundation (2220004002619), the Joint Project of Production, Teaching and Research of Zhuhai (2220004002686, 2320004002812), Science and Technology Projects of Social Development in Zhuhai (2320004000213), the Supplemental Funds for Major Scientific Research Projects of Beijing Normal University, Zhuhai (ZHPT2023002), the Fundamental Research Funds for the Central Universities, Higher Education Research Topics of Guangdong Association of Higher Education in the 14th Five-Year Plan under 24GYB207, Beijing Normal University Education Reform Project under jx2024139, and support from the Interdisciplinary Intelligence Super Computer Center of Beijing Normal University at Zhuhai. (\textit{Corresponding author}: Tian Wang).}
    \thanks{Zihao Peng, Jiandian Zeng, and Guo Li are with the Institute of Artificial Intelligence and Future Networks, Beijing Normal University, Zhuhai 519087, China (e-mail: \{pzh\_cs, liguo\}@mail.bnu.edu.cn; jiandian@bnu.edu.cn).}
    \thanks{Boyuan Li is with the School of Computer Science and Artificial Intelligence, Zhengzhou University, Zhengzhou 450001, China (e-mail: l202311841010602@gs.zzu.edu.cn).}
    \thanks{Shengbo Chen is with the School of Software, Nanchang University, Nanchang 330000, China (e-mail: ccb02kingdom@gmail.com).}
    \thanks{Tian Wang is with the Institute of Artificial Intelligence and Future Networks, Beijing Normal University, Zhuhai 519087, China (e-mail: tianwang@bnu.edu.cn).}
}
\maketitle

\begin{abstract}
Federated Learning (FL) facilitates the fine-tuning of Foundation Models (FMs) using distributed data sources, with Low-Rank Adaptation (LoRA) gaining popularity due to its low communication costs and strong performance. While recent work acknowledges the benefits of heterogeneous LoRA in FL and introduces flexible algorithms to support its implementation, our theoretical analysis reveals a critical gap: existing methods lack formal convergence guarantees due to parameter truncation and biased gradient updates.  Specifically, adapting client-specific LoRA ranks necessitates truncating global parameters, which introduces inherent truncation errors and leads to subsequent inaccurate gradient updates that accumulate over training rounds, ultimately degrading performance. To address the above issues, we propose \textbf{FedHL}, a simple yet effective \textbf{Fed}erated Learning framework tailored for \textbf{H}eterogeneous \textbf{L}oRA. By leveraging the full-rank global model as a calibrated aggregation basis, FedHL eliminates the direct truncation bias from initial alignment with client-specific ranks. Furthermore, we derive the theoretically optimal aggregation weights by minimizing the gradient drift term in the convergence upper bound. Our analysis shows that FedHL guarantees $\mathcal{O}(1/\sqrt{T})$ convergence rate, and experiments on multiple real-world datasets demonstrate a 1–3\% improvement over several state-of-the-art methods.
\end{abstract}

\begin{IEEEkeywords}
Federated Learning, low-rank adaptation, heterogeneous system.
\end{IEEEkeywords}

\section{Introduction}
\IEEEPARstart{R}{apid} advances in Artificial Intelligence (AI) are increasingly fueled by Foundation Models (FMs)—large, pre-trained models that excel at a wide range of downstream tasks due to their extensive parameterization and broad knowledge \cite{zhuang2023foundation,ren2024advances}. Federated Learning (FL) \cite{mcmahan2017communication}, a distributed paradigm that trains models collaboratively without sharing raw data, has garnered growing interest. 
Integrating FMs and FL offers mutual benefits: on the one hand, FL harnesses data from diverse, heterogeneous sources \cite{li2023review}, thus enriching the training data accessible to FMs; on the other hand, the strong generalization capability of FMs can serve as a powerful initializer or teacher, boosting FL’s overall performance \cite{fan2025ten,nguyen2022begin}.

However, integrating FMs with FL presents significant challenges due to communication bottlenecks \cite{guo2023promptfl,wu2024fedbiot}. In traditional FL, the transmission of full model parameters during each training round imposes prohibitively high communication costs. To address this issue, Parameter-Efficient Fine-Tuning (PEFT) techniques have emerged as viable alternatives. Among them, Low-Rank Adaptation (LoRA) \cite{hu2021lora, zhang2023adalora} stands out for its ability to fine-tune large models by updating a small fraction of parameters while achieving comparable performance to full fine-tuning. By exploiting the inherent low-rank structure in parameter updates, it decomposes weight changes into two low-rank matrices and transfers only these matrices, significantly reducing the communication overhead associated with fine-tuning large models in FL \cite{jiang2024low,10916512}.

\begin{figure}[!t]
\centering
\includegraphics[width=\columnwidth, height=5.3cm]{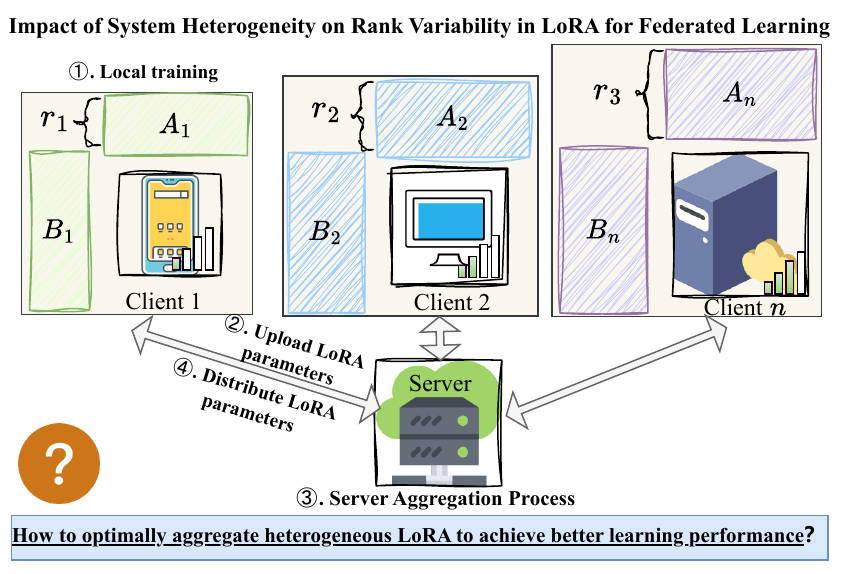}
\caption{Heterogeneous client environments necessitate LoRA modules of varying ranks, requiring the design of targeted federated aggregation algorithms.}
\label{fig:fedlora_motivation}
\end{figure}

Building on the communication efficiency of LoRA, researchers have extended its application to address another pressing concern in practical FL scenarios: system and data heterogeneity \cite{zhang2024towards}. In real-world settings, clients vary significantly in terms of computational power, network conditions, and data distributions, making uniform LoRA configurations less compatible with the complexities of heterogeneous environments \cite{cho2023heterogeneous,10666083}. To enhance LoRA's applicability in such scenarios, researchers have developed heterogeneous LoRA methods, which enable customized low-rank configurations for each client, aligning parameter updates with local resource constraints while preserving LoRA’s core advantage of reduced communication \cite{cho2024heterogeneous}.  

Although heterogeneous LoRA has shown promise in certain studies \cite{bai2024federated,cho2024heterogeneous}, its convergence problems have yet to be systematically addressed. A critical issue arises during model initialization, where clients are required to truncate the global model to comply with their rank-specific constraints \cite{fang2025federated}. Specifically, adapting the rank of individual clients’ low-rank matrices necessitates truncating the global model’s parameters, which inherently introduces truncation bias \cite{bai2024federated}. Furthermore, performing gradient descent from such a truncated starting point results in deviations from the ideal gradient, which accumulate over iterations and lead to a biased update trajectory. Despite the significance of this issue, existing approaches largely overlook the impact of truncation-induced biases. Therefore, the design of effective training and aggregation strategies for heterogeneous LoRA modules remains an open problem, particularly in addressing truncation-induced biases and ensuring stable convergence in federated settings.

In this paper, we provide a rigorous theoretical analysis that identifies a critical limitation in existing FL frameworks utilizing heterogeneous LoRA: the accumulation of truncation-induced bias over training epochs, which ultimately undermines convergence guarantees.  To address these challenges, we propose FedHL, a \textbf{Fed}erated Learning framework tailored for \textbf{H}eterogeneous \textbf{L}oRA. FedHL  integrates the full-rank model into the aggregation process, effectively eliminating truncation bias and the resulting distortion terms that impede convergence. Moreover, our framework derives theoretically optimal aggregation weights by optimizing the convergence bound, thereby mitigating gradient drift. We theoretically prove that FedHL achieves a convergence rate of \(\mathcal{O}(1/\sqrt{T})\), demonstrating that it converges to the optimal solution. The main contributions of this work are as follows: 
\begin{itemize} 
\item 
We conduct a rigorous analysis of the convergence behavior of heterogeneous LoRA in FL, revealing that the commonly used truncation operation to align client-specific LoRA ranks introduces biases that undermine robust convergence guarantees.
\item We propose FedHL, a framework that leverages the full-rank model for unbiased aggregation to eliminate initialization truncation bias and derives optimal aggregation weights to mitigate gradient drift during training,  ensuring convergence to the optimal solution.
\item We validate the effectiveness of FedHL through extensive experiments on diverse federated fine-tuning benchmarks, and the experimental results demonstrate its superiority over the state-of-the-art methods. 
\end{itemize}

The remainder of this paper is organized as follows. Section \ref{related_work} provides an overview of related work. Section~\ref{preliminaries} introduces the necessary preliminaries. Section \ref{section:FedHL} discusses the challenges in existing FL-based LoRA approaches and presents the \textsc{FedHL} algorithm in detail. The convergence analysis is provided in Section \ref{section:convergence}. Experimental results are reported in Section \ref{section:experimental results}. Finally, Section \ref{sec:conclusion} concludes the paper. Table \ref{tab:notation} presents the symbols used in this paper.

\begin{table}[!t]
  \centering
  \caption{Primary Notation Summary.}
  \label{tab:notation}
  \footnotesize
  \setlength{\tabcolsep}{4pt}
  \renewcommand{\arraystretch}{1.1}
  \begin{tabular}{@{}cc@{}}
    \toprule 
    \textbf{Symbol} & \textbf{Definition} \\ 
    \hline
    $W$               & Global parameter matrix, $W \in \mathbb{R}^{d \times k}$ \\ 
    $r_i$             & Rank allocated to client $i$ in LoRA \\ 
    $W_t$             & Global model at communication round $t$ \\ 
    $W_t^{r_i}$       & Rank-$r_i$ truncated SVD projection of $W_t$ \\ 
    $B_t^i$           & Low-rank matrix of client $i$, $B_t^i \in \mathbb{R}^{d \times r_i}$ at round $t$ \\ 
    $A_t^i$           & Low-rank matrix of client $i$, $A_t^i \in \mathbb{R}^{r_i \times k}$ at round $t$ \\ 
    $B_{t,K}^i$       & Updated $B_t^i$ after $K$ local training epochs \\ 
    $A_{t,K}^i$       & Updated $A_t^i$ after $K$ local training epochs \\ 
    $W_{t+1}^i$       & Reconstructed model for client $i$ at $t{+}1$, i.e., $W_{t+1}^i = B_{t,K}^i A_{t,K}^i$ \\ 
    $p_i(t)$          & Aggregation weight of client $i$ at round $t$ \\ 
    $\hat r_i(t)$     & Truncation error, defined as $\| W_t - W_t^{r_i} \|_F^2$ \\ 
    $\eta_t$          & Local learning rate at round $t$ \\ 
    $K$               & Number of local epochs per communication round \\ 
    $T$               & Total number of communication rounds \\ 
\bottomrule 
  \end{tabular}
\end{table}

\section{Related Works}
\label{related_work}
\subsection{Low-Rank Adaptation of Large Language Models}
LoRA \cite{hu2021lora} is one of the most advanced parameter-efficient fine-tuning methods, motivated by the observation that weight updates in neural networks exhibit low-rank properties. Specifically, LoRA re‑parameterizes the weight‑matrix update as the product of two low‑rank matrices, thereby emulating the full update while greatly reducing the number of trainable parameters. Owing to its plug-and-play nature and no additional inference overhead, LoRA has been widely adopted to adapt large models across diverse downstream domains, including vision~\cite{guo2024smooth}, language~\cite{xue2024autore}, and multimodal tasks~\cite{ye2023mplug}.

Several studies further optimize LoRA-based fine-tuning approaches to enhance model efficiency and adaptability. QLoRA \cite{dettmers2024qlora} introduces parameter quantization techniques, reducing memory usage while maintaining performance. AdaLoRA \cite{zhang2023adaptive} improves model performance by assigning ranks based on the importance of each model layer’s weights. LoRA+ \cite{hayou2024lora} demonstrates that using different learning rates for the two low-rank modules yields better results. LoRAHub \cite{huang2023lorahub} facilitates the purposeful assembly of LoRA modules trained for various tasks, aiming to achieve adaptive performance on unseen tasks. Additionally, the theoretical understanding of LoRA’s interpretability continues to evolve. Malladi et al. \cite{malladi2023kernel} show through kernel analysis that LoRA fine-tuning achieves near-full fine-tuning performance under lazy training mechanisms. Zeng et al. \cite{zeng2024the} analyze the expressiveness of LoRA in fully connected neural networks and transformer architectures, while Jang et al. \cite{jang2024lora} investigate strategies for rank selection to avoid spurious local minima.

\subsection{LoRA for Federated Learning}

As a parameter-efficient fine-tuning technique, LoRA significantly cuts communication costs in FL while enabling task adaptation under memory constraints. Its integration with FL has attracted significant attention \cite{10733964, mao2024survey,jiang2023low}, with early studies focusing on homogeneous setups. For instance, FedIT \cite{zhang2024towards} utilizes homogeneous LoRA for instruction tuning, balancing privacy and efficiency. FDLoRA \cite{qi2024fdlora} employs dual homogeneous modules to integrate global and personalized knowledge, while HyperFLoRA \cite{lu2024hyperflora} introduces a hypernetwork-based approach to generate personalized adapter weights using minimal client statistics. In \cite{sun2024improving, liu2025differentially}, differential privacy is incorporated into homogeneous LoRA, thereby enhancing its practical applicability in privacy-preserving FL. Moreover, to further mitigate communication bottlenecks, Kuo et al. \cite{kuo2024federated} apply sparsity to homogeneous LoRA during communication. Recent studies \cite{wang2025federated,10763424} have proposed integrating LoRA into wireless network scenarios to achieve efficient training.

While homogeneous approaches lay a foundation, practical FL systems often face diverse resource constraints, thus necessitating the exploration of heterogeneous solutions. HetLoRA \cite{cho2023heterogeneous} uncovered a trade-off between overfitting and slow convergence in uniform LoRA rank configurations. To address this, Cho et al. \cite{cho2024heterogeneous} proposed heterogeneous LoRA ranks, combining high- and low-rank benefits through zero-padding and truncation. Similarly, Fan et al. \cite{fan2025helora} also explore padding methods to optimize resource usage in FL systems. FlexLoRA \cite{bai2024federated} intentionally bridges resource disparities between clients via heterogeneous training, alleviating the "bottleneck effect" of traditional FL. FLoRA \cite{wang2024flora} introduces a stacking-based aggregation method for noise-free federated fine-tuning of heterogeneous LoRA adapters.  

Despite its strong empirical promise in FL, heterogeneous LoRA remains theoretically underexplored. Existing convergence analyses overlook the bias introduced by rank reduction, so their guarantees are not formally complete. This paper pinpoints the truncation-induced bias, derives explicit convergence bounds, and proposes an optimization algorithm that provably mitigates it. \textit{To the best of our knowledge, this is the first rigorous theoretical framework for heterogeneous LoRA in federated learning}.

\section{Preliminaries}
\label{preliminaries}
\subsection{Low-Rank Adaptation (LoRA)}
LoRA is a technique that reduces the number of trainable parameters during the fine-tuning of large models \cite{hu2021lora}. Its core idea is to inject two low-rank matrices into the model while keeping the original weights \( \Theta_0 \) frozen. Specifically, if the pre-trained weight matrix \( \Theta_0 \) has dimensions \( d \times k \), LoRA represents the parameter update \( \Delta \Theta \) as the product of two low-rank matrices \( B \) and \( A \). Thus, the fine-tuned weight matrix is given by
\begin{align}
\Theta = \Theta_0 + \Delta \Theta = \Theta_0 + BA.
\end{align}

Here, \( \Delta \Theta \in \mathbb{R}^{d \times k}\), \( B \in \mathbb{R}^{d \times r}\), and \( A \in \mathbb{R}^{r \times k}\), where \( r \ll \min(d, k) \). During fine-tuning, only the low-rank matrices \( B \) and \( A \) are optimized, while \( \Theta_0 \) remains fixed. By doing so, LoRA significantly reduces the number of trainable parameters, making the fine-tuning of large models more feasible in resource-constrained settings. Moreover, the product \( BA \) is scaled by a factor of \( \alpha / r \) to further control the update magnitude.

\subsection{Federated Learning}
Federated learning is a distributed machine learning paradigm where a central server aggregates gradients or model updates sent by multiple clients over several rounds to obtain a global model. For an FL system, we assume that client $i$ owns a local dataset $\mathcal{D}_i$, which contains $m_i$ samples participating in the FL training process. The corresponding loss function of client $i$ is defined as $f_i(W) = \frac{1}{m_i} \sum_{\xi \in \mathcal{D}_i} \ell(W; \xi)$, where $W \in \mathbb{R}^d$ represents the machine learning model to be optimized, and $\ell(W; \xi)$ is the loss evaluated on data sample $\xi$ with model $W$. Assuming there are $N$ clients in total, the overall training objective can be expressed as the weighted sum of the individual objectives of all clients:
\begin{equation}
\min_{W \in \mathbb{R}^d} f(W) = \min_{W \in \mathbb{R}^d} \sum_{i=1}^{N} p_i f_i(W),
\end{equation}
where $p_i = \frac{m_i}{m}$ represents the proportion of client $i$'s local dataset size $m_i$ to the total dataset size $m$.  The ultimate goal is to find the optimal solution $W^*$ of the global objective function $f(W)$. FedAvg \cite{mcmahan2017communication} addresses this problem by iteratively training a global model. In each round, the server distributes the latest model to the clients, which then perform several steps of stochastic gradient descent (SGD) locally and subsequently send back their model updates to the server for aggregation. This process continues for \( T \) rounds, resulting in the final global model.

\section{Federated Learning with Heterogeneous LoRA}
\label{section:FedHL}
\subsection{Aggregation Noise in LoRA under Traditional FL}

In conventional FL, the server aggregates model updates from clients by averaging their weights, assuming uniform dimensionality across all LoRA modules. Consider two clients, each associated with low-rank matrices \( B^1, A^1 \) and \( B^2 , A^2 \) after local training. Ideally, the global model parameter update, denoted as \( W \) (where \(  W \) corresponds to the aggregated clients' LoRA bypass updates), is given by:
\begin{align}
W  = \frac{1}{2} (B^{1} A^{1} + B^{2} A^{2}).
\end{align}
However, traditional FL aggregates the matrices separately, resulting in
\( \hat{B} = \frac{1}{2} (B^{1} + B^{2}) \) and \( \hat{A} = \frac{1}{2} (A^{1} + A^{2}) \). The actual aggregated update, given by \( \hat{B} \hat{A} \), becomes:
\begin{align} \hat{W} &= \frac{1}{4}(B^{1} A^{1} + B^{1}A^{2} + B^{2}A^{1}+ B^{2}A^{2}).
\end{align}
Clearly, \( \hat{W} \neq W \), resulting in unintended cross terms, such as \( B^{2} A^{1} \) and \( B^{1} A^{2} \), which contribute to aggregation noise and make the global LoRA parameter update undesirable.

Recently, several studies \cite{wang2024flora, bai2024federated} have sought to address aggregation noise in FL by recomposing a complete entity 
$W$ for aggregation. While these methods effectively reduce homogeneous aggregation noise at a surface level, they fail to address the complexities of heterogeneous LoRA scenarios, particularly those involving truncation. In this paper, we reconsider this issue and propose a targeted solution for heterogeneous LoRA, aiming to achieve two key goals: \textbf{(1) eliminate LoRA aggregation noise in traditional FL, and (2) further mitigate the bias inherent in the truncation process of heterogeneous LoRA.} 


\subsection{Truncation-induced Biases in Heterogeneous LoRA Re-Ranking}
In heterogeneous LoRA environments, each client uses LoRA modules with personalized rank settings, allowing them to adjust the rank of their LoRA modules according to their individual requirements. Once the updates from all clients are aggregated, the server obtains the global weight \( W \in \mathbb{R}^{d \times k} \). However, to respect each client’s rank constraint \( r_i \), the global weight must then be truncated accordingly for local training.

To achieve this, Singular Value Decomposition (SVD) is commonly employed, as described in \cite{bai2024federated}. In SVD, the global weight \( W \) is factorized into three matrices: \( W = U \Sigma V^\top \), where \( U \) and \( V \) are orthogonal matrices, and \( \Sigma \) is a diagonal matrix containing the singular values \( \sigma_1, \sigma_2, \dots, \sigma_d \). For each client \( i \), we select the top \( r_i \) singular values from \( \Sigma \), forming a truncated version \( \Sigma^{r_i} \) along with their corresponding singular vectors \( U^{r_i} \) and \( (V^\top)^{r_i} \). The global weight is then reconstructed as the low-rank approximation \( W^{r_i} = U^{r_i} \Sigma^{r_i} ({V^\top})^{r_i} \), which is then used to set the LoRA modules of client \( i \). Specifically, the LoRA modules are defined as \( B^{i} = U^{r_i} \sqrt{\Sigma^{r_i}} \) and \( A^{i} = \sqrt{\Sigma^{r_i}} ({V^{\top}})^{r_i} \). 
While this customization allows for greater efficiency in FL, it also introduces several challenges. A key issue arises from the truncation process, which can create discrepancies between the truncated model \( W^{r_i} \) that client \( i \) receives and the global full-rank model \( W \), which is aggregated initially. This misalignment introduces two significant sources of deviation in the FL training process:
\begin{itemize}
    \item \noindent\textbf{Model Truncation Bias.} Truncation causes the initial parameters of each client's model to \textit{diverge from the globally aggregated model} for the next rounds of training. Through rigorous convergence analysis, we prove that these discrepancies consistently accumulate over training rounds, resulting in insufficient convergence guarantees.
    \item \noindent\textbf{Gradient Descent Drift.} By performing gradient descent from a \textit{truncated and misaligned} starting point, the client's local updates deviate from the trajectory that would have been produced by using the ideal full-rank model as the starting point, leading to the propagation of inconsistencies during local training and ultimately degrading the model's performance.
\end{itemize}

The above biases highlight the challenges inherent in heterogeneous LoRA environments. Our convergence analysis indicates that these discrepancies can impede convergence and degrade overall performance if not properly addressed. It is worth noting that while the truncation bias can be mitigated by refreshing the server model in subsequent rounds, the gradient drift—stemming from client resource constraints—cannot be fully eliminated and can only be partially mitigated.

\subsection{The proposed FedHL}

To address aggregation noise in traditional FL and the truncation-induced bias caused by rank redistribution in heterogeneous LoRA, we propose FedHL. Based on rigorous convergence analysis, the proposed framework ensures provable convergence under heterogeneous LoRA conditions. 

\subsubsection{\textbf{Treating the Low-Rank Matrix $W$ as a Unified Entity}}
In traditional FL, the server separately aggregating the low-rank matrices \( B \) and \( A \)  introduces undesired cross-terms, resulting in aggregation noise that degrades model performance. This issue is further exacerbated in heterogeneous environments where clients have varying rank settings, resulting in misaligned updates during aggregation. 

To address this issue, we aggregate low-rank matrices (e.g., \( W = BA \) ) as a single unified entity. Specifically, in the $t$-th round of FL training, each client $i$ trains the low-rank matrices \( B_{t}^i \) and \( A_{t}^i \) locally for \( K \) steps. The updated matrices, \( B_{t,K}^i \)  and \( A_{t,K}^i \), are then transmitted to the server,  which reconstructs the full matrix as \( W_{t+1}^i = B_{t,K}^i A_{t,K}^i \). The aggregation is performed as:
\begin{align}
\label{fl:agg}
W_{t+1} = \sum_{i=1}^N p_i W_{t+1}^i.
\end{align}
By aggregating \( W_{t+1}^i \) as a whole, this approach eliminates cross-term noise caused by the separate aggregation of \( B_{t,K}^i \) and \( A_{t,K}^i \). Furthermore, it accommodates heterogeneous LoRA configurations, allowing each client $i$ to use different rank settings \( r_i \), where \( B_{t,K}^i \in \mathbb{R}^{d \times r_i} \) and \( A_{t,K}^i \in \mathbb{R}^{r_i \times k} \).

\begin{algorithm}[t]
\caption{The FedHL Algorithm.}
\label{alg:algorithm_lora_final}
\begin{algorithmic}[1]
\renewcommand{\algorithmicrequire}{\textbf{Input:}}     
\renewcommand{\algorithmicensure}{\textbf{Output:}}  
\REQUIRE Initial Global Model \( W_0 \in \mathbb{R}^{d \times k} \), Learning Rate \( \eta_t \), Iterations \( T \), Local Steps \( K \), LoRA Ranks \( \{r_i\}_{i=1}^N \)
\ENSURE Final Global Model \( W_{T+1} \in \mathbb{R}^{d \times k} \) and Client-Specific LoRA Parameters \( \{(B_{T+1}^i, A_{T+1}^i)\}_{i=1}^N \)

\FOR{$t = 0$ \textbf{to} $T-1$}
    \STATE \textbf{Model Initialization:}
    \FOR{each client \(i \in [N]\)}
        \STATE Truncate \( W_t \) via SVD to obtain LoRA parameters: \( (B_t^i, A_t^i) \gets \text{SVD}(W_t, r_i) \);  \COMMENT{Perform SVD once for all clients}
        \STATE Record truncation error: \( \hat{r}_i(t) \gets \| W_t - W_t^{r_i} \|^2 \); \COMMENT{Where \( W_t^{r_i} = B_t^i A_t^i \)}
    \ENDFOR
    \STATE \textbf{Client Updates:}
    \FOR{each client \(i \in [N]\)}
        \STATE Receive LoRA parameters: \( B_{t,0}^i \gets B_t^i \), \( A_{t,0}^i \gets A_t^i \);
        \STATE Perform local updates for \( K \) steps: 
        \STATE \quad Obtain \( B_{t,K}^i \), \( A_{t,K}^i \) after \( K \) steps with learning rate \( \eta_t \), then upload them to the server;
    \ENDFOR
    \STATE \textbf{Server Aggregation:}
    \FOR{each client \(i \in [N]\)}
        \STATE Reconstruct local model: \( W_{t+1}^i \gets B_{t,K}^i A_{t,K}^i \);
    \ENDFOR
    \STATE  \fbox{$W_{t+1} \gets \sum_{i=1}^N \mathbf{p}_i(t) \left[ W_t + \left( W_{t+1}^i - W_t^{r_i} \right) \right]$};  \COMMENT{Where \( \mathbf{p}_i(t) \) is computed from Eqn. (\ref{eq:weight}) using \( \hat{r}_i(t) \)}
\ENDFOR
\end{algorithmic}
\end{algorithm}

\subsubsection{\textbf{Utilizing the Complete Global Model $W_t$ as the Aggregation Baseline}}

In each training round \( t \),  client \( i \) initializes its model using a truncated version of the previous global model \( W_t \), which is approximated as \( W_t^{r_i} \) in accordance with the heterogeneous LoRA setup. Specifically, client \( i \) constructs client-specific matrices \( B_t^i \in \mathbb{R}^{d \times r_i} \) and \( A_t^i \in \mathbb{R}^{r_i \times k} \) by performing SVD on \( W_t \), such that \( W_t^{r_i} = B_t^i A_t^i \) represents the rank-\( r_i \) approximation of the global weight matrix. This truncation introduces inherent inconsistencies in the model initialization across clients, as each client employs a distinct rank approximation of the global weight matrix, even prior to the commencement of training. To address this issue, a compensation mechanism is necessary to eliminate the inconsistencies resulting from truncation.

We now investigate how truncation influences the aggregation process. In the current FedLoRA method \cite{bai2024federated, wang2024flora}, after local training, the server aggregates the updates from all clients to obtain the global model for round \( t+1 \), as described in Eqn.~(\ref{fl:agg}) in an incremental update formulation as follows:
\begin{align}
W_{t+1}  &\equiv \sum_{i=1}^N p_i W_{t+1}^i \equiv \sum_{i=1}^N p_i \left( W_t^{r_i} + \Delta W_t^i \right),
\end{align}

\begin{figure*}[htbp]
    \centering
\includegraphics[width=0.90\textwidth, height=10.1cm]{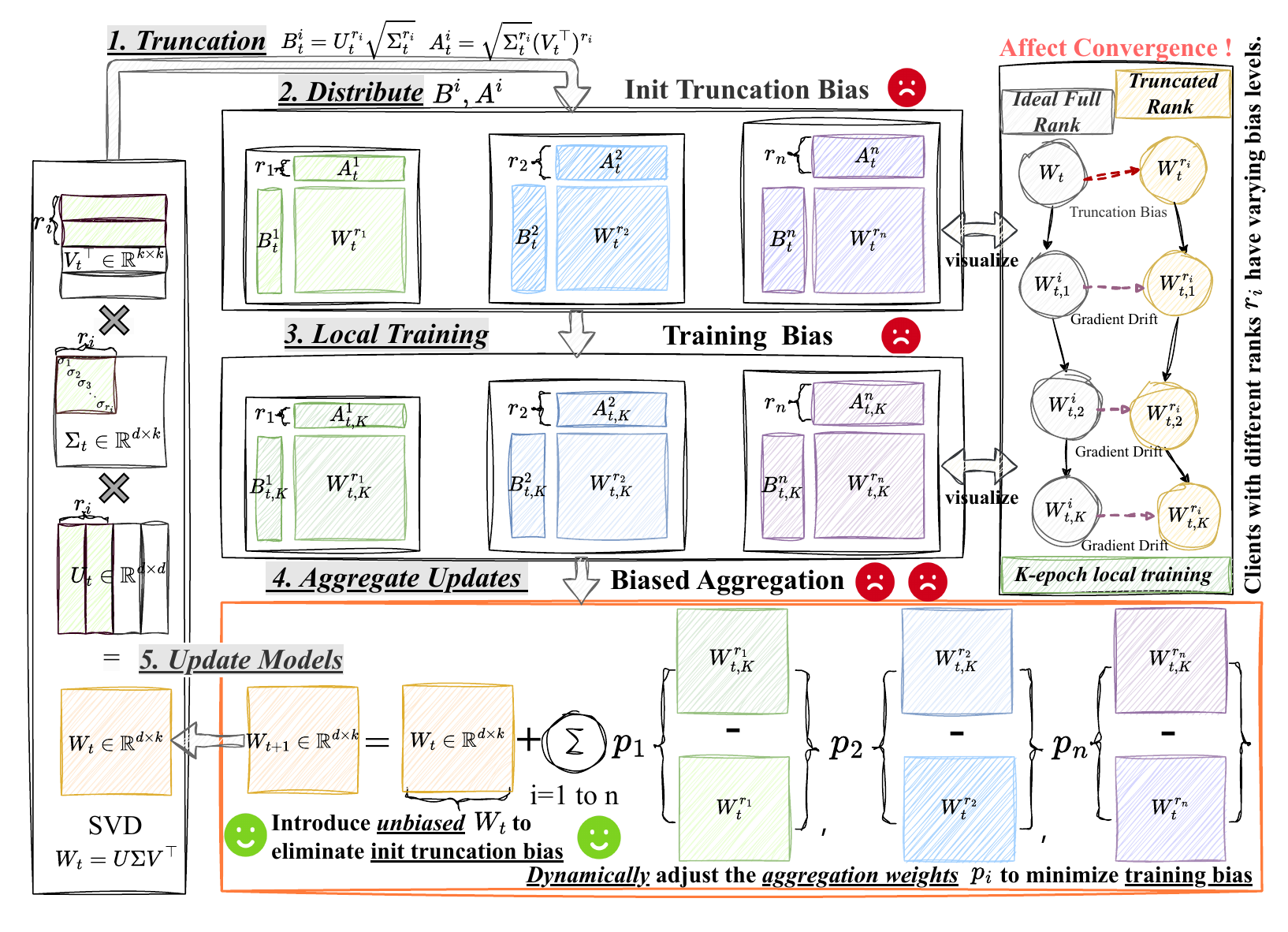}
    \caption{Overview of FedHL. In heterogeneous LoRA, compared to ideal full-rank updates, truncation and gradient biases emerge, as illustrated in the upper-right panel. During server-side aggregation, a closer unbiased aggregation baseline $W_t$ is introduced to correct init truncation biases, and weights $p_i$ are dynamically adjusted to minimize training bias.}
    \label{fig:fedlora}
\end{figure*}
\noindent where \( \Delta W_t^i = W_{t+1}^i - W_t^{r_i} \) represents the local update based on the truncated model \( W_t^{r_i} \) used by client \( i \). In this approach, \( W_t^{r_i} \) serves as the baseline for aggregating local updates. However, this aggregation introduces truncation errors since \(W_t^{r_i}\) is a rank \(r_i\) approximation of the global model \(W_t\), which leads to inconsistencies between models across clients. Our theoretical analysis in Section \ref{sec:convergence analysis} shows that the current aggregation scheme introduces inherent truncation biases that accumulate over multiple aggregation rounds, perturbing the training process and ultimately hindering convergence.

To address the limitations associated with model truncation, we propose utilizing the global full-rank model \( W_t \), stored on the server and aggregated in the preceding round, as the aggregation baseline. By using \( W_t \) as a consistent reference point, the FedHL aggregation rule is formulated as:
\begin{align}
\label{agg}
W_{t+1} = \sum_{i=1}^N p_i \left( W_t + \Delta W_t^i \right).
\end{align}
In this framework, \( W_t \) serves as a stable and unbiased reference point, eliminating the initialization bias introduced by the truncated model \( W_t^{r_i} \). By adopting a global optimization perspective, FedHL first identifies intrinsic aggregation flaws and subsequently addresses them by establishing a unified aggregation baseline. This strategy promotes more accurate learning trajectories, ultimately ensuring robust convergence.

\subsubsection{\textbf{Dynamically adjust the aggregation weight $p_i$ to alleviate gradient drift}}
Although the aggregation baseline \(W_t\) aims to eliminate initialization truncation bias, the local gradient descent process remains constrained by the rank limitations of individual clients. Specifically, clients are required to initialize gradient calculations using the truncated model $W_t^{r_i}$. This implies that gradients computed from \(W_t^{r_i}\) deviate from those derived using the full-rank model, and these deviations influence subsequent gradient computations in the next round. As a result, errors introduced during earlier gradient calculations propagate and amplify through multiple rounds of gradient descent. The global model aggregates gradients influenced by these biases, which may lead to suboptimal model updates. Fortunately, the aggregation weights can be adjusted to counteract the impact of biased gradients. For instance, assigning smaller aggregation weights to clients with larger gradient biases helps reduce their influence. The detailed closed-form solution for the aggregation weights and its proof is provided in Theorem~\ref{theorem:3}.

\subsubsection{\textbf{The Workflow of the FedHL Algorithm}}

The procedure of the FedHL algorithm is formally presented in Algorithm \ref{alg:algorithm_lora_final}. At the beginning of the FL training for round $t \in [ T ] \triangleq \{ 0,1 , \cdots , T-1 \}$, the server  obtains the global model \( W_{t} \in \mathbb{R}^{d \times k} \). For each heterogeneous LoRA client \( i \), the model is initialized using a rank-\( r_i \) approximation derived from the SVD of \( W_t \), resulting in the matrices \( B_t^i \in \mathbb{R}^{d \times r_i} \) and \( A_t^i \in \mathbb{R}^{r_i \times k} \). The server stores these low-rank matrices, and their product gives the rank-\( r_i \) approximation of the global weight matrix: \( W_t^{r_i} = B_t^i A_t^i \). The truncation error is calculated as:
\begin{align}
\hat{r}_i(t) = \| W_t - W_t^{r_i} \|^2 ,
\end{align}
which is then used to adjust the aggregation weights in subsequent server-side updates.

Next, the client performs \( K \) local updates, obtaining the updated models \( B_{t, K}^i \) and \( A_{t, K}^i \), which are then uploaded to the server. Upon receiving the updates, the server reconstructs the local models \( W_{t+1}^i\) with \( B_{t, K}^i A_{t, K}^i\) using the uploaded matrices and aggregates them according to the following rule:
\begin{align}
W_{t+1} = \sum_{i=1}^N \mathbf{p}_i(t) \left[ W_t + \left( W_{t+1}^i - W_t^{r_i} \right) \right].
\end{align}

On the server side, the unbiased global model \( W_t \) serves as the aggregation baseline, addressing the model truncation bias introduced by client-specific rank approximations. To mitigate the gradient descent drift caused by these truncations, the aggregation weight \( \mathbf{p}_i(t) \) is computed using Eqn. (\ref{eq:weight}) based on the truncation error \( \hat{r}_i(t) \).  The training procedure is repeated iteratively for \( T \) communication rounds. Notably, the low-rank matrix multiplication used in the algorithm to reconstruct the model is computationally efficient. Moreover, the server aggregation process can be performed on-the-fly, incurring no additional storage overhead. The workflow of the FedHL algorithm is illustrated in Figure \ref{fig:fedlora}, and Table \ref{tab:comparison} provides a comparative summary of FedHL and other existing methods.

\section{Convergence Analysis}
\label{section:convergence}
In this section, we examine the convergence behavior of heterogeneous LoRA methods in FL, focusing on how truncation affects convergence guarantees. In contrast, we establish the convergence of the proposed FedHL method, demonstrating that its unbiased aggregation approach eliminates truncation bias, while its adjusted aggregation weights mitigate gradient descent drift. The following analysis is based on three standard assumptions:
\label{sec:convergence analysis}
\begin{assumption}
\label{assumption:1}
The loss function \( f \) is \( L \)-smooth with respect to \( W \). That is, for any \( W, \hat{W} \in \mathbb{R}^d \) and \( \xi \in \mathcal{D} \), the following holds:
\begin{align}
\left\|\nabla f(W;\xi) - \nabla f(\hat{W};\xi)\right\| \leq L\left\|W - \hat{W}\right\|.
\end{align}
\end{assumption}

\begin{assumption}\label{assumption:2} The local adapter gradient estimator \( \tilde{\nabla} f_i(W) \), computed using SGD with sample $\xi$, is unbiased and has a bounded variance \( \sigma_l^2 \) for all \( i \in [N] \). Specifically:
\begin{align}
    \mathbb{E}[\tilde{\nabla} f_i(W)] &= \nabla f_i(W),     \mathbb{E}\left\|\tilde{\nabla}  f_i(W)-{\nabla}f_i(W)\right\|^2\leq \sigma_l^2.
\end{align}
\end{assumption}

\begin{assumption}\label{assumption:3}  
The variance between local gradients $ \nabla f_i(W) $ and the global gradient $ \nabla f(W) $ is bounded by a constant. Specifically, for any weight matrix \( W \) and all clients \( i \in [N] \), we have:
\begin{align}
    \mathbb{E}\left[\left\|\nabla f(W) - \nabla f_i(W)\right\|^2\right] \leq \sigma_g^2,
\end{align}
where $ \sigma_g^2 > 0 $ quantifies data heterogeneity.  
\end{assumption}
Assumption \ref{assumption:1} indicates that the gradients of both the local loss function \( f_i \) and the global loss function \( f \) are Lipschitz continuous. The local adapter gradient is assumed to be unbiased and to have bounded variance, as stated in Assumption \ref{assumption:2}. Assumption \ref{assumption:3} states that the variance between the local and global gradients is bounded. In particular, the definition of \( \hat{r}_i \), which depends on \( t \), is simplified in the notation for the convergence analysis. Similar assumptions are widely adopted in LoRA training in FL \cite{wang2024flora,10763424,yi2023pfedlora,wang2025federated}. Note that throughout the paper, the expectation operator \( \mathbb{E} \) is taken with respect to all sources of randomness unless otherwise stated. We now present the main theorems for the convergence of existing heterogeneous LoRA methods and FedHL.

\begin{thm} Under Assumptions \ref{assumption:1}--\ref{assumption:3}, with the learning rate configured as \( \eta_t = \frac{1}{L K \sqrt{T}} \), the convergence behavior of the \textbf{existing heterogeneous LoRA} methodology in FL is formally characterized as follows:
\label{theorem:2}
\begin{equation}
\resizebox{\columnwidth}{!}{$
\begin{aligned}
\frac{1}{T} &\sum_{t=0}^{T-1} \mathbb{E}\|\nabla f(W_t)\|^2 \leq \frac{1}{C} \Bigg[ \frac{L(f(W_0)-f^*)}{\sqrt{T}} + \frac{L(L+1)}{2} \\
& \Bigg( \underbrace{2N\sqrt{T} \sum_{i \in [N]} (p_i)^2 (\hat{r}_i)^2}_{\text{Model Truncation Bias}}  + \underbrace{\frac{6ND_0^K}{\sqrt{T}} \sum_{i \in [N]} (p_i)^2 (\hat{r}_i)^2}_{\text{Accumulated Gradient Descent Drift}} \\
& \quad + \frac{12N \sigma_l^2}{L^2 \sqrt{T}} \sum_{i \in [N]} p_i^2 + \frac{48N \sigma_l^2 (D_0^K-1)}{L^2 K^2 T^\frac{3}{2} (D_0 - 1)} \sum_{i \in [N]} p_i^2 \Bigg) \\
& \quad + \frac{3N(\sigma_l^2 + 6K \sigma_g^2)}{2KT} \sum_{i \in [N]} p_i^2 + \frac{N \sigma_l^2}{2\sqrt{T}} \sum_{i \in [N]} p_i^2 \Bigg],
\end{aligned}
$}
\label{eq:example}
\end{equation}
where $C>0$ and \(D_0 \triangleq 4(1 + L^2 \eta_t^2) \geq 4\) are non-negative constants. The detailed proof can be found in Appendix \ref{Appendix:th2}. 
\end{thm}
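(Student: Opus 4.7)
The plan is to start from the descent inequality implied by $L$-smoothness applied to the global loss and then carefully decompose the aggregated update so that the truncation bias is visible as a stand-alone term. Specifically, for the existing heterogeneous LoRA scheme the server performs $W_{t+1} = \sum_i p_i (W_t^{r_i} + \Delta W_t^i)$, which I would rewrite as
\begin{equation*}
W_{t+1} - W_t \;=\; \underbrace{\sum_{i=1}^N p_i\bigl(W_t^{r_i} - W_t\bigr)}_{\text{truncation displacement}} \;+\; \sum_{i=1}^N p_i\,\Delta W_t^i .
\end{equation*}
Plugging this into $f(W_{t+1}) \le f(W_t) + \langle \nabla f(W_t), W_{t+1}-W_t\rangle + \tfrac{L}{2}\|W_{t+1}-W_t\|^2$ and using Cauchy--Schwarz/Young's inequality on the cross terms yields a round-$t$ descent lemma containing three kinds of contributions: a first-order term in $\nabla f(W_t)$, the truncation displacement of squared norm $\|\sum_i p_i(W_t^{r_i}-W_t)\|^2 \le N\sum_i p_i^2 \hat r_i$ (by Jensen/Cauchy--Schwarz and the definition of $\hat r_i$), and the squared aggregated local drift.

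Next I would expand $\Delta W_t^i$ as the sum over $k=0,\dots,K-1$ of the local SGD steps starting from the biased initialization $W_t^{r_i}$. Defining the auxiliary iterates $W_{t,k}^i = B_{t,k}^i A_{t,k}^i$, I would derive a one-step recursion of the form
\begin{equation*}
\mathbb{E}\|W_{t,k+1}^i - W_t\|^2 \;\le\; (1+L^2\eta_t^2)\,\mathbb{E}\|W_{t,k}^i - W_t\|^2 + c_1\eta_t^2(\sigma_l^2 + K\sigma_g^2) + c_2\,\hat r_i,
\end{equation*}
where Assumption~\ref{assumption:1} supplies the contractive factor, Assumption~\ref{assumption:2} controls the stochastic noise, and Assumption~\ref{assumption:3} handles the heterogeneity gap between $\nabla f_i$ and $\nabla f$. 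Unrolling this recursion over $K$ local steps produces precisely the factor $D_0^K$ with $D_0 \triangleq 4(1+L^2\eta_t^2)$ advertised in the theorem, together with a geometric partial-sum $(D_0^K-1)/(D_0-1)$ multiplying the variance contribution. Importantly, because $W_{t,0}^i = W_t^{r_i}\neq W_t$, the recursion is seeded with $\hat r_i$, and this seed gets amplified by $D_0^K$ after $K$ steps, which is the origin of the ``Accumulated Gradient Descent Drift'' term.

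Then I would bound the first-order term $\langle \nabla f(W_t), \sum_i p_i \Delta W_t^i\rangle$ by invoking unbiasedness of the local SGD estimator (Assumption~\ref{assumption:2}) at the truncated starting point, which yields a negative multiple of $\eta_t K \|\nabla f(W_t)\|^2$ (the effective descent) plus an additive drift controlled by the recursion above. Choosing the Young's inequality constants carefully leaves a strictly positive coefficient $C>0$ in front of $\|\nabla f(W_t)\|^2$, which fixes the admissible range of $\eta_t$ and in particular validates $\eta_t = 1/(LK\sqrt{T})$. Telescoping the resulting per-round inequality from $t=0$ to $T-1$, dividing by $T$, and using $f(W_T)\ge f^\star$ to eliminate the final potential produces a $1/T$ bound on $\tfrac{1}{T}\sum_t \mathbb{E}\|\nabla f(W_t)\|^2$; substituting the chosen learning rate converts $1/T$ into $1/\sqrt{T}$ and $\eta_t^2$ into $1/(L^2 K^2 T)$, which matches the variance scalings in the claimed bound, while the truncation-driven terms retain a $\sqrt{T}$ factor because $\|\sum_i p_i(W_t^{r_i}-W_t)\|^2$ does not shrink with $\eta_t$.

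The hardest step, I expect, is the $K$-step drift analysis: the iterates are seeded at $W_t^{r_i}\neq W_t$, so the usual trick of writing $W_{t,k}^i - W_{t,0}^i$ and bounding by a telescoping sum of gradients is insufficient, since it would miss the bias $\hat r_i$ that interacts multiplicatively with the $(1+L^2\eta_t^2)$ contraction. The delicate part is to track this seed through the recursion in such a way that the resulting $D_0^K \hat r_i$ contribution is aggregated correctly against $p_i^2$ rather than $p_i$, and that the cross terms between truncation displacement and local drift do not spoil the $\eta_t$ bookkeeping. Once this recursion is in hand and the constants absorbed via a single AM-GM application, the remaining calculation is a routine collection of terms that yields the expression displayed in the theorem, with the Model Truncation Bias and Accumulated Gradient Descent Drift terms emerging from the two distinct ways the truncation enters: once additively in the aggregation baseline and once multiplicatively through the local trajectory.
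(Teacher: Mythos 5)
Your high-level picture (truncation displacement entering additively, plus a $K$-step drift seeded at $W_t^{r_i}$ and amplified geometrically into a $D_0^K$ factor) matches the mechanism in the paper, but your route — a single $L$-smoothness expansion of $f(W_{t+1})$ around $W_t$ with the decomposition $W_{t+1}-W_t=\sum_i p_i(W_t^{r_i}-W_t)+\sum_i p_i\Delta W_t^i$ — runs into a concrete obstruction that your sketch does not resolve: the first-order cross term $\langle \nabla f(W_t),\sum_i p_i(W_t^{r_i}-W_t)\rangle$. Since the truncation displacement does not carry a factor of $\eta_t$, you can only absorb this term in two ways. If you use Young's inequality with a constant parameter, you create a positive $\Theta(1)\cdot\|\nabla f(W_t)\|^2$ term that swamps the negative descent term $-C\eta_t K\|\nabla f(W_t)\|^2$ (which is only of order $1/(L\sqrt{T})$), destroying the coefficient $C>0$. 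If instead you take the Young parameter of order $\eta_t K$ to protect the descent term, the truncation displacement gets multiplied by $1/(\eta_t K)=L\sqrt{T}$ per round, and after telescoping and dividing by $C\eta_t KT$ the model-truncation-bias term scales like $T$, not like the $2N\sqrt{T}\sum_i p_i^2\hat r_i^2$ claimed in the theorem. So the bookkeeping you describe does not reproduce the stated bound.

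The paper avoids this by a different decomposition: it introduces an auxiliary virtual iterate $V_{t+1}$ (full-rank FedAvg steps started from the untruncated $Q_t$) and splits the round into Lemma~\ref{lemma3}, which compares $f(Q_{t+1})$ with $f(V_{t+1})$, and Lemma~\ref{lemma4}, which is the standard FedAvg descent from $Q_t$ to $V_{t+1}$. In Lemma~\ref{lemma3} the inner product involves $\nabla f(V_{t+1})$ — not $\nabla f(W_t)$ — and is bounded by $\tfrac12\|\nabla f(V_{t+1})\|^2+\tfrac12\|Q_{t+1}-V_{t+1}\|^2$ with the gradient term then discarded on the grounds that it vanishes along the virtual FedAvg trajectory; this is what lets the truncation displacement enter with an $\eta_t$-free constant coefficient and hence a $\sqrt{T}$ (rather than $T$) final scaling. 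Your proposal contains no analogue of this device. Two smaller issues: you merge into one recursion what the paper keeps as two separate ones — $\Gamma_\tau=\mathbb{E}\|W_{t,\tau}^i-W_{t,\tau}^{r_i}\|^2$ (seeded at the truncation error, contraction factor $D_0=4(1+L^2\eta_t^2)$, using only Assumptions~\ref{assumption:1}--\ref{assumption:2}) and $\Delta_\tau=\mathbb{E}\|W_{t,\tau}^i-W_t\|^2$ (the usual local-drift bound using Assumption~\ref{assumption:3}, not seeded by $\hat r_i$) — and your stated one-step factor $(1+L^2\eta_t^2)$ would not by itself yield $D_0^K$ with $D_0=4(1+L^2\eta_t^2)$; the factor $4$ comes from the Cauchy--Schwarz splitting of the four terms in the paper's $\Gamma_1$ bound. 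Also note the paper's convention that $\hat r_i$ is already a squared norm, and the theorem's bound is in terms of $(\hat r_i)^2$, whereas your displacement bound is written in terms of $\hat r_i$.
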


\begin{remark} \normalfont It can be observed that the convergence bound in state-of-the-art heterogeneous LoRA algorithms for FL can largely be attributed to truncation effects, such as model truncation bias and accumulated gradient descent drift.
\end{remark}

\begin{remark} \normalfont More importantly, as \(T\) increases, the \textbf{\textit{Model Truncation Bias}} term, \( 2N\sqrt{T} \sum_{i \in [N]} (p_i)^2 (\hat{r}_i)^2 \), persists without a guaranteed reduction, leading to suboptimal convergence and hindered overall performance.
\end{remark}

\begin{table*}[ht]
    \centering
    \caption{Comparison of Heterogeneous LoRA Techniques.}
    \label{tab:comparison}
    \small
    \setlength{\tabcolsep}{4pt}
    \renewcommand{\arraystretch}{1.2}
    \begin{tabular}{
        @{} 
        m{2.5cm} 
        >{\centering\arraybackslash}m{1.85cm} 
        >{\centering\arraybackslash}m{1.85cm} 
        >{\centering\arraybackslash}m{1.85cm} 
        >{\centering\arraybackslash}m{1.85cm} 
        >{\centering\arraybackslash}m{6cm} 
        @{}
    }
        \toprule
        \textbf{Method} & \textbf{Aggregation Noise} & \textbf{Truncation Bias} & \textbf{Gradient Descent Drift} & \textbf{Strict Convergence Guarantee} & \textbf{Aggregation Method} \\
        \midrule
        \textbf{Zero-Padding}\cite{cho2023heterogeneous} & $\times$ & $\times$ & $\times$ & 
        $\times$ & $ W_{t+1} = \sum_{i=1}^N p_i[  W_t^{r_i} + (\hat{W}_{t+1}^i - W_t^{r_i})]$ \\
        
        \textbf{FlexLoRA} \cite{bai2024federated} & $\checkmark$ & $\times$ & $\times$ & 
        $\times$ & $ W_{t+1} = \sum_{i=1}^N p_i[W_t^{r_i} + (W_{t+1}^i - W_t^{r_i})] $ \\ 
        
        \textbf{FLoRA} \cite{wang2024flora} & $\checkmark$  & $\times$ & $\times$ & 
        $\times$ & $ W_{t+1} = \sum_{i=1}^N p_i[W_t^{r_i} + (W_{t+1}^i - W_t^{r_i})] $ \\
        
        \textbf{FedHL (Ours)} & $\checkmark$ & $\checkmark$ & $\checkmark$ & 
        $\checkmark$ & $ W_{t+1} = \sum_{i=1}^N \mathbf{p_i}(t)[W_t + (W_{t+1}^i - W_t^{r_i})] $ \\
        \bottomrule
    \end{tabular}
\end{table*}

\begin{thm}
Under Assumptions \ref{assumption:1}--\ref{assumption:3}, by choosing the learning rate \( \eta_t = \frac{1}{L K \sqrt{T}} \) as before, we aggregate heterogeneous LoRA using \textbf{FedHL}. The convergence behavior in FL can be characterized as follows: 
\label{theorem:1}
\begin{equation}
\resizebox{\columnwidth}{!}{$
\begin{aligned}
\frac{1}{T} &\sum_{t=0}^{T-1} \mathbb{E}\|\nabla f(W_t)\|^2 \leq \frac{1}{C} \Bigg[ \frac{L[f(W_0)-f^*]}{\sqrt{T}} + \frac{L(L+1)}{2} \\ 
& \Bigg( \underbrace{\frac{3ND_0^K}{\sqrt{T}} \sum_{i \in [N]} (p_i)^2(\hat{r}_i)^2}_{\text{Accumulated Gradient Descent Drift}} + \frac{24N \sigma_l^2 (D_0^K - 1)}{L^2 K^2 T^\frac{3}{2} (D_0 - 1)} \sum_{i \in [N]} p_i^2 \\
&\quad + \frac{6N \sigma_l^2}{L^2 \sqrt{T}} \sum_{i \in [N]} p_i^2 \Bigg) + \frac{3N(\sigma_l^2 + 6K\sigma_g^2)}{2KT} \sum_{i \in [N]} p_i^2 \\
&\quad + \frac{N \sigma_l^2}{2\sqrt{T}} \sum_{i \in [N]} p_i^2 \Bigg],
\end{aligned}
$}
\label{eq:th1}
\end{equation}
\noindent where \( C \) and \( D_0 \) are constants defined in Theorem \ref{theorem:2}, and the detailed proof can be found in Appendix \ref{app:th2}.
\end{thm}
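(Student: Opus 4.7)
The plan is to mirror the proof template of Theorem~\ref{theorem:2} while exploiting the single structural change introduced by FedHL: the aggregation baseline is the full-rank $W_t$ rather than the client-specific truncation $W_t^{r_i}$. Under the FedHL rule (\ref{agg}) with $\sum_i p_i(t)=1$, the global update collapses to $W_{t+1}-W_t = \sum_{i=1}^N p_i(t)\,(W_{t+1}^i - W_t^{r_i})$, so the linear-in-$\hat{r}_i$ contribution $-\sum_i p_i\,(W_t - W_t^{r_i})$ that is present in existing methods simply does not appear. This is precisely why the ``Model Truncation Bias'' term $2N\sqrt{T}\sum_i p_i^2 \hat{r}_i^2$ of Theorem~\ref{theorem:2} is absent from~(\ref{eq:th1}).

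First I apply the $L$-smoothness descent inequality
\[
\mathbb{E}[f(W_{t+1})] \le f(W_t) + \mathbb{E}\langle \nabla f(W_t),\, W_{t+1}-W_t\rangle + \tfrac{L}{2}\,\mathbb{E}\|W_{t+1}-W_t\|^2,
\]
substitute the FedHL update, and expand $W_{t+1}^i - W_t^{r_i} = -\eta_t\sum_{k=0}^{K-1}\tilde{\nabla} f_i(W_{t,k}^i)$ with $W_{t,0}^i = W_t^{r_i}$. Assumption~\ref{assumption:2} eliminates the noise from the inner product, leaving a residual proportional to $\sum_k \nabla f_i(W_{t,k}^i) - \nabla f_i(W_t)$, which by $L$-smoothness reduces to controlling $\mathbb{E}\|W_{t,k}^i - W_t\|^2$; the gap between local and global gradients is absorbed by Assumption~\ref{assumption:3}. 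The squared-norm term $\mathbb{E}\|W_{t+1}-W_t\|^2$ is handled by Jensen's inequality over the $N$ clients, producing the uniform $N\sum_i p_i^2$ factor decorating every stochastic/heterogeneity term in~(\ref{eq:th1}).

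The crux is a per-client local-drift recursion for $\mathbb{E}\|W_{t,k}^i - W_t\|^2$. The initial condition at $k=0$ is exactly $\|W_t - W_t^{r_i}\|^2 = \hat{r}_i^2$, and each SGD step multiplies the drift by $D_0 = 4(1+L^2\eta_t^2)$ while adding bounded stochastic and heterogeneity contributions. Unrolling for $K$ steps yields a bound of the shape
\[
\mathbb{E}\|W_{t,k}^i - W_t\|^2 \;\lesssim\; D_0^K\,\hat{r}_i^2 + \frac{D_0^K-1}{D_0-1}\bigl(c_1\sigma_l^2 + c_2 K\sigma_g^2\bigr),
\]
which is exactly where the $D_0^K$ and $(D_0^K-1)/(D_0-1)$ coefficients in~(\ref{eq:th1}) originate. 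Finally I telescope the descent inequality over $t=0,\ldots,T-1$, divide by $T$, invoke $f(W_T)\ge f^*$, and plug in $\eta_t = 1/(LK\sqrt{T})$ so that each resulting term matches the stated scaling.

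The main obstacle is the drift recursion: one must track the stochastic, heterogeneity, and truncation contributions separately so that the pure $\hat{r}_i^2$ term ends up carrying the $1/\sqrt{T}$ factor (the ``Accumulated Gradient Descent Drift'' term), rather than the $\sqrt{T}$ factor that disfigures Theorem~\ref{theorem:2}. The key reason this works for FedHL is structural: with $W_t$ replacing $W_t^{r_i}$ as the aggregation baseline, $\hat{r}_i$ enters only \emph{quadratically} through the drift recursion and never \emph{linearly} through the descent inner product. The remaining manipulations—rearranging the descent inequality into a lower bound on $\mathbb{E}\|\nabla f(W_t)\|^2$ and identifying the positive constant $C>0$—are routine and follow the same bookkeeping used in Theorem~\ref{theorem:2}, giving the advertised $\mathcal{O}(1/\sqrt{T})$ rate.
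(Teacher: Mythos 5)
Your proposal starts from the right observation (with $W_t$ as the aggregation baseline, the zeroth-order discrepancy $\sum_i p_i\,(W_t^{r_i}-W_t)$ that produces the $\sqrt{T}$-growing term of Theorem~\ref{theorem:2} disappears), but the route you then take --- a direct smoothness expansion of $f(W_{t+1})$ around $W_t$ with a single drift recursion for $\mathbb{E}\|W_{t,k}^i-W_t\|^2$ initialized at $\hat r_i^2$ --- cannot deliver the bound~(\ref{eq:th1}). The problem is your claim that $\hat r_i$ ``enters only quadratically through the drift recursion and never linearly through the descent inner product.'' In your own setup the local gradients are evaluated along the \emph{truncated} trajectory, $W_{t,0}^i=W_t^{r_i}$, so the first-order term $\mathbb{E}\langle\nabla f(W_t),\,-\eta_t\sum_i p_i\sum_k \nabla f_i(W_{t,k}^i)\rangle$ leaves a residual controlled by $\eta_t L^2\sum_k\mathbb{E}\|W_{t,k}^i-W_t\|^2$, and this drift is at least $\hat r_i^2$ already at $k=0$, independent of $\eta_t$. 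Hence the truncation error enters the per-round bound with a single factor of $\eta_t$ --- the same order as the descent gain $C\eta_t K\,\mathbb{E}\|\nabla f(W_t)\|^2$ --- and after telescoping and dividing by $C\eta_t K T$ it produces a \emph{non-vanishing} term of order $\sum_i p_i\hat r_i^2$ (amplified by $D_0^K$ if you use your recursion), not the advertised $\tfrac{3ND_0^K}{\sqrt T}\sum_i p_i^2\hat r_i^2$. No choice of $\eta_t$ fixes this within your decomposition; the $1/\sqrt T$ decay of the drift term is exactly what has to be earned, and your sketch does not earn it. A secondary issue: folding $\sigma_g^2$ into the same $D_0$-recursion would attach a $(D_0^K-1)/(D_0-1)$ factor (exponential in $K$) to the heterogeneity term, whereas (\ref{eq:th1}) carries only the mild $3(\sigma_l^2+6K\sigma_g^2)/(2KT)$ contribution.

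The paper's proof avoids this precisely through a two-stage comparison that your plan omits. It introduces the auxiliary full-rank virtual iterate $V_{t+1}=\sum_i p_i\bigl(W_t-\eta_t\sum_\tau\tilde\nabla f_i(W_{t,\tau}^i)\bigr)$, whose local trajectory starts exactly at $W_t$, so the descent inequality relating $f(V_{t+1})$ to $f(W_t)$ (Lemma~\ref{lemma2}) involves only the standard client drift $\Delta_\tau$ with $\Delta_0=0$ and a benign $(1+\tfrac{1}{K-1})$ recursion --- no $\hat r_i$ appears there at all. All truncation dependence is confined to Lemma~\ref{lemma1}, which compares $f(W_{t+1})$ with $f(V_{t+1})$ through $\mathbb{E}\|W_{t+1}-V_{t+1}\|^2$: this quantity is a difference of two $\eta_t$-scaled gradient sums, so it carries an extra $\eta_t^2$ factor, and the trajectory-divergence recursion $\Gamma_\tau=\mathbb{E}\|W_{t,\tau}^i-W_{t,\tau}^{r_i}\|^2$ (initialized at $\hat r_i^2$, amplified by $D_0$) then yields the per-round term $\propto\eta_t^2K^2D_0^K\hat r_i^2$, which is what becomes $\tfrac{3ND_0^K}{\sqrt T}\sum_i p_i^2\hat r_i^2$ after substituting $\eta_t=\tfrac{1}{LK\sqrt T}$. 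In short, the missing idea is the virtual-sequence decomposition that keeps $\hat r_i$ out of the first-order descent term and buys the extra power of $\eta_t$; without it, your argument proves at best a bound with a constant truncation floor, which is not Theorem~\ref{theorem:1}.
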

\begin{remark}
\normalfont In the FedHL algorithm, the truncation bias term vanishes in comparison to existing state-of-the-art approaches. As $T$ increases, the convergence rate of the heterogeneous LoRA model is \(\mathcal{O}(1/\sqrt{T})\), which is consistent with the convergence rate of the homogeneous LoRA \cite{wu2024fedfmsl,yi2023fedlora}.
\end{remark} 

\begin{remark} \normalfont With Theorem \ref{theorem:1},  we observe that the \textit{\textbf{Accumulated Gradient Descent Drift}} terms, which contribute to the slowest convergence rate, affect the overall convergence behavior. This drift depends on the assigned weights \(p_i\) and the truncation error \(\hat{r}_i\). To achieve the tightest convergence bound, we minimize this drift by optimizing the weights \(\{p_i\}_{i=1}^N\).
\end{remark}
\begin{thm}
\label{theorem:3}
The optimal server aggregation weights that minimize the convergence rate upper bound in Eqn. (\ref{eq:th1}), accounting for the truncation errors \(\{\hat{r}_i\}_{i=1}^N\), with \(\epsilon > 0\) as a constant, are as follows:
\begin{align}
\label{eq:weight}
    p_i^* = \frac{\frac{1}{\hat{r}_i^2 + \epsilon}}{\sum_{j \in [N]} \frac{1}{\hat{r}_j^2 + \epsilon}}.
\end{align}
Note that \(\hat{r}_i\) corresponds to \(\hat{r}_i(t)\) in each communication round \(t\). The complete proof can be found in Appendix~\ref{Appendix:th3}.
\end{thm}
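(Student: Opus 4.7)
The plan is to set up a constrained optimization problem over the simplex $\{p_i \geq 0, \sum_i p_i = 1\}$ by isolating the $p_i$-dependent portion of the upper bound in Eqn.~(\ref{eq:th1}) and then applying standard Lagrangian analysis. Inspecting the RHS of Eqn.~(\ref{eq:th1}), the only terms that depend on the aggregation weights are the ones of the form $\sum_{i \in [N]} (p_i)^2 (\hat{r}_i)^2$ (the Accumulated Gradient Descent Drift term) and several additive terms of the form $c_k \sum_{i \in [N]} p_i^2$, each arising from stochastic gradient noise, local-update drift, and data heterogeneity. Since all of these share the quadratic structure $\sum_i p_i^2$ but differ only in whether they carry the factor $(\hat{r}_i)^2$ or a uniform constant, I would collect them into a single compact objective.

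Concretely, the first step is to factor out the common coefficient of the drift term, so that the $p_i$-dependent contribution to the bound can be written as a positive multiple of $\sum_{i \in [N]} p_i^2 \bigl( \hat{r}_i^2 + \epsilon \bigr)$, where $\epsilon > 0$ is defined by dividing the aggregate coefficient of the uniform $\sum_i p_i^2$ terms by the coefficient of the drift term. This reduction legitimately absorbs all of the uniform noise/heterogeneity contributions into a single regularization constant $\epsilon$, and preserves the minimizer since affine transformations do not alter argmin.

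Next, I would minimize $\Phi(p) \triangleq \sum_{i \in [N]} p_i^2 (\hat{r}_i^2 + \epsilon)$ subject to $\sum_i p_i = 1$ via the Lagrangian
\begin{equation}
\mathcal{L}(p,\lambda) = \sum_{i \in [N]} p_i^2 (\hat{r}_i^2 + \epsilon) - \lambda \Bigl( \sum_{i \in [N]} p_i - 1 \Bigr).
\end{equation}
Setting $\partial \mathcal{L}/\partial p_i = 0$ yields $p_i = \lambda / \bigl[ 2(\hat{r}_i^2 + \epsilon) \bigr]$; substituting into the equality constraint determines $\lambda$ and gives
\begin{equation}
p_i^* = \frac{1/(\hat{r}_i^2 + \epsilon)}{\sum_{j \in [N]} 1/(\hat{r}_j^2 + \epsilon)},
\end{equation}
matching Eqn.~(\ref{eq:weight}). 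Since $\hat{r}_i^2 + \epsilon > 0$, the resulting $p_i^*$ are automatically nonnegative, so the nonnegativity constraints are inactive and no KKT complications arise. Strict convexity of $\Phi$ on the affine subspace $\sum_i p_i = 1$ (the Hessian $\mathrm{diag}(2(\hat{r}_i^2+\epsilon))$ is positive definite) certifies this stationary point as the unique global minimizer.

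The main obstacle I anticipate is the bookkeeping required to show rigorously that lumping the various uniform $\sum_i p_i^2$ terms into a single additive constant $\epsilon$ is legitimate, since the coefficients on those terms involve $T$, $K$, $L$, $\sigma_l^2$, $\sigma_g^2$, and $D_0$, and one must verify that the ratio defining $\epsilon$ is indeed a positive constant (independent of $i$) so that the per-coordinate regularization is homogeneous across clients. Once that reduction is justified, the remainder of the argument is a textbook Lagrangian computation, and the closed-form weights follow immediately; the final step is to note that $\hat{r}_i$ should be read as $\hat{r}_i(t)$ within each round $t$, so the optimal weights are recomputed per-round using the truncation errors recorded in Algorithm~\ref{alg:algorithm_lora_final}.
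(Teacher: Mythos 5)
Your proposal is correct and follows essentially the same route as the paper: isolate the $p_i$-dependent quadratic terms of the bound, rewrite them as a positive multiple of $\sum_{i\in[N]} p_i^2(\hat{r}_i^2+\epsilon)$ with $\epsilon$ the ratio of the uniform-noise coefficient to the drift coefficient, and solve the equality-constrained problem by Lagrange multipliers. The only cosmetic difference is that the paper restricts attention to the dominant $\mathcal{O}(1/\sqrt{T})$-scaled terms when forming $\epsilon$, whereas you would lump in all $\sum_i p_i^2$ terms (and additionally note strict convexity for uniqueness), which changes the value of $\epsilon$ but not the closed-form weights.
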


\begin{remark} \normalfont
Theorem \ref{theorem:3} provides a closed-form solution for the optimal aggregation weights in heterogeneous LoRA, derived by minimizing the upper bound of convergence in FedHL as presented in Theorem \ref{theorem:1}. Based on Theorem \ref{theorem:3}, we propose to dynamically adjust the aggregation weights of each round to alleviate gradient bias.
\end{remark}

\begin{remark} \normalfont
It can be observed that for any clients \(i\) and \(j\), if the truncation error satisfies \(\hat{r}_i \leq \hat{r}_j\), then the optimal aggregation weight satisfies \(p_i^* \geq p_j^*\). This assignment is consistent with our intuition that clients with smaller truncation errors should have higher aggregate weights.
\end{remark}

\begin{table}[htbp!]
\centering
\caption{Dataset Details for Training and Evaluation.}
\renewcommand{\arraystretch}{1.3}
\setlength{\tabcolsep}{4pt}
\begin{tabular}{@{}>{\centering\arraybackslash}p{2.6cm}
                >{\centering\arraybackslash}p{2.5cm}
                >{\centering\arraybackslash}p{0.8cm}
                >{\centering\arraybackslash}p{2.0cm}@{}}
\toprule
\textbf{Task} & \textbf{Training Dataset} & \textbf{Samples} & \textbf{Evaluation} \\
\midrule
Math Problem Solving & Fed-GSM8K~\cite{cobbe2021training} & 7473 & GSM8K-test~\cite{cobbe2021training} \\
Code Generation & Fed-CodeAlpaca~\cite{chaudharycode} & 20000 & HumanEval~\cite{chen2021evaluating} \\
Generic Language QA & Fed-Dolly~\cite{databricks2023dolly} & 15015 & Rouge-L~\cite{qin2024federated} \\
\bottomrule
\end{tabular}
\label{tab:dataset_details}
\end{table}

\section{Experiments}
\label{section:experimental results}
\subsection{Experiments setup}
\noindent \textbf{Model and Baselines.}
In this study, we leverage the pre-trained Llama-2 7B model \cite{touvron2023llama} for downstream tasks \footnote{%
While the newest models like the recent LLAMA and Qwen series exhibit outstanding performance owing to pre-training on extensive datasets, fine-tuning them poses risks of overfitting and compromised generalization due to the lack of high-quality datasets from public sources.}. We choose Zero-Padding \cite{cho2023heterogeneous}, FlexLoRA \cite{bai2024federated}, and FLoRA \cite{wang2024flora} as baseline methods. Specifically, Zero-Padding tackles the challenge of heterogeneous matrix aggregation by padding zeros in LoRA modules, FlexLoRA employs a multiply-then-aggregate approach to aggregate heterogeneous LoRA modules, and FLoRA stacks all clients' LoRA modules to meet the requirements of matrix multiplication. Furthermore, our proposed FedHL adopts a full-rank model to perform unbiased aggregation to eliminate truncation bias and optimizes the aggregation weights to minimize the accumulated training drift. We also adopt FedIT \cite{zhang2024towards} as a baseline for homogeneous LoRA settings.

\begin{table*}[ht]
\centering
\caption{Performance comparison of different algorithms under various heterogeneous LoRA configurations (10 clients), with evaluation scores (\%) ± standard deviation (\%).}
\label{tab:experimental_results}
\begin{adjustbox}{width=\textwidth}
\renewcommand{\arraystretch}{0.2}
\begin{tabular}{l c *{4}{c}}
\toprule
\rowcolor{gray!15}
\textbf{Algorithm} & \textbf{\makecell{LoRA Rank \\ Heterogeneity Level}} & \textbf{\makecell{Fed-GSM8K}} & \textbf{\makecell{Fed-CodeAlpaca}} & \textbf{\makecell{Fed-Dolly}} & \textbf{\makecell{Average\\Performance}} \\
\midrule
\multirow{2}{*}[-0.5ex]{\textbf{Zero-Padding}} 
& \cellcolor{gray!20}Moderate  & $32.84 ^{\pm 0.42}$ & $19.26 ^{\pm 0.27}$ & $32.08 ^{\pm 0.25}$ & $28.06^{\pm 0.32}$ \\
& \cellcolor{gray!35}High      & $27.36 ^{\pm 0.54}$ & $18.67 ^{\pm 0.32}$ & $31.39 ^{\pm 0.13}$ & $25.81^{\pm 0.37}$ \\
\midrule

\multirow{2}{*}[-0.5ex]{\textbf{FlexLoRA}} 
& \cellcolor{gray!20}Moderate  & $33.58^{\pm 0.38}$ & $19.64^{\pm 0.25}$ & $32.36^{\pm 0.21}$ & $28.53^{\pm 0.29}$ \\
& \cellcolor{gray!35}High      & $30.11^{\pm 0.56}$ & $19.55^{\pm 0.27}$ & $32.22^{\pm 0.15}$ & $26.96^{\pm 0.37}$ \\
\midrule

\multirow{2}{*}[-0.5ex]{\textbf{FLoRA}} 
& \cellcolor{gray!20}Moderate  & $32.97^{\pm 0.21}$ & $19.66^{\pm 0.31}$ & $32.28^{\pm 0.23}$ & $28.30^{\pm 0.25}$ \\
& \cellcolor{gray!35}High      & $29.62^{\pm 0.60}$ & $17.74^{\pm 0.11}$ & $32.04^{\pm 0.14}$ & $26.80^{\pm 0.36}$ \\
\midrule

\multirow{2}{*}[-0.5ex]{\textbf{FedHL}} 
& \cellcolor{gray!20}Moderate  & $\textbf{34.34}^{\pm 0.33}$ & $\textbf{20.97}^{\pm 0.17}$ & $\textbf{32.65}^{\pm 0.19}$ & $\textbf{29.32}^{\pm 0.24}$ \\
& \cellcolor{gray!35}High      & $\textbf{32.29}^{\pm 0.45}$ & $\textbf{20.29}^{\pm 0.10}$ & $\textbf{32.63}^{\pm 0.12}$ & $\textbf{28.40}^{\pm 0.28}$ \\
\midrule

\end{tabular}
\end{adjustbox}
\end{table*}
\noindent \textbf{Datasets.} In this study, based on the federated LLM dataset benchmark proposed by FederatedScope-LLM \cite{kuang2024federatedscope}, we utilize the following datasets under this benchmark as our evaluation datasets: Fed-GSM8K, Fed-CodeAlpaca, and Fed-Dolly datasets.
\begin{itemize} 

\item  \textbf{Fed-GSM8K.} The Fed-GSM8K dataset \cite{cobbe2021training} consists of grade-school math problems that require multi-step reasoning. We evaluate models on the GSM8K test dataset, measuring their accuracy in solving math problems.

\item \textbf{Fed-CodeAlpaca.} The Fed-CodeAlpaca \cite{chaudharycode} dataset consists of coding exercises across nine programming languages: C, C\#, C++, Go, Java, PHP, Pascal, Python, and Scala. The dataset is divided into nine subsets, each corresponding to a specific language and labeled accordingly. We employ HumanEval \cite{chen2021evaluating} as the evaluation standard.

\item \textbf{Fed-Dolly.}  Fed-Dolly leverages the Databricks Dolly 15k dataset \cite{databricks2023dolly}, which comprises diverse categories of instruction-response tasks, such as open-ended question answering, classification, and general QA. Similar to \cite{qin2024federated, shu2024ferret}, we utilize the Rouge-L metric for evaluation. 
\end{itemize}

\noindent \textbf{Data Partition and Rank Heterogeneity Configuration.}
\begin{itemize} 

\item \textbf{Case 1: Cross-Silo FL with 10 Clients.} For the datasets Fed-GSM8K and Fed-CodeAlpaca, we apply a IID partitioning where each client is assigned equally sized data blocks randomly. For Fed-Dolly, we employ Dirichlet-based data partitioning with $\alpha = 0.3$, a widely used method to simulate non-IID data distributions in real-world FL scenarios~\cite{ling2024convergence}. To simulate heterogeneous LoRA rank distributions, we follow the framework proposed in \cite{wang2024flora}, considering two heterogeneity levels: moderate and high. Consistent with prior studies \cite{wang2024flora}, the moderate heterogeneity scenario adopts the following rank distribution: [32, 24, 20, 16, 16, 12, 12, 12, 8, 8]. For the high heterogeneity scenario, the rank distribution is as follows: [64, 32, 16, 16, 8, 8, 4, 4, 4, 4].

\item \textbf{Case 2: Cross-Device FL with 100 Clients.} Since the Fed-GSM8K dataset does not contain labels, we apply Dirichlet distribution with $\alpha = 0.3$ to the labeled Fed-CodeAlpaca and Fed-Dolly datasets in order to simulate the non-IID data distribution. We implement partial client participation with a sampling rate of 15\% and 30\%. The rank distribution is proportionally scaled from the original FLoRA setting \cite{wang2024flora} to 100 clients while maintaining an average rank of 16. In our experiments, clients with larger LoRA ranks are assigned more data and a more skewed local distribution, simulating the non-IID condition where these clients hold a more diverse sample.

\end{itemize}
\noindent \textbf{(Hyper)Parameters.} In the Cross-Silo FL scenario, \(T\) is set to 20 rounds for both moderate and high heterogeneity in Fed-GSM8K, 10 rounds for Fed-CodeAlpaca, and 5 rounds for Fed-Dolly. In the Cross-device FL scenario, \(T\) is set to 20 rounds for Fed-CodeAlpaca and 15 rounds for Fed-Dolly in both heterogeneity settings.
The number of local epochs \(K\) is set to 3 for Fed-GSM8K, 1 for Fed-CodeAlpaca, and 2 for Fed-Dolly. The number of training rounds is configured according to the complexity of the task, while avoiding overfitting and reducing generalization when fine-tuning large models. The learning rates \(\eta_t\) are set at \(5 \times 10^{-4}\) for Fed-GSM8K, and \(2 \times 10^{-4}\) for the other datasets, with a batch size of 1 for all. The weight decay is set at 0.01 for Fed-GSM8K and Fed-CodeAlpaca, and 0.001 for Fed-Dolly. The aggregated weights are smoothed using a softmax function with a temperature of 1 to moderate extreme contributions. All hyperparameters are optimized via an extensive hyperparameter search, including baselines. The LoRA configuration targets the ``q\_proj'' and ``v\_proj'' modules across all datasets. All experimental results were obtained using 8 $\times$ A800 80GB GPUs.

\subsection{Performance Evaluation in Cross-Silo FL}

\noindent \textbf{Comparison with Baseline Methods.} In Table~\ref{tab:experimental_results}, we compare FedHL with three baseline methods on Fed-GSM8K, Fed-CodeAlpaca, and Fed-Dolly under varying levels of LoRA heterogeneity. The results demonstrate that FedHL achieves state-of-the-art performance across all datasets and heterogeneity levels, with overall average scores exceeding those of the baselines by approximately 1--3\%. In addition, it can be observed that in highly heterogeneous settings, Zero-Padding suffers the most noticeable drop, indicating that simple zero-padding cannot effectively handle high heterogeneity. FlexLoRA and FLoRA, which do not explicitly address truncation and gradient biases under heterogeneous LoRA conditions, also underperform compared to FedHL. Figures \ref{fig:loss_gsm8k_10client} and \ref{fig:loss_dolly_10_clients} illustrate the training loss value curves of all algorithms during training, showing that FedHL converges faster than other baselines.
\setcounter{figure}{2}
\begin{figure}[!]
    \centering
    \begin{minipage}[b]{0.48\columnwidth}
        \centering
        \includegraphics[width=\textwidth]{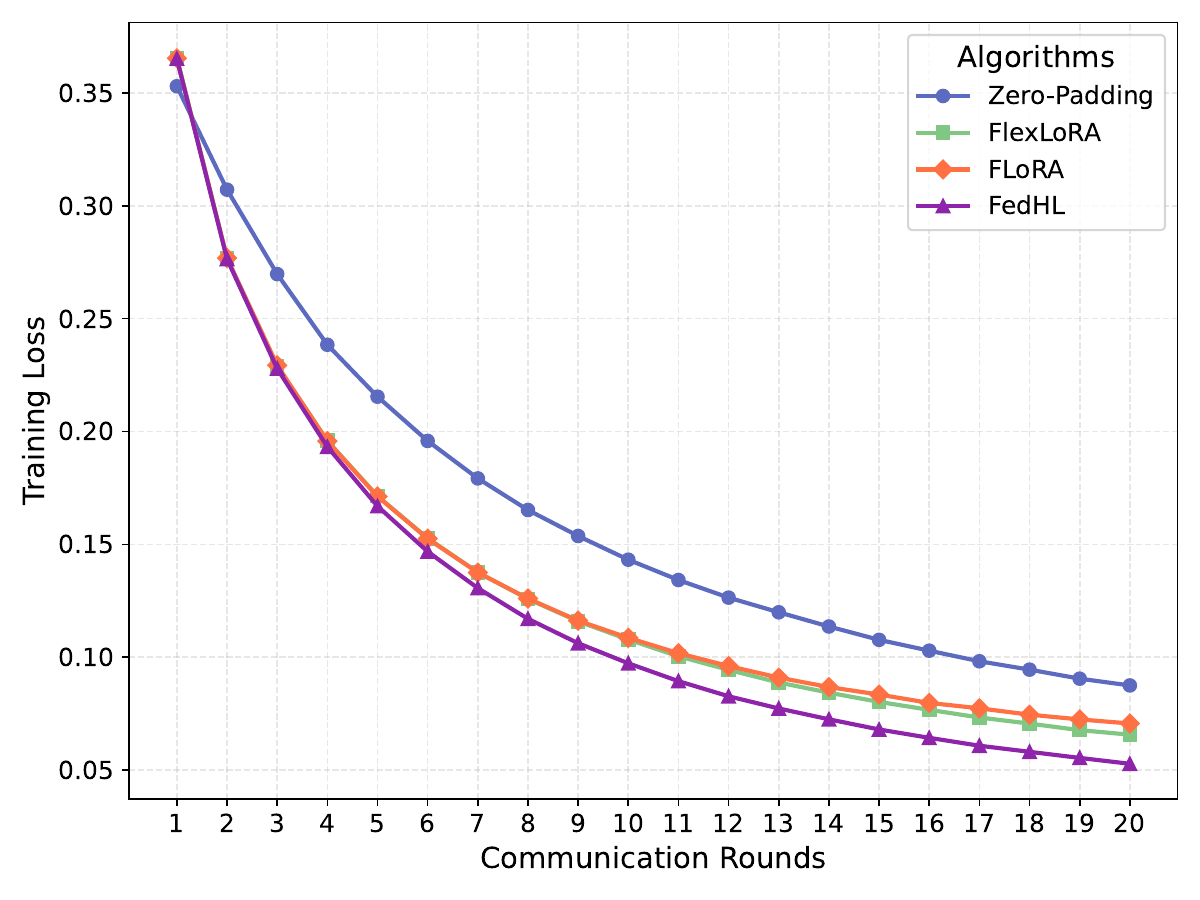}
    \end{minipage}
    \hfill
    \begin{minipage}[b]{0.48\columnwidth}
        \centering
        \includegraphics[width=\textwidth]{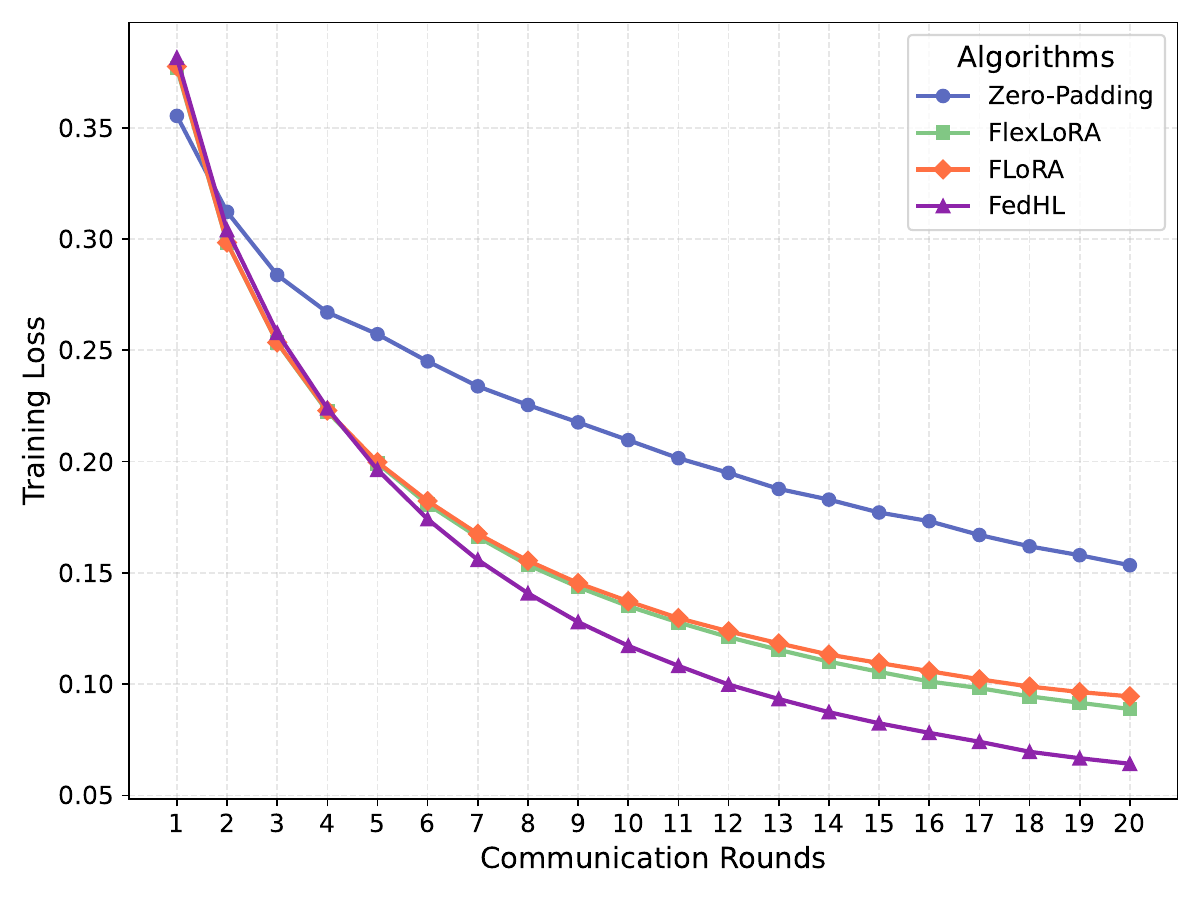}
    \end{minipage}
    \caption{Training Loss for LoRA-Based Configurations under Moderate and High Heterogeneity (Fed-GSM8K).}
    \label{fig:loss_gsm8k_10client}
\end{figure}

\setcounter{figure}{4}
\begin{figure}[!]
    \centering
    \begin{minipage}[b]{0.5\columnwidth}
        \centering
        \includegraphics[width=\textwidth]{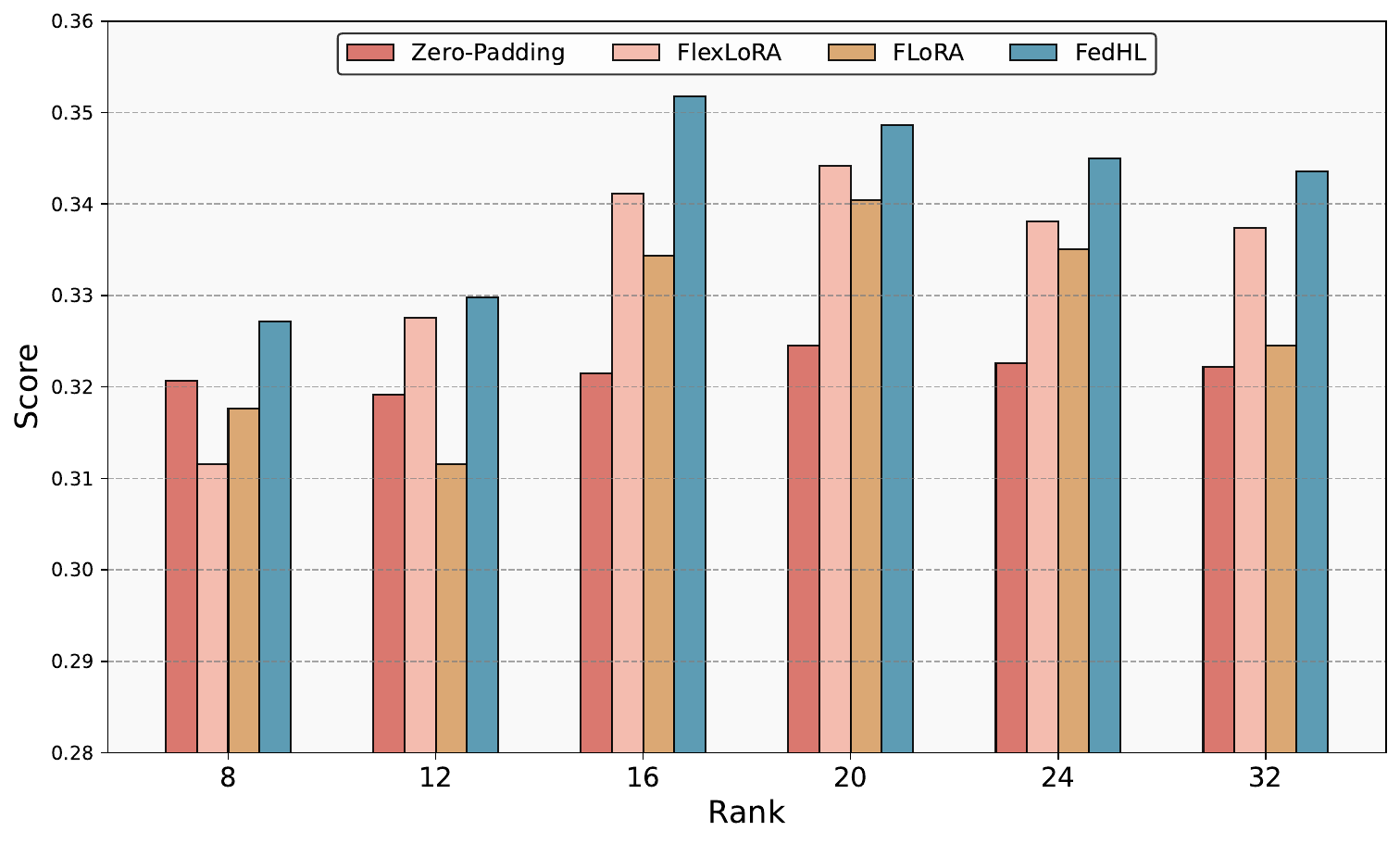}
    \end{minipage}%
    \hfill
    \begin{minipage}[b]{0.5\columnwidth}
        \centering
        \includegraphics[width=\textwidth]{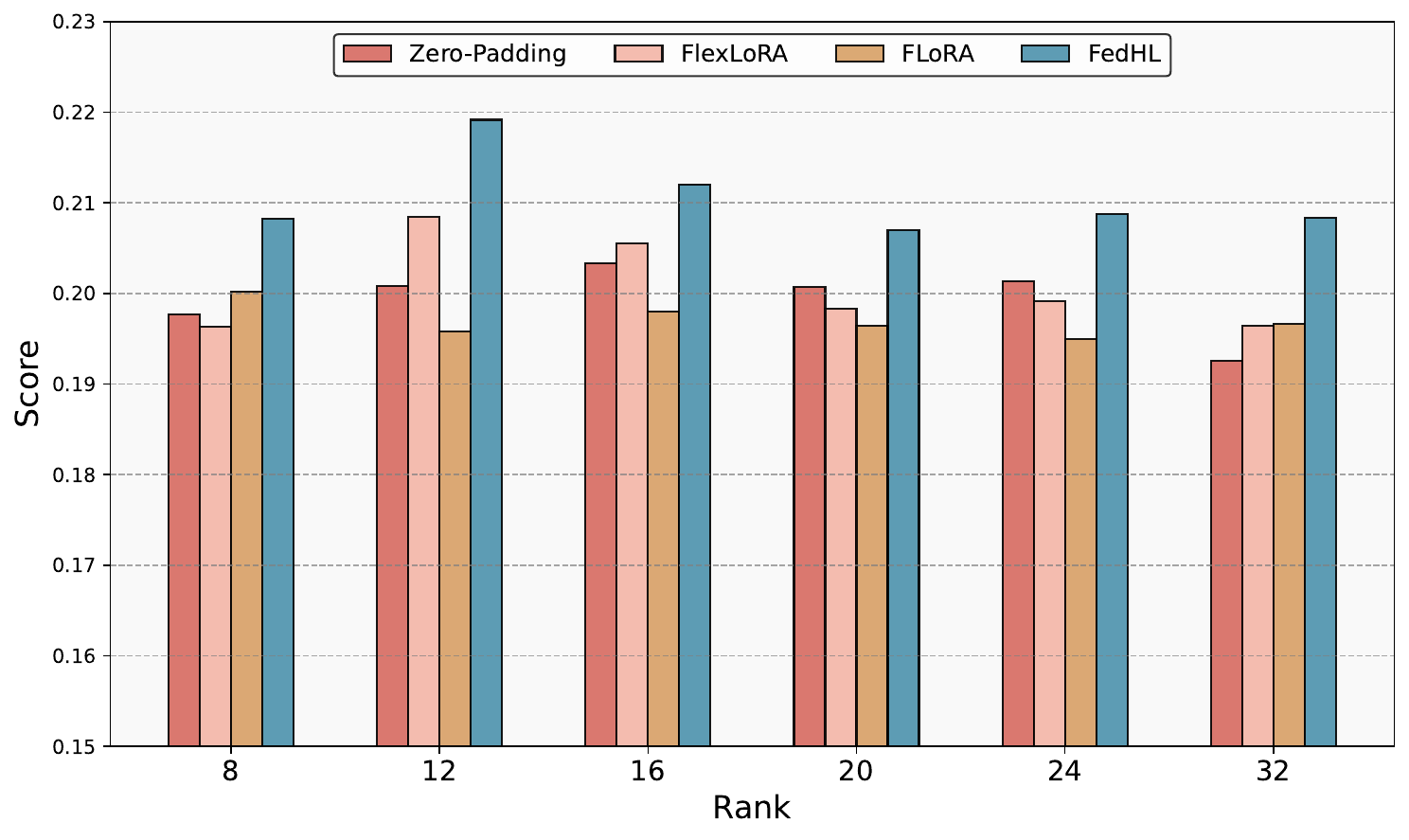}
    \end{minipage}
    \caption{Comparison of Client Model Scores Across Different LoRA Ranks in a Cross-Device Setting. Left: Fed-GSM8K. Right: Fed-CodeAlpaca.}
    \label{fig:per_client_10_client}
\end{figure}

\noindent \textbf{Per-Client Performance Analysis.} Figure~\ref{fig:per_client_10_client} illustrates the performance of the final client models under moderate LoRA rank configurations. Consistently, FedHL demonstrates superior and robust performance compared to other algorithms under different rank configurations on the final client models. Specifically, the left side of Figure~\ref{fig:per_client_10_client} presents the scores of client models trained on the Fed-GSM8K dataset, where the client with rank 16 achieves the highest performance. Similarly, the right side of Figure~\ref{fig:per_client_10_client} depicts the performance of client models on the Fed-CodeAlpaca dataset, with the client at rank 12 achieving the best score. The experimental results suggest that excessively small ranks lead to underfitting, as the model lacks sufficient capacity to learn complex patterns. In contrast, excessively large ranks can induce overfitting, degrading both generalization and learning performance.

\noindent \textbf{Impact of LoRA Ranks.} We conduct experiments with various LoRA rank configurations, including Homogeneous (16 ranks per client), High, Moderate, and Extremely High (64 ranks for the first two clients and 4 for the rest). Using the \textit{Moderate} rank allocation as the baseline, we systematically evaluated the performance of these configurations under a Dirichlet distribution ($\alpha = 0.3$) in the Fed-Dolly dataset. As observed in Table~\ref{tab:moderate_baseline}, the findings indicate that different configurations yield varying performance levels, and specifically, the Moderate configuration consistently outperforms the other configurations. Furthermore, the experimental results also show that FedHL exhibits the best and most stable performance across all ranking configurations, with a maximum score deviation of only 0.07. Future work could investigate advanced optimization strategies for rank allocation to further enhance model performance.

\setcounter{figure}{3}
\begin{figure}[!]
    \centering
    \begin{minipage}[b]{0.48\columnwidth}
        \centering
        \includegraphics[width=\textwidth]{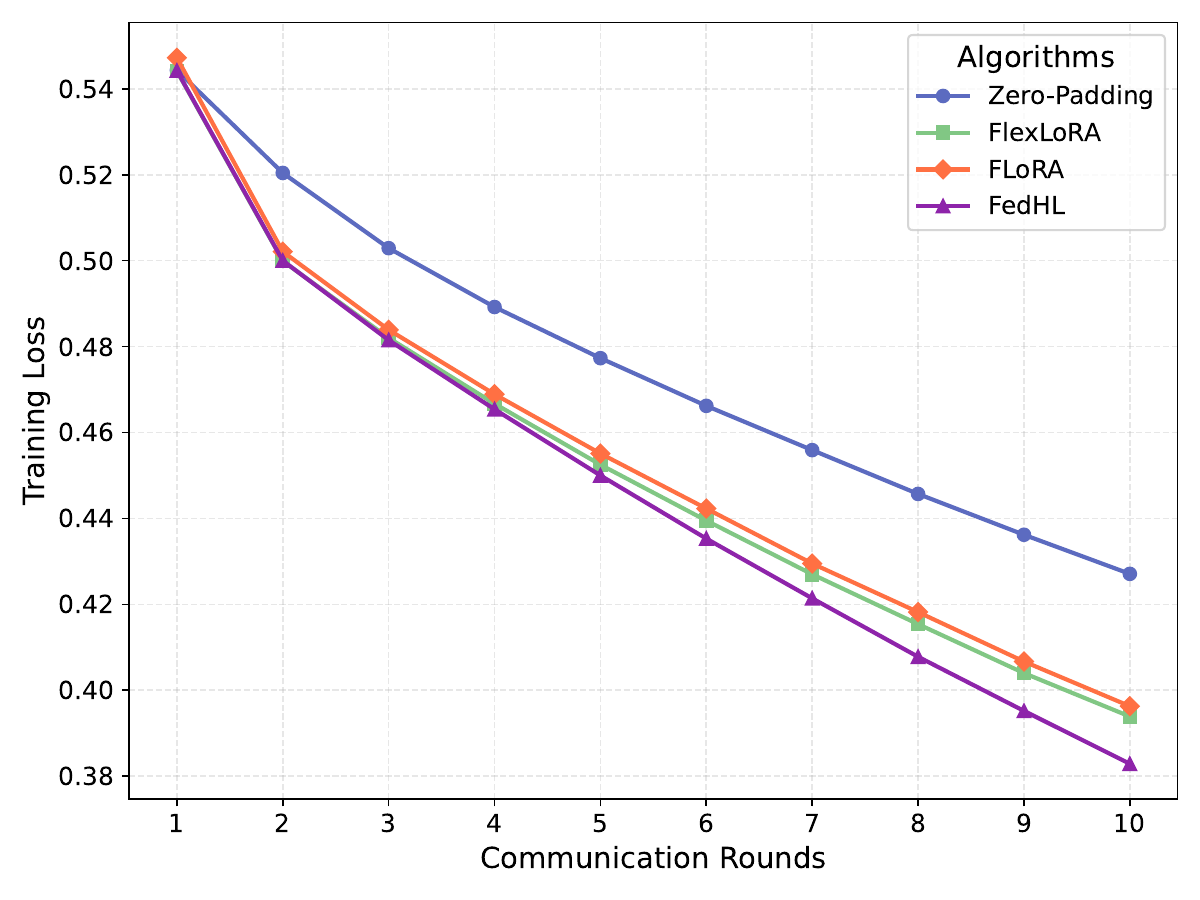}
    \end{minipage}
    \hfill
    \begin{minipage}[b]{0.48\columnwidth}
        \centering
        \includegraphics[width=\textwidth]{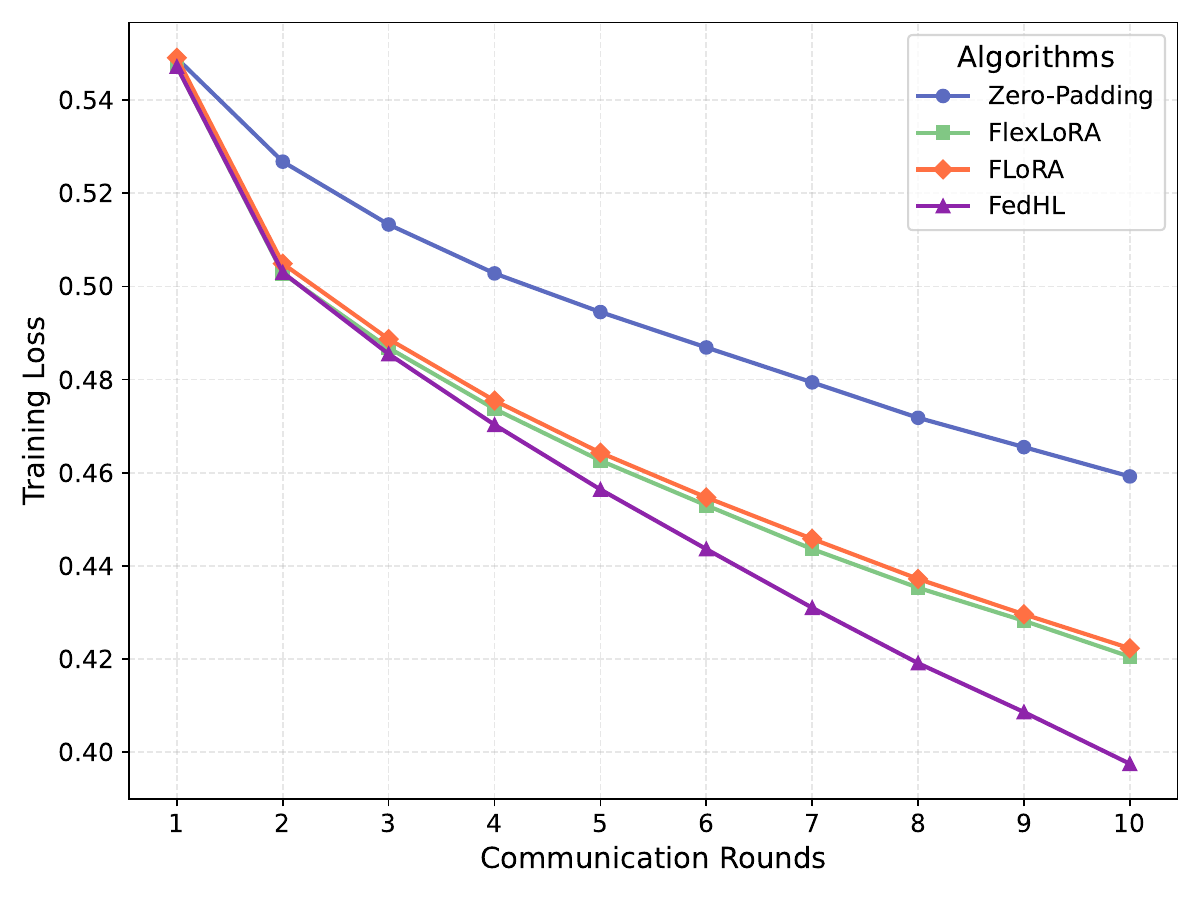}
    \end{minipage}
    \caption{Training Loss for LoRA-Based Configurations under Moderate and High Heterogeneity (Fed-CodeAlpaca).}
    \label{fig:loss_dolly_10_clients}
\end{figure}

\setcounter{figure}{5}
\begin{figure}[!]
    \centering
    \begin{minipage}[b]{0.5\columnwidth}
        \centering
        \includegraphics[width=\textwidth]{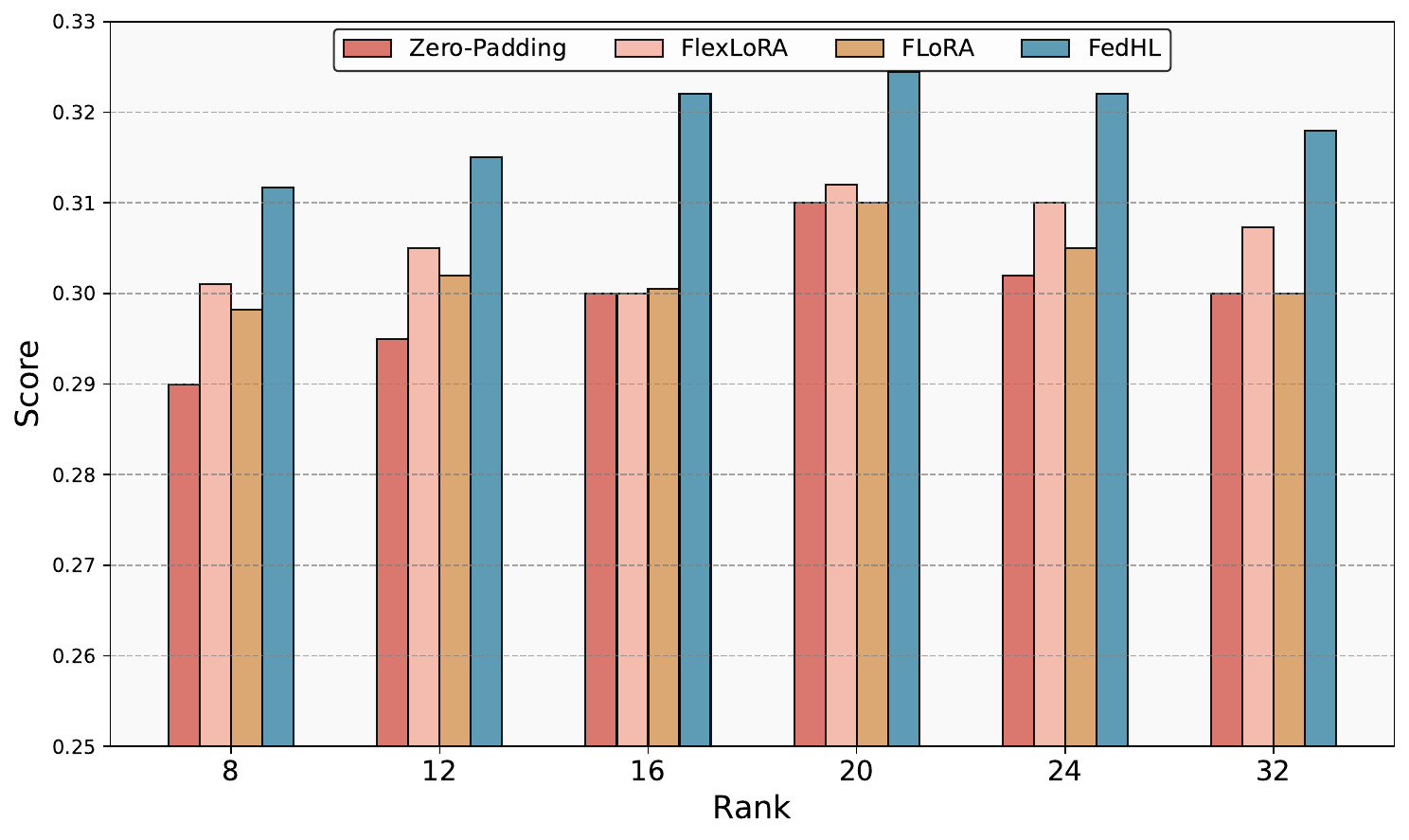}
    \end{minipage}%
    \hfill
    \begin{minipage}[b]{0.5\columnwidth}
        \centering
        \includegraphics[width=\textwidth]{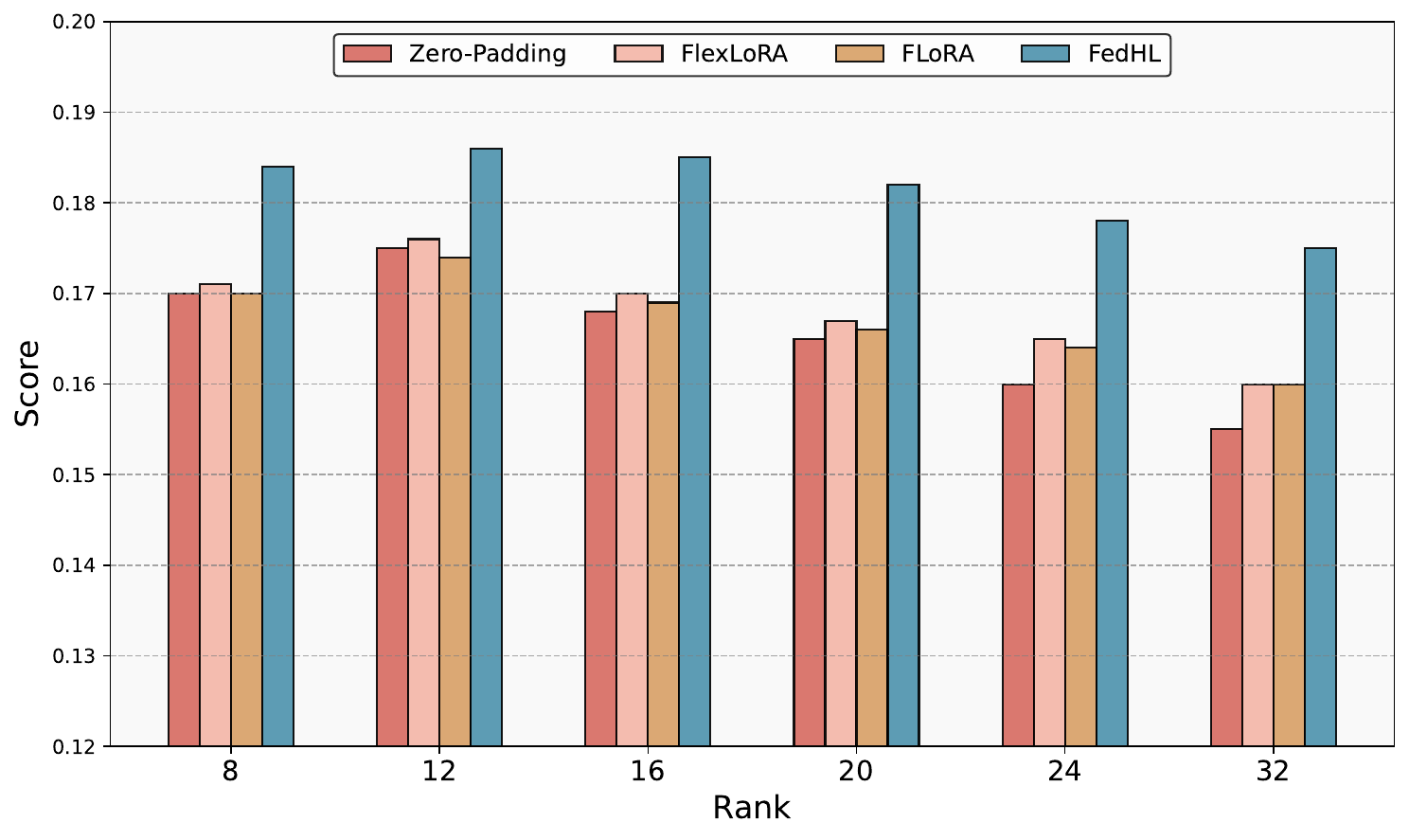}
    \end{minipage}
    \caption{Comparison of Client Model Scores Across Different LoRA Ranks in a Cross-Silo Setting. Left: Fed-Dolly. Right: Fed-CodeAlpaca.}
    \label{fig:per_client_100_client}
\end{figure}
\begin{table}[!]
\centering
\large
\renewcommand{\arraystretch}{1.3} 
\setlength{\tabcolsep}{4pt}     
\caption{Performance Comparison Across Four Rank Configurations (Homogeneous, Moderate, High, Extremely High). Using \underline{\textbf{Moderate}} as the Baseline.}
\label{tab:moderate_baseline}
\resizebox{\linewidth}{!}{ 
\begin{tabular}{lcccc}
\toprule 
\rowcolor{gray!15}
\textbf{Method} & \textbf{Homogeneous} & \textbf{High} & \textbf{Extremely High} & \textbf{Moderate} \\
\midrule
\textbf{Zero-Padding} & 32.04 \,(-0.04) & 31.39 \,(-0.69) & 30.63 \,(-1.45) & \textbf{32.08} \\

\textbf{FlexLoRA}     & 32.33 \,(-0.03) & 32.22 \,(-0.14) & 32.15 \,(-0.21) & \textbf{32.36} \\

\textbf{FLoRA}        & 32.18 \,(-0.10) & 32.04 \,(-0.24) & 32.08 \,(-0.20) & \textbf{32.28} \\

\textbf{FedHL}        & 32.58 \,(-0.07) & 32.63 \,(-0.02) & \textbf{32.65 (0.00)} & \textbf{32.65} \\
\bottomrule
\end{tabular}}
\end{table}

\begin{table}[!]
\centering
\caption{Performance Comparison under Cross-Device FL with 100 Clients at 15\% Client Participation Rate, with evaluation scores (\%) ± standard deviation (\%).}
\label{tab:cross_device_results}
\resizebox{\columnwidth}{!}{ 
\begin{tabular}{l c cc}
\toprule
\rowcolor{gray!15}
\textbf{Algorithm} & \textbf{\makecell{LoRA Rank \\
Heterogeneity Level}} & \textbf{\makecell{Fed-CodeAlpaca}} & \textbf{\makecell{Fed-Dolly}} \\
\midrule

\multirow{2}{*}[-0.3ex]{\textbf{Zero-Padding}} 
& \cellcolor{gray!20}Moderate   & 16.73$^{\pm 0.21}$ & $29.99^{\pm 0.13}$ \\
& \cellcolor{gray!35}High   & 16.24$^{\pm 0.24}$ & $30.27^{\pm 0.27}$ \\
\midrule

\multirow{2}{*}[-0.3ex]{\textbf{FlexLoRA}} 
& \cellcolor{gray!20}Moderate    & 17.21$^{\pm 0.20}$ & $31.09^{\pm 0.14}$ \\
& \cellcolor{gray!35}High   & 16.84$^{\pm 0.21}$ & $30.44^{\pm 0.23}$ \\
\midrule

\multirow{2}{*}[-0.3ex]{\textbf{FLoRA}} 
& \cellcolor{gray!20}Moderate   & 17.03$^{\pm 0.24}$ & $30.82^{\pm 0.15}$ \\
& \cellcolor{gray!35}High   & 16.57$^{\pm 0.24}$ & $30.23^{\pm 0.22}$ \\
\midrule

\multirow{2}{*}[-0.3ex]{\textbf{FedHL}} 
& \cellcolor{gray!20}Moderate   & $\textbf{18.93}^{\pm 0.14}$ & $\textbf{32.47}^{\pm 0.12}$ \\
& \cellcolor{gray!35}High   & $\textbf{18.65}^{\pm 0.18}$ & $\textbf{32.32}^{\pm 0.19}$ \\
\bottomrule
\end{tabular}
}
\end{table}

\subsection{Performance Evaluation in Cross-Device FL}
\noindent \textbf{Comparison with Baseline Methods.}  In the cross-device setting, we follow the heterogeneous rank allocation strategies introduced in FLoRA and extend them to 100 clients \cite{bai2024federated}, simulating real-world device capabilities by adopting both moderate and high rank distributions across clients. All experiments are conducted with 100 clients under partial participation, where 15\% of clients are sampled per round. The training loss versus rounds is shown in Figs. \ref{fig:loss_dolly_100_client} and \ref{fig:loss_code_alpaca_100_clients}.

Table~\ref{tab:cross_device_results} summarizes the performance across two datasets: Fed-CodeAlpaca and Fed-Dolly. Our proposed algorithm, FedHL, consistently outperforms all baselines under both the moderate and high rank configurations. Specifically, on the Fed-Dolly dataset, FedHL significantly outperforms all baselines, scoring 32.47 in the moderate-rank setting and 32.32 in the high-rank setting. Notably, under highly rank-heterogeneous settings, several baselines experience performance degradation, while FedHL remains robust. Compared to the second-best method, FlexLoRA, FedHL improves the score by 1.88 in the high-rank setting and surpasses FLoRA by 2.09. On the Fed-CodeAlpaca dataset, FedHL achieves a score of 18.93 under moderate configurations, outperforming FlexLoRA (17.21) and FLoRA (17.03) by 9.9\% and 11.2\%, respectively. Even under heavy high-rank settings, FedHL maintains a robust score of 18.65, demonstrating a 10.7\% relative improvement over the second-best method.
\setcounter{figure}{6}
\begin{figure}[!]
    \centering
    \begin{minipage}[b]{0.48\columnwidth}
        \centering
        \includegraphics[width=\textwidth]{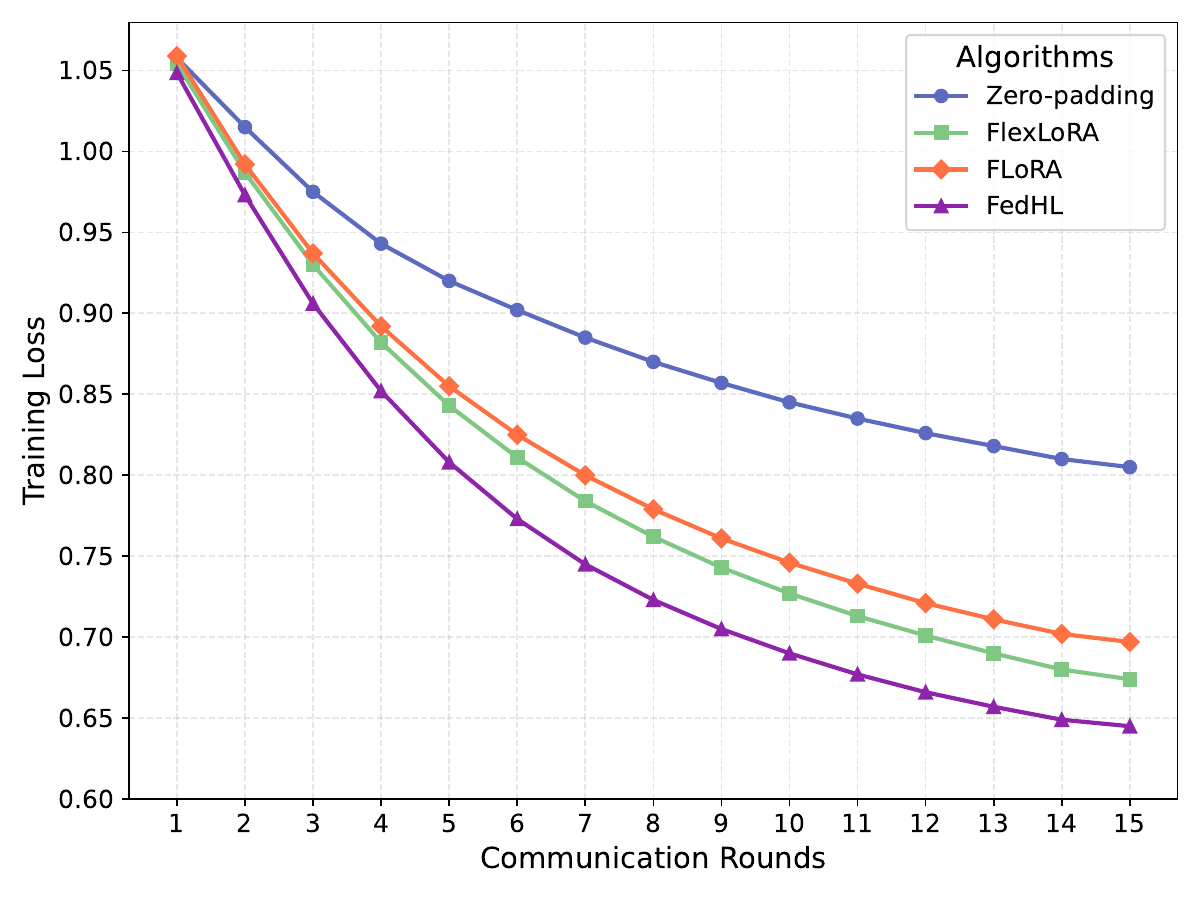}
    \end{minipage}
    \hfill
    \begin{minipage}[b]{0.48\columnwidth}
        \centering
        \includegraphics[width=\textwidth]{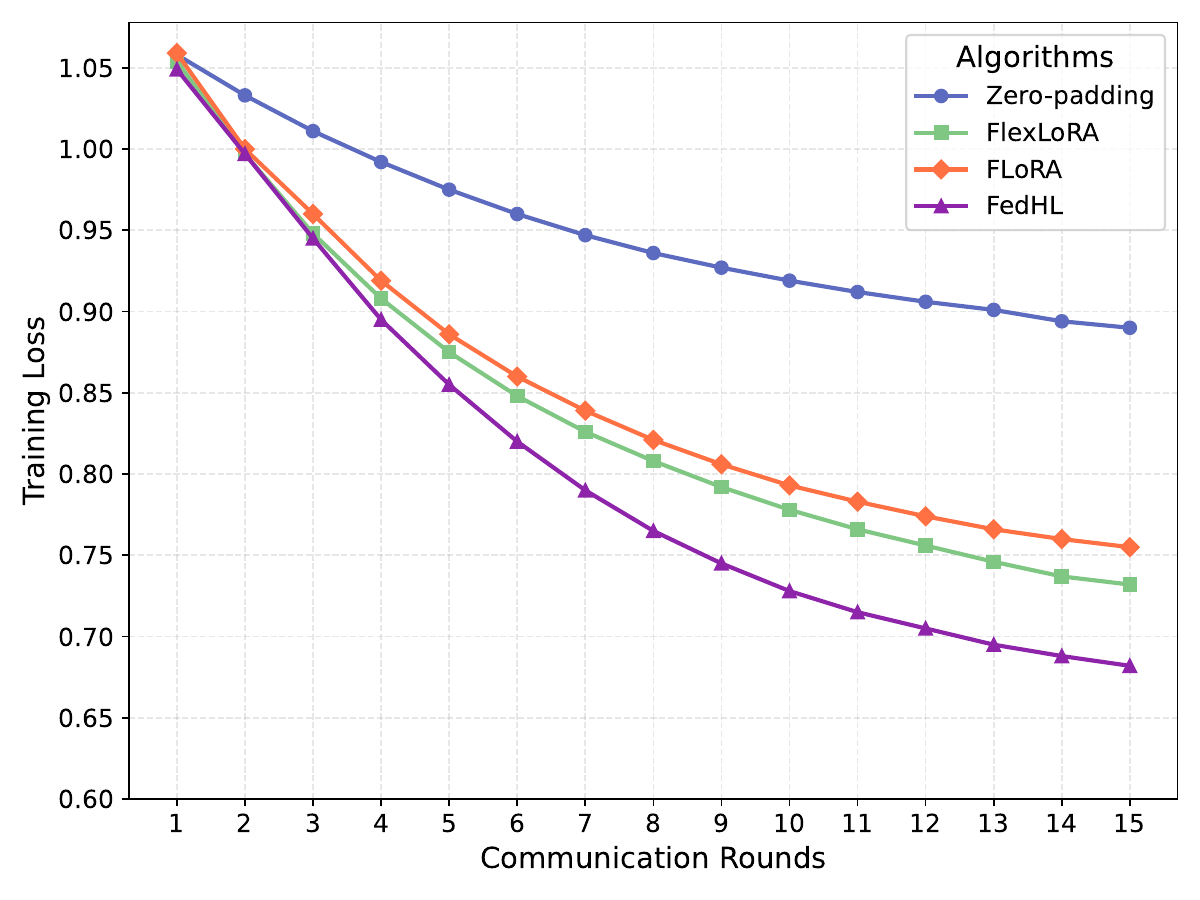}
    \end{minipage}
    \caption{Training Loss for LoRA-Based Configurations under Moderate and High Heterogeneity (Fed-Dolly).}
    \label{fig:loss_dolly_100_client}
\end{figure}
\setcounter{figure}{8}
\begin{figure}[!]
    \centering
    \begin{minipage}[b]{0.5\columnwidth}
        \centering
        \includegraphics[width=\textwidth]{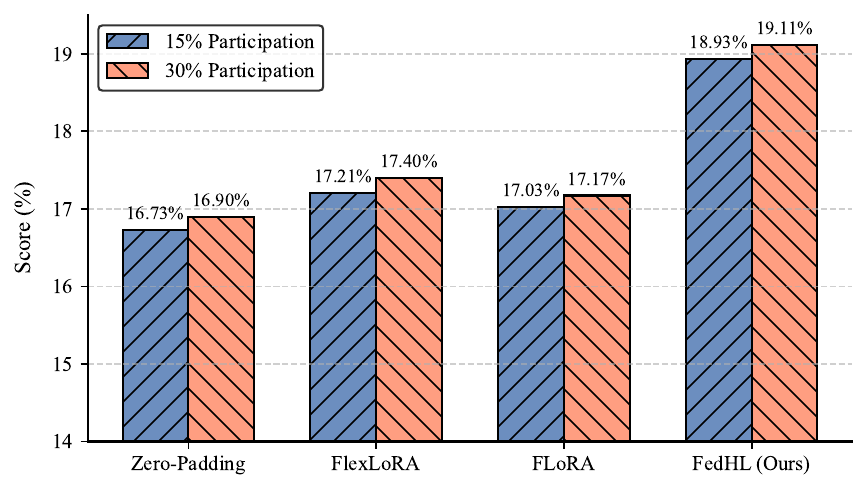}
    \end{minipage}%
    \hfill
    \begin{minipage}[b]{0.5\columnwidth}
        \centering
        \includegraphics[width=\textwidth]{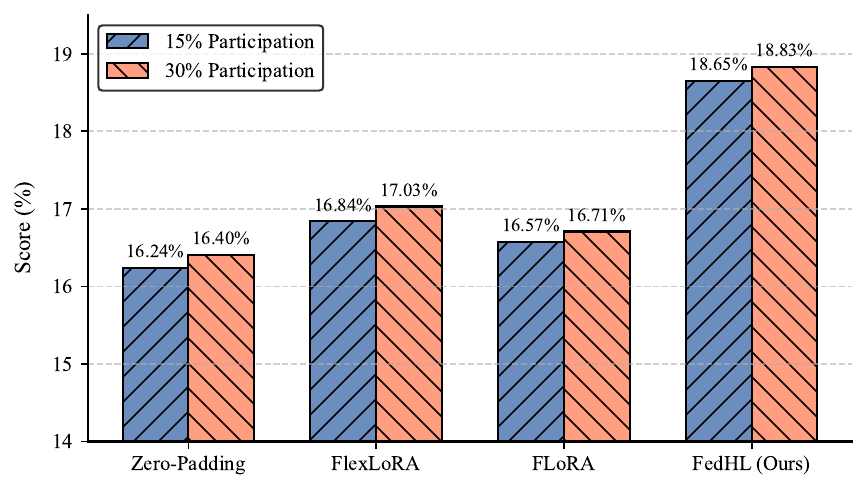}
    \end{minipage}
    \caption{Performance comparison under client participation rates (15\% vs. 30\%) on the Fed-CodeAlpaca dataset for  different rank levels: Left for moderate, Right for high.}
    \label{fig:compare_code}
\end{figure}

\noindent \textbf{Client-Specific Performance Evaluation.} Figure~\ref{fig:per_client_100_client} illustrates the performance of the final client models across different moderate LoRA rank configurations. In all cases, FedHL consistently achieves superior and stable results compared to other algorithms. Specifically, the left panel of Figure~\ref{fig:per_client_100_client} shows the performance of client models trained on the Fed-Dolly dataset, where the client with a rank of 20 achieves the highest performance. On the right side of the figure, the performance of client models on the Fed-CodeAlpaca dataset is presented, with the client at rank 12 attaining the best score, consistent with the findings in the cross-silo FL scenarios. The findings suggest that small ranks may cause underfitting by limiting model capacity, whereas large ranks can lead to overfitting.

\noindent \textbf{Analysis of Client Participation Impact.} We perform experiments under cross-device FL scenarios to investigate the impact of client participation rates. For each dataset, we configure participation rates at 15\% and 30\% to simulate different device availability levels. As shown in Figs.~\ref{fig:compare_code} and~\ref{fig:compare_dolly}, increasing the participation rate from 15\% to 30\% consistently improves performance across all baselines on both datasets. Specifically, as participation increases, the average score of all baselines improves by $\Delta_{+0.17\%}$ on Fed-Dolly and $\Delta_{+0.16\%}$ on Fed-CodeAlpaca. Notably, our proposed algorithm consistently outperformed all baselines while maintaining stable performance across varying participation rates.

\setcounter{figure}{7}
\begin{figure}[!]
    \centering
    \begin{minipage}[b]{0.48\columnwidth}
        \centering
        \includegraphics[width=\textwidth]{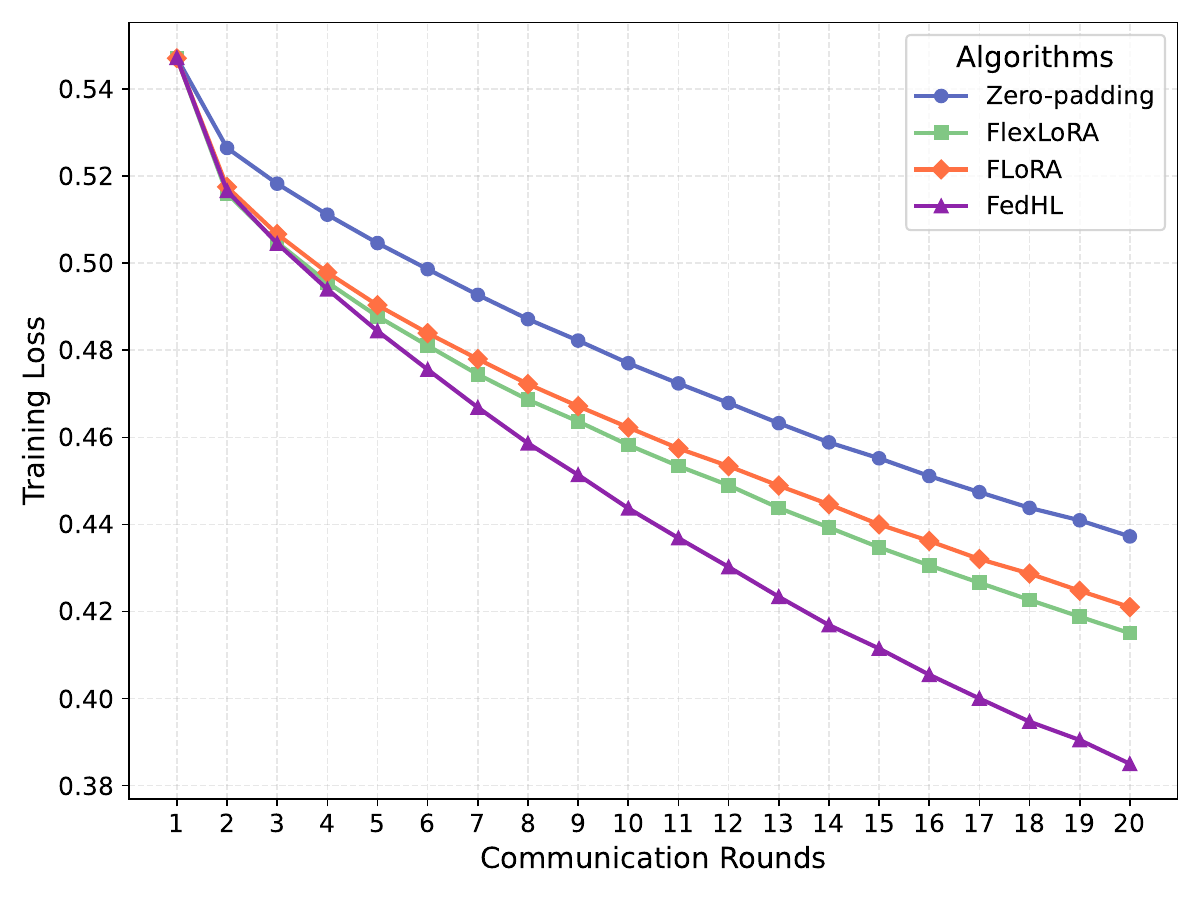}
    \end{minipage}
    \hfill
    \begin{minipage}[b]{0.48\columnwidth}
        \centering
        \includegraphics[width=\textwidth]{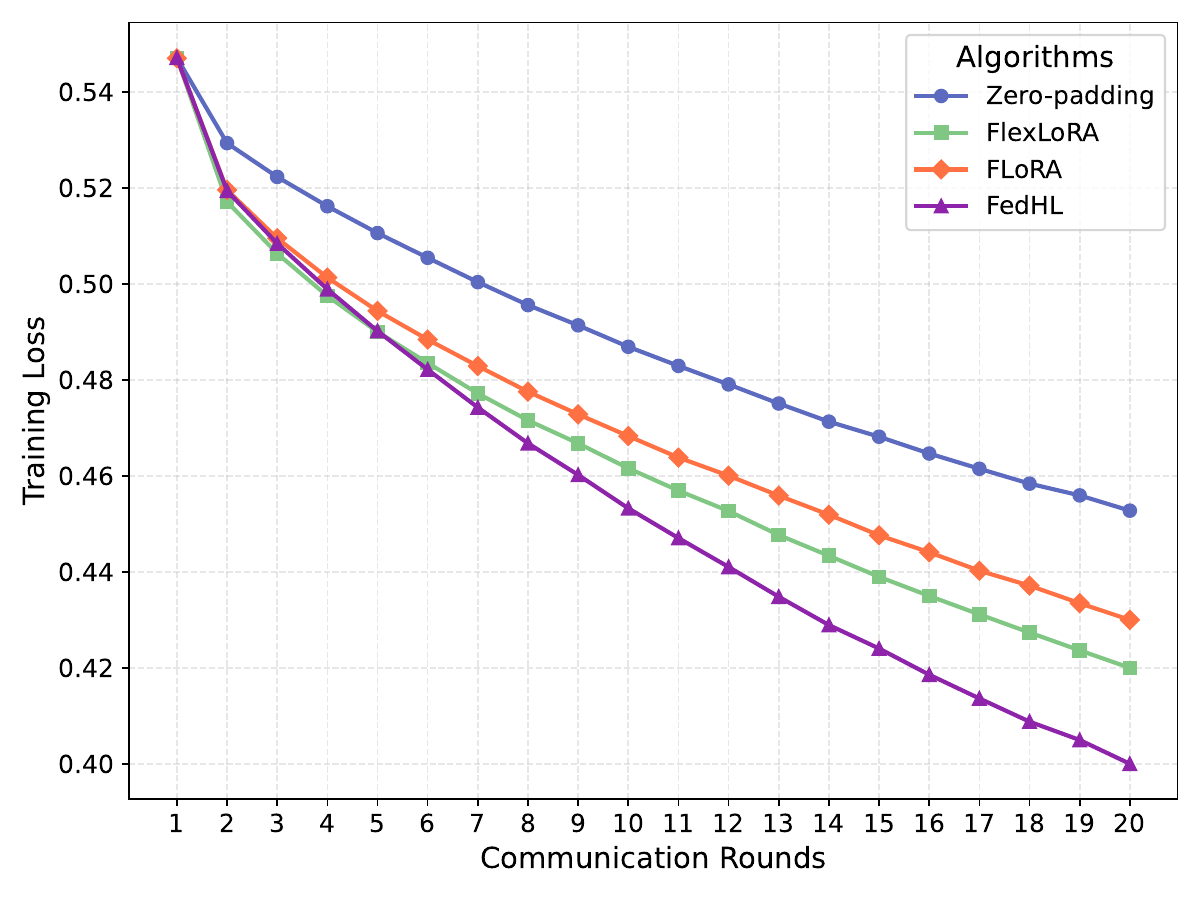}
    \end{minipage}
    \caption{Training Loss for LoRA-Based Configurations under Moderate and High Heterogeneity (Fed-CodeAlpaca).}
    \label{fig:loss_code_alpaca_100_clients}
\end{figure}
\setcounter{figure}{9}
\begin{figure}[!]
    \centering
    \begin{minipage}[b]{0.5\columnwidth}
        \centering
        \includegraphics[width=\textwidth]{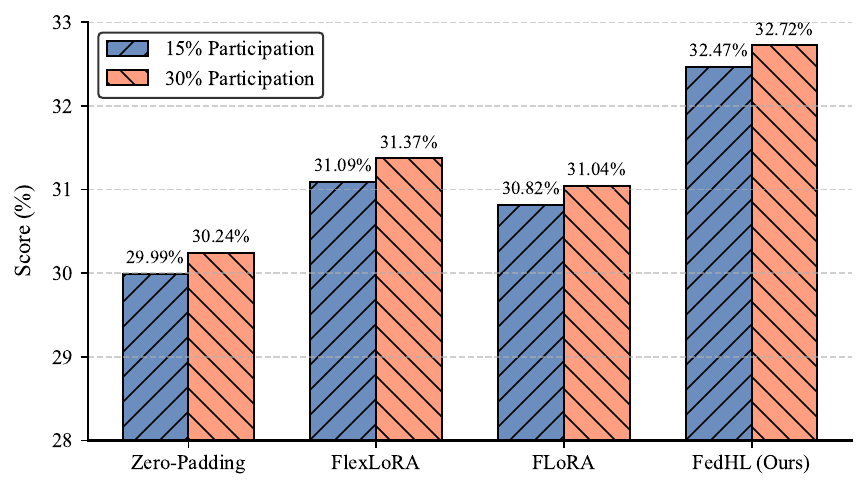}
    \end{minipage}%
    \hfill
    \begin{minipage}[b]{0.5\columnwidth}
        \centering
        \includegraphics[width=\textwidth]{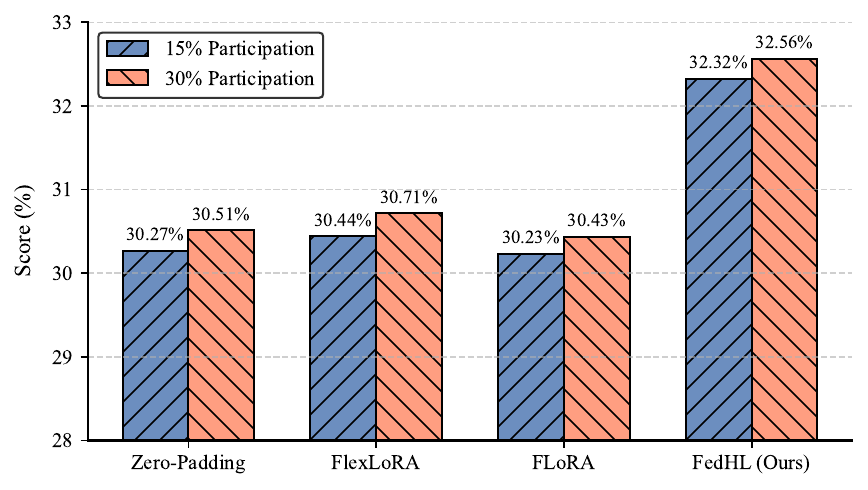}
    \end{minipage}
    \caption{Performance comparison under client participation rates (15\% vs. 30\%) on the Fed-Dolly dataset for  different rank levels: Left for moderate, Right for high.}
    \label{fig:compare_dolly}
\end{figure}

\begin{table}[!]
\caption{Cross-Device FL Performance Comparison under \underline{\textbf{Homogeneous}} LoRA Rank Settings.}

\label{tab:homogeneous_cross_device}
\centering
\setlength{\tabcolsep}{16pt}  
\renewcommand{\arraystretch}{1.2}  
\begin{tabular}{@{}lcc@{}}
\toprule

\textbf{Method} & \textbf{Fed-CodeAlpaca} & \textbf{Fed-Dolly} \\
\midrule
FedIT & $17.05^{\pm 0.32}$ & $30.20^{\pm 0.30}$ \\
Zero-Padding & $17.02^{\pm 0.34}$ & $30.15^{\pm 0.29}$ \\
FlexLoRA & $17.35^{\pm 0.25}$ & $30.78^{\pm 0.24}$ \\
FLoRA & $17.18^{\pm 0.28}$ & $30.51^{\pm 0.24}$ \\
\textbf{FedHL} & $\textbf{17.52}^{\pm 0.12}$ & $\textbf{30.93}^{\pm 0.11}$ \\
\bottomrule
\end{tabular}
\vspace{-3mm}
\end{table}
\noindent \textbf{Comparison under Homogeneous Rank Settings.} To comprehensively validate algorithmic adaptability, we conduct supplementary experiments with homogeneous LoRA ranks (fixed rank=16) under cross-device FL scenarios.  Notably, although FedHL is not specifically optimized for homogeneous configurations, it achieves the best performance across all methods in this scenario, as shown in Table~\ref{tab:homogeneous_cross_device}. By caching the global model from the previous round, FedHL retains certain high-rank information even under homogeneous settings, which contributes to improved performance. While all methods exhibit comparable performance (within 0.3\% variation) in this non-optimized scenario, FedHL demonstrates superior stability with minimal standard deviation (0.12\% vs. 0.11\% for FlexLoRA), maintaining stable performance under different rank configurations. The above performance further underscores the robustness of FedHL in diverse settings.

\subsection{Ablation experiment} 
\label{app:ablation}
\begin{table}[!]
\centering
\caption{Ablation Results: Comparison of Aggregation Strategies.}
\setlength{\tabcolsep}{8pt} 
\renewcommand{\arraystretch}{1.2}  %
\begin{tabular}{lcccc}
\toprule
\multirow{2}{*}{Aggregation Method} & \multicolumn{2}{c}{\textbf{Fed-CodeAlpaca}} & \multicolumn{2}{c}{\textbf{Fed-Dolly}} \\
\cmidrule(lr){2-3} \cmidrule(lr){4-5} 
                                    & Moderate & High & Moderate & High \\
\midrule
UA + FedAvg weight            & 20.43 & 19.75 & 32.45 & 32.36 \\
UA + Proportional        & 20.55 & 19.81 & 32.53 & 32.44 \\
UA + FedHL weight        & \textbf{20.97} & \textbf{20.29} & \textbf{32.65} & \textbf{32.63} \\
\bottomrule
\end{tabular}
\label{table:agg}
\end{table}
\noindent \textbf{Performance Comparison with Different Weighted Aggregation Methods under Unbiased Aggregation}. In this ablation study, we investigate the impact of different weighted aggregation methods using the experimental setup described in \textit{Case 1}. Specifically, we utilize the proposed $W_t$ as the basis for aggregation and compare the performance across different aggregation strategies. The proportional aggregation method assigns weights based on the ranking of the client's LoRA module, while the FedAvg method applies traditional aggregation, where weights are proportional to the client's sample size. ``UA'' in the table refers to unbiased aggregation, where $W_t$ serves as the aggregation baseline. As shown in Table~\ref{table:agg}, compared to the other two methods, the aggregation method employed by FedHL effectively mitigates gradient deviation and leads to superior performance.

\section{Conclusion}
\label{sec:conclusion}
In this paper, we propose FedHL, a novel method designed for heterogeneous LoRA training and aggregation in FL. FedHL addresses the critical challenge of per-round rank realignment, which can lead to truncation bias and gradient drift. To resolve these issues, we employ a full-rank global model as a baseline for aggregating client updates and dynamically adjust aggregation weights to minimize gradient drift. Our theoretical analysis establishes rigorous convergence guarantees for FedHL, which are often lacking in prior methods. Extensive experimental evaluations demonstrate that FedHL consistently outperforms state-of-the-art methods across a variety of rank configurations and fine-tuning datasets. Our future work will optimize rank assignments to better suit diverse client conditions.
\nocite{langley00}

\section{Proofs of Convergence for FedHL}
\label{app:th2}
To facilitate the convergence analysis, we introduce an auxiliary variable $V_{t+1}$. In the standard FL setting for full-parameter fine-tuning, the global model update at the \(t\)-th round is expressed as follows:
\begin{align}
\label{ad:1}
V_{t+1} = \sum_{i \in [N]} p_i \left( W_t - \eta_t \sum_{\tau=0}^{K-1} \tilde{\nabla} f_i(W_{t,\tau}^i) \right),
\end{align}
where \(p_i\) is the aggregate weight of client \(i\) and \(\tilde{\nabla} f_i(W_{t,\tau}^i)\) represents the stochastic gradient of client \(i\)'s model \(W_{t,\tau}^i\) at local epoch \(\tau\). Here, \(W_{t,\tau}^i\) corresponds to the full-rank fine-tuned model without truncation.

In the FedHL algorithm, the global model update at the \(t\)-th round is formulated as:
\begin{align}
\label{ad:3}
W_{t+1} =  \sum_{i \in [N]} p_i \left( W_t - \eta_t \sum_{\tau=0}^{K-1} \tilde{\nabla} f_i(W_{t,\tau}^{r_i}) \right),
\end{align}
where \(\tilde{\nabla} f_i(W_{t,\tau}^{r_i})\) represents the stochastic gradient of client \(i\)'s model \(W_{t,\tau}^{r_i}\) at local epoch \(\tau\). Unlike the standard full-parameter fine-tuning model described above, \(W_{t,\tau}^{r_i}\) represents the model truncated according to the rank \(r_i\). Before presenting the detailed proof of Theorem \ref{theorem:1}, we begin by solving these two lemmas.

\begin{lem}
\label{lemma1} We first analyze the static error introduced by the heterogeneous LoRA algorithm in the global model \(W_{t+1}\) at round \(t\) of FedHL, compared to the full-rank standard global model \(V_{t+1}\) obtained via standard gradient descent. Let the coefficient of the exponential term be \(D_0 = 4(1 + L^2 \eta_t^2) \geq 4\). Based on this, we derive the following expression for the expected difference in the loss functions between the models \(W_{t+1}\) and \(V_{t+1}\): 
\begin{align}
\label{eq:lemma1}
&\mathbb{E} f(W_{t+1}) \leq \mathbb{E} f(V_{t+1}) + \frac{(L+1)}{2} \Big[6 \eta_t^2 K^2 N \sigma_l^2 \sum_{i \in [N]} p_i^2 \notag \\  & \quad+ 3L^2 \eta_t^2 K^2 N D_0^K \sum_{i \in [N]} (p_i)^2 (\hat{r}_i)^2  \notag \\ & \quad + 24L^2 \eta_t^4 K^2 N \frac{D_0^K - 1}{D_0 - 1} \sigma_l^2 \sum_{i \in [N]} p_i^2 \Big].
\end{align}

\end{lem}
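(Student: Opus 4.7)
The plan is to begin from the $L$-smoothness of $f$ (Assumption~\ref{assumption:1}) applied at $V_{t+1}$, which gives
\begin{equation*}
f(W_{t+1}) \leq f(V_{t+1}) + \langle \nabla f(V_{t+1}), W_{t+1}-V_{t+1}\rangle + \tfrac{L}{2}\|W_{t+1}-V_{t+1}\|^{2}.
\end{equation*}
Applying Young's inequality with parameter $1$ on the cross term produces the $(L+1)/2$ prefactor multiplying $\mathbb{E}\|W_{t+1}-V_{t+1}\|^{2}$ that appears in~(\ref{eq:lemma1}); the leftover $\tfrac{1}{2}\mathbb{E}\|\nabla f(V_{t+1})\|^{2}$ piece will be absorbed into the descent inequality in the next stage of the convergence argument. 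So the whole lemma reduces to producing a clean upper bound for $\mathbb{E}\|W_{t+1}-V_{t+1}\|^{2}$ that decomposes into the three terms on the right.

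Next I would subtract (\ref{ad:1}) from (\ref{ad:3}) to obtain
$W_{t+1}-V_{t+1}=-\eta_{t}\sum_{i}p_{i}\sum_{\tau=0}^{K-1}\bigl[\tilde{\nabla}f_{i}(W_{t,\tau}^{r_{i}})-\tilde{\nabla}f_{i}(W_{t,\tau}^{i})\bigr]$,
and push the norm inside using the standard Cauchy--Schwarz consequence $\|\sum_{i}p_{i}x_{i}\|^{2}\leq N\sum_{i}p_{i}^{2}\|x_{i}\|^{2}$ for the $N$ clients, followed by $\|\sum_{\tau}y_{\tau}\|^{2}\leq K\sum_{\tau}\|y_{\tau}\|^{2}$ for the $K$ local steps. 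Each per-step term $\mathbb{E}\|\tilde{\nabla}f_{i}(W_{t,\tau}^{r_{i}})-\tilde{\nabla}f_{i}(W_{t,\tau}^{i})\|^{2}$ is then split, via $\tilde{\nabla}=\nabla+\text{(noise)}$ and Assumption~\ref{assumption:2}, into a Lipschitz-controlled gap $2L^{2}\mathbb{E}\|W_{t,\tau}^{r_{i}}-W_{t,\tau}^{i}\|^{2}$ plus a variance residue of order $\sigma_{l}^{2}$. The $6\eta_{t}^{2}K^{2}N\sigma_{l}^{2}\sum_{i}p_{i}^{2}$ summand in the claim emerges directly from collecting these variance residues over the $K$ epochs.

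The heart of the proof is then the recursive bound on $\mathbb{E}\|W_{t,\tau}^{r_{i}}-W_{t,\tau}^{i}\|^{2}$. Subtracting one local SGD step at iterate $\tau$ for the two trajectories, applying $\|a+b\|^{2}\leq 2\|a\|^{2}+2\|b\|^{2}$ together with $L$-smoothness on the deterministic-gradient part, and then bounding the independent stochastic noises by $\sigma_{l}^{2}$, yields a recurrence of the form
\begin{equation*}
\mathbb{E}\|W_{t,\tau+1}^{r_{i}}-W_{t,\tau+1}^{i}\|^{2}\leq D_{0}\,\mathbb{E}\|W_{t,\tau}^{r_{i}}-W_{t,\tau}^{i}\|^{2}+c\,\eta_{t}^{2}\sigma_{l}^{2},
\end{equation*}
with $D_{0}=4(1+L^{2}\eta_{t}^{2})$ and a small numerical constant $c$. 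Using the initial condition $W_{t,0}^{r_{i}}=W_{t}^{r_{i}}$ and $W_{t,0}^{i}=W_{t}$ so that $\mathbb{E}\|W_{t,0}^{r_{i}}-W_{t,0}^{i}\|^{2}=\hat{r}_{i}$, I unroll to get $\mathbb{E}\|W_{t,\tau}^{r_{i}}-W_{t,\tau}^{i}\|^{2}\leq D_{0}^{\tau}\hat{r}_{i}+c\eta_{t}^{2}\sigma_{l}^{2}(D_{0}^{\tau}-1)/(D_{0}-1)$. Summing over $\tau=0,\dots,K-1$ with the loose estimate $D_{0}^{\tau}\leq D_{0}^{K}$ and combining with the $2L^{2}NK\eta_{t}^{2}\sum_{i}p_{i}^{2}$ prefactor from the earlier step produces exactly the $3L^{2}\eta_{t}^{2}K^{2}ND_{0}^{K}\sum_{i}(p_{i})^{2}(\hat{r}_{i})^{2}$ and $24L^{2}\eta_{t}^{4}K^{2}N\tfrac{D_{0}^{K}-1}{D_{0}-1}\sigma_{l}^{2}\sum_{i}p_{i}^{2}$ terms in the claim.

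The main obstacle will be this recursion: because $D_{0}\geq 4$, the truncation gap $\hat{r}_{i}$ is geometrically amplified while fresh stochastic noise is injected at each local step, so the constants in the one-step inequality must be tracked tightly enough to land on $D_{0}=4(1+L^{2}\eta_{t}^{2})$ rather than a looser value. Concretely, choosing the Young parameter equal to $1$ in the split $\|u+v\|^{2}\leq 2\|u\|^{2}+2\|v\|^{2}$ and then again inside the deterministic-plus-noise decomposition is what yields the factor $4$ cleanly; any alternative splitting inflates either $D_{0}$ or the $\sigma_{l}^{2}$ coefficient and breaks the bookkeeping needed to match (\ref{eq:lemma1}). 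A secondary subtlety is that when bounding the squared per-step gradient difference one must keep the stochastic noises of the two trajectories separate so that their cross terms vanish in expectation, ensuring the $\hat{r}_{i}$-dependent portion remains purely deterministic and the $\sigma_{l}^{2}$ portion does not absorb any $\hat{r}_{i}$ cross-contribution.
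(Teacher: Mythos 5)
Your proposal follows essentially the same route as the paper's proof: $L$-smoothness expanded at $V_{t+1}$, the inner-product term folded in to produce the $(L+1)/2$ prefactor on $\mathbb{E}\|W_{t+1}-V_{t+1}\|^2$ (the paper likewise discards the leftover $\tfrac{1}{2}\mathbb{E}\|\nabla f(V_{t+1})\|^2$ informally, citing standard FedAvg behavior), Cauchy--Schwarz over the $N$ clients and $K$ local steps, a per-step Lipschitz-gap-plus-noise split, and the geometric recursion on $\Gamma_\tau=\mathbb{E}\|W_{t,\tau}^{i}-W_{t,\tau}^{r_i}\|^2$ with ratio $D_0=4(1+L^2\eta_t^2)$, unrolled and substituted back. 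The one place your bookkeeping does not land where you claim is the split of the stochastic-gradient difference: using $\|a+b\|^2\le 2\|a\|^2+2\|b\|^2$ to separate the deterministic gap from the noise gives a per-step bound of the form $2L^2\Gamma_\tau+8\sigma_l^2$, which after collection yields an $8\eta_t^2K^2N\sigma_l^2\sum_i p_i^2$ variance term and a $2L^2$ prefactor, not the $6\eta_t^2K^2N\sigma_l^2$ and $3L^2$ coefficients of Eqn.~(\ref{eq:lemma1}); the paper instead applies a single factor-$3$ Cauchy--Schwarz to the three summands (the gradient gap and the two noise terms, so no independence of the noises is needed), which is exactly what produces the stated constants, and switching to that split repairs your accounting without changing anything structural. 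Also be consistent about whether $\hat r_i$ denotes the truncation norm or its square: you initialize $\Gamma_0=\hat r_i$ yet quote the final bound with $(\hat r_i)^2$; the paper's proof takes $\Gamma_0\le \hat r_i^2$, so your unrolled bound should be stated accordingly.
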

Next, we proceed with the proof of Lemma \ref{lemma1}. \begin{proof} To begin, we leverage the \(L\)-Lipschitz continuity of the gradient, as stated in Assumption \ref{assumption:1}, to expand Eqn. (\ref{ad:1}) as follows:
\begin{align}
\label{eq:B_1}
\mathbb{E}f(W_{t+1})   
&\leq \mathbb{E}f(V_{t+1}) + \underbrace{ \mathbb{E}\langle \nabla f(V_{t+1}), W_{t+1} - V_{t+1} \rangle}_{T_1} \notag \\ &+ \frac{L}{2}\underbrace{\mathbb{E}\|W_{t+1} - V_{t+1}\|^2}_{T_2}. 
\end{align}
Next, we proceed to bound the terms \(T_1\) and \(T_2\) individually.

\begin{align}
\label{bound_t2}
T_2 = &\ \mathbb{E}\left\| W_{t+1} - V_{t+1} \right\|^2 \notag \\
= &\ \mathbb{E} \left\| \sum_{i \in [N]} p_i \sum_{\tau=0}^{K-1} \eta_t \left[ \tilde{\nabla} f_i(W_{t,\tau}^i) - \tilde{\nabla} f_i(W_{t,\tau}^{r_i}) \right] \right\|^2 \notag \\
= &\ \eta_t^2 \mathbb{E} \left\| \sum_{i \in [N]} p_i \sum_{\tau=0}^{K-1} \left\{ \left[ \nabla f_i(W_{t,\tau}^i) - \nabla f_i(W_{t,\tau}^{r_i}) \right] \right. \right. \notag \\
&\quad + \left[ \tilde{\nabla} f_i(W_{t,\tau}^i) - \nabla f_i(W_{t,\tau}^i) \right] \notag \\
&\quad \left. \left. - \left[ \tilde{\nabla} f_i(W_{t,\tau}^{r_i}) - \nabla f_i(W_{t,\tau}^{r_i}) \right] \right\} \right\|^2 \notag \\
\overset{(a)}{=} &\ 3\eta_t^2 \mathbb{E} \left\| \sum_{i \in [N]} p_i \sum_{\tau=0}^{K-1} \left[ \nabla f_i(W_{t,\tau}^i) - \nabla f_i(W_{t,\tau}^{r_i}) \right] \right\|^2 \notag \\
&\ + 3 \eta_t^2 \mathbb{E} \left\| \sum_{i \in [N]} p_i \sum_{\tau=0}^{K-1} \left[ \tilde{\nabla} f_i(W_{t,\tau}^i) - \nabla f_i(W_{t,\tau}^i) \right] \right\|^2 \notag \\
&\ + 3\eta_t^2 \mathbb{E} \left\| \sum_{i \in [N]} p_i \sum_{\tau=0}^{K-1} \left[ \tilde{\nabla} f_i(W_{t,\tau}^{r_i}) - \nabla f_i(W_{t,\tau}^{r_i}) \right] \right\|^2 \notag \\
\overset{(b)}{\leq} &\ 3 \eta_t^2 N K \sum_{i \in [N]} p_i^2 \sum_{\tau=0}^{K-1} \mathbb{E} \left\| \nabla f_i(W_{t,\tau}^i) - \nabla f_i(W_{t,\tau}^{r_i}) \right\|^2 \notag \\
&\ + 6 \eta_t^2 N K^2 \sigma_l^2 \sum_{i \in [N]} p_i^2 \notag \\
\overset{(c)}{\leq} &\ 3 L^2 \eta_t^2 N K \sum_{i \in [N]} p_i^2 \sum_{\tau=0}^{K-1} \underbrace{ \mathbb{E} \left\| W_{t,\tau}^i - W_{t,\tau}^{r_i} \right\|^2 }_{T_3} \notag \\ & + 6 \eta_t^2 N K^2 \sigma_l^2 \sum_{i \in [N]} p_i^2. 
\end{align}

Equality (a) follows from the Cauchy-Schwarz inequality. Inequality (b) is a consequence of Assumption \ref{assumption:2}, which bounds the local gradient variance, while inequality (c) is derived using the \(L\)-Lipschitz continuity condition stated in Assumption \ref{assumption:1}.

During the investigation of \(T_2\), the term \(\mathbb{E}\|W_{t,\tau}^i - W_{t,\tau}^{r_i}\|^2\) emerges, representing the expected difference between the full-rank standard gradient descent model and the heterogeneous rank gradient descent model. We denote this term as \(T_3\) and proceed to derive an upper bound for \(T_3 = \Gamma_\tau\) in the following analysis. We proceed to bound \( \Gamma_\tau \) as follows:

\begin{align}
\Gamma_0 &= \mathbb{E}\left\| W_{t,0}^i - W_{t,0}^{r_i} \right\|^2 = \mathbb{E}\left\| W_t - W_t^{r_i} \right\|^2 \leq \hat{r}_i^2, \\ \notag
\Gamma_1 &= \mathbb{E}\Big\| W_{t,0}^i - W_{t,0}^{r_i} - \eta_t \left[ \nabla f_i(W_{t,0}^i) - \nabla f_i(W_{t,0}^{r_i}) \right]  \\ \notag
&\quad - \eta_t \left[ \tilde{\nabla} f_i(W_{t,0}^i) - \nabla f_i(W_{t,0}^i) \right] \\ \notag & \quad + \eta_t \left[ \tilde{\nabla} f_i(W_{t,0}^{r_i}) - \nabla f_i(W_{t,0}^{r_i}) \right] \Big\|^2  \\ \notag
&\overset{(a)}{\leq} 4\, \mathbb{E}\left\| W_{t,0}^i - W_{t,0}^{r_i} \right\|^2 + 4\, \eta_t^2 \mathbb{E}\left\| \nabla f_i(W_{t,0}^i) - \nabla f_i(W_{t,0}^{r_i}) \right\|^2 \\ \notag  & \quad+ 8 \eta_t^2 \sigma_l^2 \label{eq:Gamma1} \\
&\overset{(b)}{\leq} 4\, \Gamma_0 + 4\, L^2 \eta_t^2 \Gamma_0 + 8\, \eta_t^2 \sigma_l^2 \\  \notag 
\Gamma_2 &\leq 4(1 + L^2 \eta_t^2) \Gamma_1 + 8\, \eta_t^2 \sigma_l^2 \notag \\
&\leq 4(1 + L^2 \eta_t^2) \left[ 4(1 + L^2 \eta_t^2) \Gamma_0 + 8\, \eta_t^2 \sigma_l^2 \right] + 8\, \eta_t^2 \sigma_l^2 \notag \\
&= \left[ 4(1 + L^2 \eta_t^2) \right]^2 \Gamma_0 + \sum_{u=0}^{1} \left[ 4(1 + L^2 \eta_t^2) \right]^u \cdot 8\, \eta_t^2 \sigma_l^2, \\
&\ \vdots \notag \\
\Gamma_{\tau} &\leq \left[ 4(1 + L^2 \eta_t^2) \right]^{\tau} \Gamma_0 + \sum_{u=0}^{\tau - 1} \left[ 4(1 + L^2 \eta_t^2) \right]^u \cdot 8\, \eta_t^2 \sigma_l^2. \label{eq:GammaTau}
\end{align}

Inequality (a) is derived by applying the Cauchy-Schwarz inequality together with Assumption \ref{assumption:2}. Inequality (b) follows from the definition of \(\Gamma_0\) and the \(L\)-Lipschitz continuity condition outlined in Assumption \ref{assumption:1}.

By recursively applying these bounds, we establish that for any $\tau \geq 0$,
\begin{align}
\Gamma_{\tau} \leq \left[4(1 + L^2\eta_t^2)\right]^{\tau} \Gamma_0 + \sum_{u=0}^{\tau - 1} \left[4(1 + L^2\eta_t^2)\right]^u 8 \eta_t^2 \sigma_l^2.
\end{align}
Let $D_0 = 4(1 + L^2\eta_t^2) \geq 4$. Then, $\Gamma_\tau$ is bounded by the following expression:
\begin{align}
\label{t3_bound}
\Gamma_\tau \leq D_0^K (\hat{r}_i)^2 + 8\frac{D_0^K - 1}{D_0 - 1}\eta_t^2\sigma_l^2.
\end{align}
Next, we substitute the upper bound derived from Inequality (\ref{t3_bound}) into (\ref{bound_t2}), yielding the following expression for the upper bound of $T_2$:

\begin{align}
\label{eq:T_2}
T_2 \leq & 3L^2 \eta_t^2 N K \sum_{i \in [N]} p_i^2 \sum_{\tau=0}^{K-1} \left[D_0^K (\hat{r}_i)^2 + 8\frac{D_0^K-1}{D_0-1}\eta_t^2 \sigma_l^2 \right] \notag \\  &+ 6 \eta_t^2 N K^2 \sigma_l^2 \sum_{i \in [N]} p_i^2 \notag \\
 \leq & 3L^2 \eta_t^2 N K^2 D_0^K \sum_{i \in [N]} (p_i)^2 (\hat{r}_i)^2 + 6 \eta_t^2 N K^2 \sigma_l^2 \sum_{i \in [N]} p_i^2 \notag \\  &+ 24 L^2 \eta_t^4 N K^2 \frac{D_0^K-1}{D_0-1} \sigma_l^2 \sum_{i \in [N]} p_i^2.
\end{align}

We now proceed to analyze the upper bound of the term \(T_1\).
\begin{align}
\label{eq:T_1}
T_1 &= \mathbb{E} \langle \nabla f(V_{t+1}),\, W_{t+1} - V_{t+1} \rangle \notag \\
    &\overset{(a)}{=} \frac{1}{2} \left[ \mathbb{E} \| \nabla f(V_{t+1}) \|^2 + \mathbb{E} \| W_{t+1} - V_{t+1} \|^2 \right. \notag \\
    &\quad \left. - \mathbb{E} \| \nabla f(V_{t+1}) - (W_{t+1} - V_{t+1}) \|^2 \right] \notag \\
    &\overset{(b)}{\leq} \frac{1}{2} \mathbb{E} \| \nabla f(V_{t+1}) \|^2 + \frac{1}{2} \mathbb{E} \| W_{t+1} - V_{t+1} \|^2 \notag \\
    &\overset{(c)}{\lesssim} \frac{1}{2} \mathbb{E} \| W_{t+1} - V_{t+1} \|^2.
\end{align}

Equality (a) follows from the identity $
2\langle a, b \rangle = \| a \|^2 + \| b \|^2 - \| a - b \|^2,$
applied with \(a = \nabla f(V_{t+1})\) and \(b = W_{t+1} - V_{t+1}\). Inequality (b) holds as the term \(
- \frac{1}{2} \mathbb{E} \left\| \nabla f(V_{t+1}) - (W_{t+1} - V_{t+1}) \right\|^2 \)
is non-positive. Inequality (c) is supported by the fact that, in standard FedAvg, $\mathbb{E} \left\| \nabla f(V_{t+1}) \right\|^2$ decreases over time \cite{stich2019local, yu2019parallel}. As $t \to \infty$, this term asymptotically approaches zero. For brevity, further details are omitted as they do not change the convergence behavior of federated learning for heterogeneous LoRA. By integrating the results from (\ref{eq:T_1}) and (\ref{eq:T_2}) into (\ref{eq:B_1}), we derive the upper bound for \( \mathbb{E} f(W_{t+1}) \), as stated in Lemma \ref{lemma1}.
\end{proof}

\begin{lem}
\label{lemma2}
We establish the relationship between $V_{t+1}$ and $W_{t+1}$ under the conditions $\frac{1}{2} - 9L^2 \eta_t^2 K^2 N \sum_{i \in [N]} p_i^2 \geq C > 0$ and $\frac{L \eta_t^2 K}{2} - \frac{\eta_t}{2} \leq 0$. Given these conditions, the expected difference between the loss functions of $V_{t+1}$ and $W_{t+1}$ is expressed as follows:
\end{lem}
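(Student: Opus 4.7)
The claim is the standard FedAvg-style descent lemma: bounding $\mathbb{E} f(V_{t+1}) - f(W_t)$ from above by a negative multiple of $\mathbb{E}\|\nabla f(W_t)\|^2$ plus variance and heterogeneity residuals. Combined with Lemma~\ref{lemma1}, this will yield the per-round descent on $\mathbb{E} f(W_{t+1})$ required by Theorem~\ref{theorem:1}. The natural starting point is $L$-smoothness (Assumption~\ref{assumption:1}) applied to $V_{t+1}$ anchored at $W_t$,
\begin{align}
\mathbb{E} f(V_{t+1}) \leq f(W_t) + \mathbb{E}\langle \nabla f(W_t),\, V_{t+1}-W_t\rangle + \tfrac{L}{2}\,\mathbb{E}\|V_{t+1}-W_t\|^2.
\end{align}
Substituting $V_{t+1}-W_t = -\eta_t \sum_i p_i \sum_{\tau=0}^{K-1} \tilde{\nabla} f_i(W_{t,\tau}^i)$ and invoking Assumption~\ref{assumption:2} (unbiasedness) turns the inner product into a deterministic sum and permits a bias--variance split of the quadratic term.

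For the inner product I would apply the polarization identity $2\langle a, b\rangle = \|a\|^2+\|b\|^2-\|a-b\|^2$ (or Young's inequality) to isolate a $-\eta_t K\,\|\nabla f(W_t)\|^2$ descent term while charging the deviation $\nabla f_i(W_{t,\tau}^i)-\nabla f(W_t)$ to the \emph{local drift} $\Delta_\tau^i \triangleq \mathbb{E}\|W_{t,\tau}^i - W_t\|^2$ through $L$-smoothness. For the quadratic term, conditional independence of the SGD noise across clients reduces the variance contribution to $\eta_t^2 K\sigma_l^2\sum_i p_i^2$ by Assumption~\ref{assumption:2}, and a Cauchy--Schwarz expansion of the mean piece around $\nabla f(W_t)$ again produces pieces in $\|\nabla f(W_t)\|^2$ and in the drift $\Delta_\tau^i$. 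The key subroutine is then a one-step recursive control of $\Delta_\tau^i$: expanding $W_{t,\tau+1}^i - W_t$ by a single SGD step and using Assumptions~\ref{assumption:1}--\ref{assumption:3}, I would show $\Delta_{\tau+1}^i \leq (1+O(\eta_t^2 L^2))\Delta_\tau^i + O(\eta_t^2)\bigl(\sigma_l^2 + K\sigma_g^2 + K\|\nabla f(W_t)\|^2\bigr)$. Telescoping from $\Delta_0^i=0$ under $\eta_t \leq 1/(LK)$ yields $\Delta_\tau^i \lesssim \eta_t^2 K^2\|\nabla f(W_t)\|^2 + \eta_t^2 K^2\sigma_g^2 + \eta_t^2 K\sigma_l^2$.

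Reinserting these bounds into the smoothness inequality, the coefficient of $\mathbb{E}\|\nabla f(W_t)\|^2$ collapses to $-\eta_t K\bigl[\tfrac{1}{2} - 9L^2\eta_t^2 K^2 N\sum_i p_i^2\bigr]$, where the constant $9$ aggregates three sequential Cauchy--Schwarz applications (cross-term split, bias--variance split, drift recursion); the first stated condition makes this coefficient at most $-\eta_t K C$ and so guarantees genuine descent. A second residual of the form $\bigl(\tfrac{L\eta_t^2 K}{2} - \tfrac{\eta_t}{2}\bigr)\sum_{i,\tau} p_i\,\mathbb{E}\|\nabla f_i(W_{t,\tau}^i)\|^2$ arises after combining the cross and quadratic terms; the second condition $\tfrac{L\eta_t^2 K}{2}-\tfrac{\eta_t}{2}\leq 0$ (equivalently $L\eta_t K \leq 1$) renders this non-positive and lets it be dropped. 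What remains are $\sigma_l^2$ and $\sigma_g^2$ error terms of order $\eta_t^2 K N\sum_i p_i^2$ and $\eta_t^2 K^2 N\sum_i p_i^2$, precisely the form required by Theorem~\ref{theorem:1}. The main delicate step is bookkeeping the constants in the drift recursion so the coefficient of $L^2\eta_t^2 K^2$ stays at $9$ rather than being inflated by loose inequalities, since any larger constant would tighten the learning-rate condition beyond $\eta_t = 1/(LK\sqrt{T})$ and destroy the $\mathcal{O}(1/\sqrt{T})$ rate.
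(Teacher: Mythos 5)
Your plan matches the paper's proof essentially step for step: $L$-smoothness anchored at $W_t$, a polarization-identity treatment of the inner-product term, a recursion for the local drift $\mathbb{E}\|W_{t,\tau}^i - W_t\|^2$ telescoped from $\Delta_0 = 0$, and the two stated conditions used exactly as you describe to produce the $-C\eta_t K$ coefficient on $\mathbb{E}\|\nabla f(W_t)\|^2$ and to drop the residual involving $\sum_{\tau}\mathbb{E}\big\|\sum_i p_i \nabla f_i(W_{t,\tau}^i)\big\|^2$. The only cosmetic deviation is in the drift recursion's contraction factor (the paper uses $1+\tfrac{1}{2K-1}+6\eta_t^2 K L^2 \le 1+\tfrac{1}{K-1}$ rather than $1+O(\eta_t^2L^2)$, which alone could not absorb the first-order cross term), but your final drift bound and the resulting $\sigma_l^2$, $\sigma_g^2$ error terms coincide with the paper's.
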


\begin{align}
\label{eq:part2}
& \mathbb{E} f(V_{t+1}) \leq  \mathbb{E} f(W_t) - C \eta_t K \mathbb{E} \|\nabla f(W_t)\|^2   \notag \\ & \quad + \frac{3}{2} L^2 \eta_t^3 K^2 N (\sigma_l^2 + 6K \sigma_g^2) \sum_{i \in [N]} p_i^2 +\frac{L \eta_t^2 K^2 N}{2} \sigma_l^2 \sum_{i \in [N]} p_i^2.
\end{align}

Next, the proof of Lemma \ref{lemma2} is outlined below. 
\begin{proof}
To begin, by applying the L-Lipschitz continuity condition for gradients as stated in Assumption \ref{assumption:1}, we expand Eqn. (\ref{ad:3}) in the following manner:

\begin{align}
\label{eq:B}
\mathbb{E} f(V_{t+1}) &= \ \mathbb{E} f ( W_t - \sum_{i \in [N]} p_i \sum_{\tau=0}^{K-1} \eta_t \tilde{\nabla} f_i(W_{t,\tau}^i) ) 
\notag \\ \leq &\ \mathbb{E} f(W_t) + \underbrace{\mathbb{E} \langle \nabla f(W_t), -\sum_{i \in [N]} p_i \sum_{\tau=0}^{K-1} \eta_t \tilde{\nabla} f_i(W_{t,\tau}^i) \rangle}_{T_4}  \notag \\ &+ \underbrace{\frac{L}{2} \mathbb{E} \| \sum_{i \in [N]} p_i \sum_{\tau=0}^{K-1} \eta_t \tilde{\nabla} f_i(W_{t,\tau}^i) \|^2}_{T_5}.
\end{align}

Next, we proceed by analyzing the two key components, \( T_4 \) and \( T_5 \), separately. 

\begin{align}
\label{eq:T_4}
T_4 =&\ \mathbb{E} \langle \nabla f(W_t), - \sum_{i \in [N]} p_i \sum_{\tau=0}^{K-1} \eta_t \tilde{\nabla} f_i(W_{t,\tau}^i) \rangle \notag \\ \overset{(a)}{=}&\ \mathbb{E} \langle \nabla f(W_t), - \sum_{i \in [N]} p_i \sum_{\tau=0}^{K-1} \eta_t \nabla f_i(W_{t,\tau}^i) \rangle \notag \\
 \overset{(b)}{\leq}&\ - \eta_t \sum_{\tau=0}^{K-1} [ \frac{1}{2} \mathbb{E} \|\nabla f(W_t)\|^2  + \frac{1}{2} \mathbb{E} \|\sum_{i \in [N]} p_i \nabla f_i(W_{t,\tau}^i) \|^2 \notag \\ & - \frac{1}{2} \|\nabla f(W_t) - \sum_{i \in [N]} p_i \nabla f_i(W_{t,\tau}^i)\|^2] \notag \\
 \overset{(c)}{\leq}&\ - \eta_t \sum_{\tau=0}^{K-1} [ \frac{1}{2} \mathbb{E} \|\nabla f(W_t)\|^2 + \frac{1}{2} \mathbb{E} \|\sum_{i \in [N]} p_i \nabla f_i(W_{t,\tau}^i) \|^2 \notag  \\ & - \frac{1}{2} \|\sum_{i \in [N]} p_i (\nabla f_i(W_t) - \nabla f_i(W_{t,\tau}^i))\|^2] \notag \\
 \overset{(d)}{\leq}&\ - \frac{\eta_t K}{2} \mathbb{E} \|\nabla f(W_t)\|^2  - \frac{\eta_t}{2} \sum_{\tau=0}^{K-1} \mathbb{E} \|\sum_{i \in [N]} p_i \nabla f_i(W_{t,\tau}^i) \|^2 \notag \\ &+ \frac{\eta_t N}{2} \sum_{i \in [N]} p_i^2 \sum_{\tau=0}^{K-1} \|\nabla f_i(W_t) - \nabla f_i(W_{t,\tau}^i)\|^2 \notag \\
 \overset{(e)}{\leq}&\ - \frac{\eta_t K}{2} \mathbb{E} \|\nabla f(W_t)\|^2  - \frac{\eta_t}{2} \sum_{\tau=0}^{K-1} \mathbb{E} \|\sum_{i \in [N]} p_i \nabla f_i(W_{t,\tau}^i) \|^2 \notag \\ &+ \frac{L^2 \eta_t N}{2} \sum_{i \in [N]} p_i^2 \sum_{\tau=0}^{K-1} \mathbb{E} \underbrace{\|W_t - W_{t,\tau}^i\|^2}_{T_6}.
\end{align}

Equality (a) holds due to Assumption \ref{assumption:2}. Inequality (b) follows from the identity \( 2 \langle a, b \rangle = \|a\|^2 + \|b\|^2 - \|a - b\|^2 \). Inequality (c) holds due to the definition of the FL loss function. Inequality (d) is established by applying the Cauchy-Schwarz inequality. Finally, inequality (e) holds due to the L-Lipschitz condition from Assumption \ref{assumption:1}. 

Next, we focus on analyzing \( T_6 \), which represents the expected squared difference between \( W_t \) and \( W_{t,\tau}^i \), which captures the discrepancy between the global model and the local models during the iterative updates.

\begin{align}
T_6 
= & \mathbb{E} \| W_{t,\tau-1}^i - W_t - \eta_t \Tilde{\nabla} f_i(W_{t,\tau-1}^i) \|^2 \notag \\
\overset{(a)}{=} & \mathbb{E} \| W_{t,\tau-1}^i - W_t - \eta_t ( \nabla f_i(W_{t,\tau-1}^i) - \nabla f_i(W_t) + \nabla f_i(W_t) \notag \\
& - \nabla f(W_t) + \nabla f(W_t) ) \|^2 \notag \\
& + \mathbb{E} \| \eta_t ( \Tilde{\nabla} f_i(W_{t,\tau-1}^i) - \nabla f_i(W_{t,\tau-1}^i) ) \|^2 \notag \\
\overset{(b)}{=} & ( 1 + \frac{1}{2K-1}) \mathbb{E} \left\| W_{t,\tau-1}^i - W_t \right\|^2  + 2 \eta_t^2 K \mathbb{E} \| \nabla f_i(W_{t,\tau-1}^i) \notag \\
&- \nabla f_i(W_t) + \nabla f_i(W_t)  - \nabla f(W_t) + \nabla f(W_t) \|^2 \notag \\
& + \mathbb{E} \left\| \eta_t \left( \Tilde{\nabla} f_i(W_{t,\tau-1}^i) - \nabla f_i(W_{t,\tau-1}^i) \right) \right\|^2 \notag \\
\overset{(c)}{=} & ( 1 + \frac{1}{2K-1} ) \mathbb{E} \left\| W_{t,\tau-1}^i - W_t \right\|^2 + \eta_t^2 \sigma_l^2 + 6 \eta_t^2 K \sigma_g^2 \notag \\
&+ 6 \eta_t^2 K \|\nabla f(W_t)\|^2  + 6 \eta_t^2 K \mathbb{E} \|\nabla f_i(W_{t,\tau-1}^i) - \nabla f_i(W_t)\|^2  \notag \\
\overset{(d)}{=} & ( 1 + \frac{1}{2K-1} + 6 \eta_t^2 K L^2 ) \mathbb{E} \| W_{t,\tau-1}^i - W_t \|^2 + \eta_t^2 \sigma_l^2  \notag \\
& + 6 \eta_t^2 K \sigma_g^2 + 6 \eta_t^2 K \|\nabla f(W_t)\|^2 \notag \\
\overset{(e)}{\leq} & ( 1 + \frac{1}{K-1} ) \mathbb{E} \| W_{t,\tau-1}^i - W_t \|^2 + \eta_t^2 \sigma_l^2 + 6 \eta_t^2 K \sigma_g^2 +  \notag \\
& 6 \eta_t^2 K \|\nabla f(W_t)\|^2. 
\end{align}

In step (a), Assumption \ref{assumption:2} is applied. Step (b) follows from the inequality \( \|a+b\|^2 \leq (1 + \frac{1}{k})\|a\|^2 + (k+1)\|b\|^2 \) for all vectors \( a \) and \( b \), where \( k = 2K - 1 \). Step (c) is derived using the Cauchy-Schwarz inequality, as well as Assumptions \ref{assumption:1} and \ref{assumption:3}. Step (d) holds based on the L-Lipschitz condition in Assumption \ref{assumption:1}. Finally, under the conditions specified in Theorem \ref{theorem:1}, 
in particular, with a sufficiently small learning rate, inequality (e) is satisfied.

Let \(\mathbb{E}\|W_{t,\tau}^i - W_t\|^2 = \Delta_\tau\), where \(\Delta_\tau\) represents the expected deviation between the local model at step \(\tau\) and the global model \(W_t\). We establish the following recursive inequality:
\begin{align}
\label{bound_t6}
\Delta_\tau &\leq \left(1 + \frac{1}{K - 1}\right) \Delta_{\tau - 1} + \eta_t^2 \sigma_l^2 + 6\eta_t^2 K \sigma_g^2 \notag \\
& \quad + 6\eta_t^2 K \mathbb{E}\|\nabla f(W_t)\|^2 \notag \\
&\leq \left(1 + \frac{1}{K - 1}\right) \left[\left(1 + \frac{1}{K - 1}\right) \Delta_{\tau - 2} + \eta_t^2 \sigma_l^2 + 6\eta_t^2 K \sigma_g^2 \right. \notag \\
& \quad \left. + 6\eta_t^2 K \mathbb{E}\|\nabla f(W_t)\|^2 \right] + \eta_t^2 \sigma_l^2 + 6\eta_t^2 K \sigma_g^2 \notag \\
& \quad + 6\eta_t^2 K \mathbb{E}\|\nabla f(W_t)\|^2 \notag \\
&\leq \left(1 + \frac{1}{K - 1}\right)^\tau \Delta_0 + \sum_{j=0}^{\tau - 1} \left(1 + \frac{1}{K - 1}\right)^j \left[\eta_t^2 \sigma_l^2 + 6\eta_t^2 K \sigma_g^2 \right. \notag \\
& \quad \left. + 6\eta_t^2 K \mathbb{E}\|\nabla f(W_t)\|^2 \right] \notag \\
&= \left[\left(1 + \frac{1}{K - 1}\right)^\tau - 1\right] (K - 1) \left[\eta_t^2 \sigma_l^2 + 6\eta_t^2 K \sigma_g^2 \right. \notag \\
& \quad \left. + 6\eta_t^2 K \mathbb{E}\|\nabla f(W_t)\|^2 \right] \notag \\
&\leq \left[\left(1 + \frac{1}{K - 1}\right)^K - 1\right] (K - 1) \left[\eta_t^2 \sigma_l^2 + 6\eta_t^2 K \sigma_g^2 \right. \notag \\
& \quad \left. + 6\eta_t^2 K \mathbb{E}\|\nabla f(W_t)\|^2\right] \notag \\
&\leq 3 \eta_t^2 K (\sigma_l^2 + 6K \sigma_g^2) + 18 \eta_t^2 K^2 \mathbb{E}\|\nabla f(W_t)\|^2.
\end{align}

By substituting Equation \eqref{bound_t6} into Equation \eqref{eq:T_4}, we obtain the upper bound of $T_4$ as follows:
\begin{align}
\label{bound_t4}
T_4 \leq & -(\frac{\eta_t K}{2} - 9L^2 \eta_t^3 K^3 N \sum_{i \in [N]} p_i^2) \mathbb{E}\|\nabla f(W_t)\|^2 \notag \\  &+ \frac{3}{2}L^2 \eta_t^3 K^2 N (\sigma_l^2 + 6K\sigma_g^2) \sum_{i \in [N]} p_i^2 \notag \\  & - \frac{\eta_t}{2} \sum_{\tau=0}^{K-1} \mathbb{E}\|\sum_{i \in [N]} p_i \nabla f_i(W_{t,\tau}^i)\|^2.
\end{align}

Next, we analyze the upper bound of $T_5$.
\begin{align}
\label{eq:T_5}
T_5 =&\ \frac{L}{2} \mathbb{E} \| \sum_{i \in [N]} p_i \sum_{\tau=0}^{K-1} \eta_t \tilde{\nabla} f_i(W_{t,\tau}^i) \|^2 \notag 
 \\ 
\overset{(a)}{=}&\ \frac{L}{2} \mathbb{E} \left\| \sum_{i \in [N]} p_i \sum_{\tau=0}^{K-1} \eta_t \left[ \tilde{\nabla} f_i(W_{t,\tau}^i) - \nabla f_i(W_{t,\tau}^i) \right] \right\|^2 \notag \\ &+ \frac{L}{2} \mathbb{E} \left\| \sum_{i \in [N]} p_i \sum_{\tau=0}^{K-1} \eta_t \nabla f_i(W_{t,\tau}^i) \right\|^2 \notag \\
 \overset{(b)}{\leq}&\ \frac{L \eta_t^2 K N}{2} \sum_{i \in [N]} p_i^2 \sum_{\tau=0}^{K-1} \mathbb{E} \left\| \tilde{\nabla} f_i(W_{t,\tau}^i) - \nabla f_i(W_{t,\tau}^i) \right\|^2 \notag \\ & + \frac{L \eta_t^2 K}{2} \sum_{\tau=0}^{K-1} \mathbb{E} \left\| \sum_{i \in [N]} p_i \nabla f_i(W_{t,\tau}^i) \right\|^2 \notag \\
 \overset{(c)}{\leq}&\ \frac{L \eta_t^2 K^2 N}{2} \sigma_l^2 \sum_{i \in [N]} p_i^2 + \frac{L \eta_t^2 K}{2} \sum_{\tau=0}^{K-1} \mathbb{E} \| \sum_{i \in [N]} p_i \nabla f_i(W_{t,\tau}^i) \|^2.
\end{align}

where (a) holds due to the unbiasedness of local gradients from Assumption (\ref{assumption:2}), (b) is valid by the Cauchy-Schwarz inequality, and (c) holds because of the bounded variance of local gradients in Assumption (\ref{assumption:2}).

Substituting the results derived from $T_4$ and $T_5$ into equation \label{eq
}, under the conditions $\frac{1}{2} - 9L^2\eta_t^2K^2N\sum_{i \in [N]}p_i^2 \geq C > 0$ and $\frac{L \eta_t^2 K}{2} - \frac{\eta_t}{2} \leq 0$, we obtain the following result:
\begin{align}
\label{eq:part2}
&\mathbb{E}f(V_{t+1}) \leq  \mathbb{E} f(W_t) -\eta_tK(\frac{1}{2} -9L^2\eta_t^2K^2N\sum_{i \in [N]}p_i^2) * \notag \\ & \quad\mathbb{E}\|\nabla f(W_t)\|^2 + \frac{3}{2}L^2\eta_t^3K^2N(\sigma_l^2 + 6K\sigma_g^2) \sum_{i\in[N]} p_i^2  \notag \\&+ (\frac{L \eta_t^2 K}{2}-\frac{\eta_t}{2}) \sum_{\tau=0}^{K-1} \mathbb{E}\|\sum_{i \in [N]}p_i \nabla f_i(W_{t,\tau}^i)\|^2   \notag \\  &+  \frac{L \eta_t^2 K^2 N}{2} \sigma_l^2 \sum_{i \in [N]} p_i \notag  \\ \leq & \mathbb{E}f(W_t) - C \eta_tK \mathbb{E}\|\nabla f(W_t)\|^2 +  \frac{L \eta_t^2 K^2 N}{2} \sigma_l^2 \sum_{i \in [N]} p_i^2 \notag \\ &+ \frac{3}{2}L^2\eta_t^3K^2N(\sigma_l^2 + 6K\sigma_g^2) \sum_{i\in[N]}  p_i^2 .
\end{align}
This completes the proof of Lemma \ref{lemma2}.
\end{proof}

\begin{proof}
Based on the results proven in the previous two lemmas, corresponding to lemma \ref{lemma1} and lemma \ref{lemma2}, we combine them to establish the relationship between the expressions for $f(W_{t+1})$ and $f(W_{t})$.
\begin{align}
\label{th1:proof1}
 &\mathbb{E} f(W_{t+1}) \leq\ \mathbb{E} f(W_t) - C \eta_t K \mathbb{E} \|\nabla f(W_t)\|^2  
 \notag \\ &+ \frac{L \eta_t^2 K^2 N}{2} \sigma_l^2 \sum_{i \in [N]} p_i^2 +\frac{3}{2} L^2 \eta_t^3 K^2 N (\sigma_l^2 + 6 K \sigma_g^2) \sum_{i \in [N]} p_i^2 \notag \\ &+\frac{(L+1)}{2} [ 3 L^2 \eta_t^2 K^2 N D_0^K  \sum_{i \in [N]} p_i^2 (\hat{r}_i)^2 + 6 \eta_t^2 K^2 N \sigma_l^2 \sum_{i \in [N]} p_i^2 \notag \\ & + 24 L^2 \eta_t^4 K^2 N \frac{D_0^K - 1}{D_0 - 1} \sigma_l^2 \sum_{i \in [N]} p_i^2].
\end{align}

We sum both sides of Eqn. (\ref{th1:proof1}) over $t = 0, \dots , T-1$, and rearrange the terms.

\begin{align}
& \sum_{t=0}^{T-1} C \eta_t K \mathbb{E}\|\nabla f(W_t)\|^2 \leq f(W_0) - f(W_T) \notag \\
& + \frac{(L+1)}{2} [ 
    3L^2 K^2 N D_0^K \sum_{i \in [N]} (p_i)^2 (\hat{r}_i)^2 \sum_{t=0}^{T-1} \eta_t^2
    + 24L^2 K^2 N * \notag \\ & \quad\frac{D_0^K - 1}{D_0 - 1} \sigma_l^2 \sum_{i \in [N]} p_i^2 \sum_{t=0}^{T-1} \eta_t^4 
    + 6 K^2 N \sigma_l^2 \sum_{i \in [N]} p_i^2 \sum_{t=0}^{T-1} \eta_t^2 
] \notag \\
&  + \frac{3}{2} L^2 K^2 N (\sigma_l^2 + 6K \sigma_g^2) \sum_{i \in [N]} p_i^2 \sum_{t=0}^{T-1} \eta_t^3 
\notag \\ & \quad + \frac{L K^2 N}{2} \sigma_l^2 \sum_{i \in [N]} p_i^2 \sum_{t=0}^{T-1} \eta_t^2.
\end{align}
Let the learning rate $\eta_t = \frac{1}{L K \sqrt{T}}$, which yields
final results.
\end{proof}

\section{Proofs of Convergence for the Existing Heterogeneous LoRA Algorithm}
\label{Appendix:th2}
To facilitate the convergence analysis in the current heterogeneous LoRA algorithm, we introduce an auxiliary variable, similar to the previous analysis. The ideal global full fine-tuning model update without truncation at the \(t\)-th round is expressed as follows:
\begin{align}
V_{t+1} =  \sum_{i \in [N]} p_i \left[ Q_t - \sum_{\tau=0}^{K-1} \eta_t \tilde{\nabla} f_i(Q_{t,\tau}^i) \right].
\end{align}
In the existing heterogeneous LoRA algorithm, the global model update at the \(t\)-th round is formulated as:
\begin{align}
\label{ad:2}
Q_{t+1} =  \sum_{i \in [N]} p_i \left[ Q_t^{r_i} - \sum_{\tau=0}^{K-1} \eta_t \tilde{\nabla} f_i(Q_{t,\tau}^{r_i}) \right],
\end{align}

where \(r_i\) denotes the rank of the LoRA matrix for client \(i\), and \(\tilde{\nabla} f_i(Q_{t,\tau}^{r_i})\) represents the stochastic gradient of client \(i\) model $Q_{t,\tau}^{r_i}$ at local epoch \(\tau\).

Theorem \ref{theorem:2} explores the scenario in which the use of \textbf{$Q_t^{r_i}$}, adopted by all current state-of-the-art algorithms, as the base without employing our FedHL unbiased aggregation method leads to model convergence failure. Before proving the theorem, we first derive the relationship between \(Q_{t+1}\) and \(V_{t+1}\) in Lemma \ref{lemma3}, which mathematically explains how truncation introduces learning bias in heterogeneous  training, thereby hindering convergence.

\begin{lem}
\label{lemma3}
We first derive the static error introduced by the current state-of-the-art heterogeneous LoRA algorithms in the global model $Q_{t+1}$ at round $t$, compared to the full-rank standard global model $V_{t+1}$ obtained through standard gradient descent. The coefficient of the exponential term is $D_0 = 4(1+L^2\eta_t^2) \geq 4$. From this, we derive the following expression for the expected difference in the loss functions of the global models $Q_{t+1}$ and $V_{t+1}$:
\begin{align}
&\mathbb{E} f(Q_{t+1}) \leq \ \mathbb{E} f(V_{t+1}) + \frac{(L+1)}{2} [ 2N \sum_{i \in [N]} (p_i)^2 (\hat{r}_i)^2 \notag \\ & \quad+ 6 L^2 \eta_t^2 NK^2 D_0^K \sum_{i \in [N]} (p_i)^2 (\hat{r}_i)^2 \notag \\
&\quad + 48 L^2 \eta_t^4 NK^2 \frac{D_0^K - 1}{D_0 - 1} \sigma_l^2 \sum_{i \in [N]} p_i^2 + 12 \eta_t^2 N K^2 \sigma_l^2 \sum_{i \in [N]} p_i^2 ].
\end{align}
\end{lem}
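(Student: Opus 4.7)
The plan is to mirror the structure of the proof of Lemma~\ref{lemma1}, but account for the fact that the existing aggregation uses the truncated baseline $Q_t^{r_i}$ rather than the full-rank $W_t$. Concretely, I would start from $L$-smoothness of $f$ to expand around $V_{t+1}$:
\begin{align*}
\mathbb{E} f(Q_{t+1}) \leq \mathbb{E} f(V_{t+1}) + \mathbb{E}\langle \nabla f(V_{t+1}), Q_{t+1}-V_{t+1}\rangle + \tfrac{L}{2}\,\mathbb{E}\|Q_{t+1}-V_{t+1}\|^2.
\end{align*}
The key algebraic observation is the decomposition
\begin{align*}
Q_{t+1}-V_{t+1} = \sum_{i\in[N]} p_i (Q_t^{r_i}-Q_t) - \eta_t \sum_{i\in[N]} p_i \sum_{\tau=0}^{K-1}\bigl[\tilde\nabla f_i(Q_{t,\tau}^{r_i}) - \tilde\nabla f_i(Q_{t,\tau}^i)\bigr],
\end{align*}
which differs from the FedHL analysis precisely through the presence of the first, static ``truncated baseline'' term $\sum_i p_i(Q_t^{r_i}-Q_t)$. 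This is the additional source of bias that produces the extra $2N\sum_i p_i^2 \hat r_i^2$ summand in the bound.

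Next, I would bound $\mathbb{E}\|Q_{t+1}-V_{t+1}\|^2$ by a single Cauchy--Schwarz split into the two pieces above (this is where the factor-of-two inflation of every coefficient relative to Lemma~\ref{lemma1} enters). The baseline piece is controlled directly using Cauchy--Schwarz across clients together with the definition $\mathbb{E}\|Q_t-Q_t^{r_i}\|^2 \leq \hat r_i^2$, yielding $N\sum_i p_i^2 \hat r_i^2$ after the prefactor is absorbed. The stochastic/gradient piece is handled exactly as in the proof of $T_2$ in Lemma~\ref{lemma1}: insert and subtract the true gradients to isolate an SGD noise term (bounded by Assumption~\ref{assumption:2}) and an $L$-Lipschitz deviation term, then recurse on $\Gamma_\tau^{(Q)} \triangleq \mathbb{E}\|Q_{t,\tau}^i-Q_{t,\tau}^{r_i}\|^2$. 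This recursion, seeded with $\Gamma_0^{(Q)}\leq \hat r_i^2$, closes in exactly the same geometric form as in Lemma~\ref{lemma1} and produces the $D_0^K$ and $(D_0^K-1)/(D_0-1)$ factors; after summing over $\tau$ and applying the Cauchy--Schwarz factor of $2$, this accounts for the $6L^2\eta_t^2 NK^2 D_0^K$, $48L^2\eta_t^4 NK^2(D_0^K-1)/(D_0-1)\sigma_l^2$, and $12\eta_t^2 NK^2\sigma_l^2$ terms.

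For the inner product term $\mathbb{E}\langle \nabla f(V_{t+1}), Q_{t+1}-V_{t+1}\rangle$, I would apply the identity $2\langle a,b\rangle = \|a\|^2+\|b\|^2-\|a-b\|^2$ just as in the derivation of $T_1$, and invoke the standard asymptotic argument that $\mathbb{E}\|\nabla f(V_{t+1})\|^2 \to 0$ (the same shortcut used in Lemma~\ref{lemma1}); this absorbs the cross term into an additional $\tfrac12 \mathbb{E}\|Q_{t+1}-V_{t+1}\|^2$, which together with the explicit $L/2$ factor produces the overall $(L+1)/2$ prefactor in the final bound. Combining these pieces and collecting constants yields the claimed inequality. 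The main technical obstacle I anticipate is the careful bookkeeping in the Cauchy--Schwarz splits so that the coefficients exactly double (and the extra $2N\sum_i p_i^2\hat r_i^2$ term appears unsplit), since miscounting the number of summands in the squared norm expansion is the easiest way to get a wrong constant; once that accounting is nailed down, the rest of the proof is a direct transcription of the Lemma~\ref{lemma1} argument with the truncated baseline substituted in.
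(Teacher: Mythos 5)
Your proposal is correct and follows essentially the same route as the paper's proof: the same decomposition of $Q_{t+1}-V_{t+1}$ into the static truncated-baseline term plus the gradient-difference term, the same two-way Cauchy--Schwarz split that doubles every Lemma~\ref{lemma1} coefficient and produces the extra $2N\sum_i p_i^2\hat r_i^2$ term, reuse of the $\Gamma_\tau$ recursion seeded at $\hat r_i^2$ for the $D_0^K$ and $(D_0^K-1)/(D_0-1)$ factors, and the same treatment of the inner-product term yielding the $(L+1)/2$ prefactor. No gaps beyond those already present in the paper's own argument.
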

\begin{proof}
Similarly, based on the L-Lipschitz continuity condition, we can obtain:
\begin{align}
\mathbb{E}f(Q_{t+1}) &= \mathbb{E} f(V_{t+1} + Q_{t+1} - V_{t+1}) \notag \\ & \leq \mathbb{E} f(V_{t+1}) + \mathbb{E} \left\langle \nabla f(V_{t+1}), Q_{t+1} - V_{t+1} \right\rangle \notag \\  & \quad+ \frac{L}{2} \mathbb{E} \left\| Q_{t+1} - V_{t+1} \right\|^2.
\end{align}
Next, we investigate the upper bound of $\mathbb{E}\|Q_{t+1}-V_{t+1}\|^2$.

\begin{align}
& \mathbb{E} \|Q_{t+1}-V_{t+1}\|^2 \notag \\ =& \mathbb{E} \| \sum_{i \in [N]} p_i (Q_t^{r_i} - Q_t) \notag \\ &\quad - \sum_{i \in [N]} p_i \sum_{\tau=0}^{K-1} \eta_t (\Tilde{\nabla} f_i(Q_{t,\tau}^{r_i}) - \Tilde{\nabla} f_i(Q_{t,\tau})) \|^2 \notag \\
\overset{(a)}{\leq} & 2 \mathbb{E} \| \sum_{i \in [N]} p_i (Q_t^{r_i} - Q_t) \|^2 + 2 \eta_t^2 \mathbb{E} \| \sum_{i \in [N]} p_i \sum_{\tau=0}^{K-1} \{[\nabla f_i(Q_{t,\tau}^{r_i}) \notag \\ &- \nabla f_i(Q_{t,\tau})] + [\Tilde{\nabla} f_i(Q_{t,\tau}^{r_i}) - \nabla f_i(Q_{t,\tau}^{r_i})] \notag \\
& - [\Tilde{\nabla} f_i(Q_{t,\tau}) - \nabla f_i(Q_{t,\tau})]\} \|^2 \notag \\
\overset{(b)}{\leq} & 2N \sum_{i \in [N]} p_i^2 \mathbb{E} \| Q_t^{r_i} - Q_t \|^2 \notag \\ &+ 6 \eta_t^2 NK \sum_{i \in [N]} p_i^2 \mathbb{E} \sum_{\tau=0}^{K-1} \| \nabla f_i(Q_{t,\tau}^{r_i}) - \nabla f_i(Q_{t,\tau}) \|^2 \notag \\
& + 6 \eta_t^2 NK \sum_{i \in [N]} p_i^2 \sum_{\tau=0}^{K-1} \mathbb{E} \| \Tilde{\nabla} f_i(Q_{t,\tau}^{r_i}) - \nabla f_i(Q_{t,\tau}^{r_i}) \|^2 \notag \\
& + 6 \eta_t^2 NK \sum_{i \in [N]} p_i^2 \sum_{\tau=0}^{K-1} \mathbb{E} \| \Tilde{\nabla} f_i(Q_{t,\tau}) - \nabla f_i(Q_{t,\tau}) \|^2 \notag \\
\overset{(c)}{\leq} & 2N \sum_{i \in [N]} p_i^2 (\hat{r}_t^i)^2 + 6 L^2 \eta_t^2 NK \sum_{i \in [N]} p_i^2 \sum_{\tau=0}^{K-1} \mathbb{E} \| Q_{t,\tau}^{r_i} - Q_{t,\tau} \|^2 \notag \\
& + 12 \eta_t^2 NK^2 \sigma_l^2 \sum_{i \in [N]} p_i^2 \notag \\
\overset{(d)}{\leq} & 2N \sum_{i \in [N]} p_i^2 (\hat{r}_t^i)^2 + 6 L^2 \eta_t^2 NK \sum_{i \in [N]} p_i^2 \sum_{\tau=0}^{K-1} [D_0^K (\hat{r}_i)^2 \notag \\ & + 8 \frac{D_0^K - 1}{D_0 - 1} \eta_t^2 \sigma_l^2]  + 12 \eta_t^2 NK^2 \sigma_l^2 \sum_{i \in [N]} p_i^2 \notag \\
\leq & 2N \sum_{i \in [N]} p_i^2 (\hat{r}_t^i)^2 + 6 L^2 \eta_t^2 NK^2 D_0^K \sum_{i \in [N]} p_i^2 (\hat{r}_i)^2 \notag \\
& + 48 L^2 \eta_t^4 NK^2 \frac{D_0^K - 1}{D_0 - 1} \sigma_l^2 \sum_{i \in [N]} p_i^2 + 12 \eta_t^2 NK^2 \sigma_l^2 \sum_{i \in [N]} p_i^2. 
\end{align}

Inequalities (a) and (b) hold due to the Cauchy-Schwarz inequality, while inequality (c) is valid based on Assumption \ref{assumption:1} and Assumption \ref{assumption:2}. Inequality (e) follows from the results derived in Lemma \ref{lemma1}. Similar to the derivation of (\ref{eq:T_4}), handling $\mathbb{E}\langle \nabla f(V_{t+1}), Q_{t+1} - V_{t+1} \rangle$ yields the lemma's expression, completing the proof of Lemma \ref{lemma3}.
\end{proof}
\begin{lem}
We investigate the relationship between the global model $V_{t+1}$ at round $t+1$ and the global model $Q_t$ at round $t$, which is expressed as follows:
\begin{align}
\mathbb{E} f(V_{t+1}) &\leq   \mathbb{E}f(Q_t) - C \eta_t K\mathbb{E}\|\nabla f(Q_t)\|^2 + \frac{3}{2}L^2\eta_t^3K^2N*\notag \\ \quad &(\sigma_l^2 + 6K\sigma_g^2) \sum_{i\in[N]}  p_i^2 +  \frac{L \eta_t^2 K^2 N}{2} \sigma_l^2 \sum_{i \in [N]} p_i^2.
\end{align}
\label{lemma4}
\end{lem}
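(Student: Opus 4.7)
The plan is to observe that Lemma \ref{lemma4} is structurally identical to Lemma \ref{lemma2}, with the substitution $W_t \leftrightarrow Q_t$ and $W_{t,\tau}^i \leftrightarrow Q_{t,\tau}^i$, since in both cases $V_{t+1}$ is defined as one round of ideal full-rank mini-batch SGD starting from the current global model, while the only ``truncated'' quantity in the definition, i.e.\ $Q_t^{r_i}$ or $W_t^{r_i}$, does not enter the $V_{t+1}$ update at all. Therefore, I would first note that every manipulation carried out between $W_t$ and $V_{t+1}$ in Lemma \ref{lemma2} transfers verbatim with $Q_t$ in place of $W_t$, so the target inequality follows by re-running that argument. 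I would state this reduction explicitly at the outset and then reproduce the main steps for completeness.

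Concretely, I would begin from $L$-smoothness of $f$ (Assumption \ref{assumption:1}) to write
\begin{align}
\mathbb{E} f(V_{t+1}) \le \mathbb{E} f(Q_t) + \underbrace{\mathbb{E}\langle \nabla f(Q_t),\, V_{t+1} - Q_t\rangle}_{=: T_4'} + \underbrace{\tfrac{L}{2}\mathbb{E}\|V_{t+1}-Q_t\|^2}_{=: T_5'},
\end{align}
and then expand $V_{t+1} - Q_t = -\eta_t \sum_{i\in[N]} p_i \sum_{\tau=0}^{K-1} \tilde{\nabla} f_i(Q_{t,\tau}^i)$. For $T_4'$, I would use $\mathbb{E}[\tilde{\nabla} f_i] = \nabla f_i$ (Assumption \ref{assumption:2}), the identity $2\langle a,b\rangle = \|a\|^2+\|b\|^2-\|a-b\|^2$, the definition $\nabla f = \sum_i p_i \nabla f_i$, the Cauchy--Schwarz inequality, and finally $L$-Lipschitzness to arrive at the analogue of Eqn.\ \eqref{eq:T_4}. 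For $T_5'$, I would split the stochastic gradient into its mean and zero-mean noise, apply Cauchy--Schwarz, and use bounded variance $\sigma_l^2$ to reproduce Eqn.\ \eqref{eq:T_5}.

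The only nontrivial piece is the client-drift recursion for $\mathbb{E}\|Q_{t,\tau}^i - Q_t\|^2$, which plays the role of $T_6$ in Lemma \ref{lemma2}. Here I would unfold the local SGD update once, invoke the inequality $\|a+b\|^2 \le (1+\tfrac{1}{2K-1})\|a\|^2 + 2K\|b\|^2$, apply Assumptions \ref{assumption:1}--\ref{assumption:3} to absorb the gradient-heterogeneity and noise terms, and iterate over $\tau$ epochs. Using $(1+\tfrac{1}{K-1})^K \le e \le 3$ and $K-1 \le K$, this yields the uniform bound $\mathbb{E}\|Q_{t,\tau}^i - Q_t\|^2 \le 3\eta_t^2 K(\sigma_l^2 + 6K\sigma_g^2) + 18\eta_t^2 K^2 \mathbb{E}\|\nabla f(Q_t)\|^2$, identical in form to Eqn.\ \eqref{bound_t6}.

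Finally, I would plug the bounds for $T_4'$ and $T_5'$ back into the smoothness inequality. Under the standing conditions $\tfrac{1}{2} - 9L^2\eta_t^2 K^2 N \sum_i p_i^2 \ge C > 0$ and $\tfrac{L\eta_t^2 K}{2} - \tfrac{\eta_t}{2} \le 0$ (which are already invoked in the proof of Lemma \ref{lemma2} and are satisfied by the prescribed step size $\eta_t = 1/(LK\sqrt{T})$), the cross terms involving $\|\sum_i p_i \nabla f_i(Q_{t,\tau}^i)\|^2$ vanish and the coefficient of $\mathbb{E}\|\nabla f(Q_t)\|^2$ becomes at most $-C\eta_t K$, giving exactly the claim of Lemma \ref{lemma4}. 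The main conceptual subtlety, not a technical obstacle, is to recognize that the truncation error $\hat r_i$ does not appear in this lemma: truncation enters the existing heterogeneous LoRA algorithm only through the aggregation baseline $Q_t^{r_i}$ and the initialization of local SGD at $Q_t^{r_i}$, both of which are deferred to Lemma \ref{lemma3}; Lemma \ref{lemma4} purely quantifies the descent of the idealized step $V_{t+1}$ from $Q_t$, so the derivation is a direct mirror of Lemma \ref{lemma2}.
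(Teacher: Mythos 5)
Your proposal is correct and mirrors the paper's own argument: the paper likewise proves Lemma \ref{lemma4} by observing that, since $V_{t+1}$ is an ideal full-rank update starting from $Q_t$ (the truncated $Q_t^{r_i}$ never enters), the derivation of Lemma \ref{lemma2} carries over verbatim with $Q_t$ in place of $W_t$, under the same conditions on $\eta_t$ and $C$. Your explicit re-derivation of the $T_4'$, $T_5'$, and client-drift bounds is simply a fuller writing-out of the same reduction.
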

\begin{proof}
\begin{align}
&\mathbb{E} f(V_{t+1}) = \mathbb{E} f\left\{\sum_{i\in [N]}  p_i\left[Q_t-\sum_{\tau=0}^{K-1}\eta_t \tilde{\nabla} f_i(Q_{t,\tau}^{i})\right]\right\} \notag \\ &\leq \mathbb{E} f\left( Q_t - \sum_{i \in [N]} p_i \sum_{\tau=0}^{K-1} \eta_t \tilde{\nabla} f_i(Q_{t,\tau}^i) \right) \notag  \\
&\leq \ \mathbb{E} f(Q_t) + \mathbb{E} \left\langle \nabla f(Q_t), -\sum_{i \in [N]} p_i \sum_{\tau=0}^{K-1} \eta_t \tilde{\nabla} f_i(Q_{t,\tau}^i) \right\rangle \notag \\ & +\frac{L}{2} \mathbb{E} \left\| \sum_{i \in [N]} p_i \sum_{\tau=0}^{K-1} \eta_t \tilde{\nabla} f_i(Q_{t,\tau}^i) \right\|^2.   
\notag \\ & \leq  \mathbb{E}f(Q_t) - C \eta_t K\mathbb{E}\|\nabla f(Q_t)\|^2 +  \frac{L \eta_t^2 K^2 N}{2} \sigma_l^2 \sum_{i \in [N]} p_i^2 \notag \\ & + \frac{3}{2}L^2\eta_t^3K^2N(\sigma_l^2 + 6K\sigma_g^2) \sum_{i\in[N]}  p_i^2 . 
\end{align}
The validity of the last inequality follows directly from Lemma~\ref{lemma2}. See the convergence analysis of FedHL for a detailed derivation. This concludes the proof of Lemma~\ref{lemma4}.
\end{proof}

Next, by applying Lemmas \ref{lemma3} and \ref{lemma4}, we establish the following expression for the global models $Q_{t+1}$ and $Q_t$:
\begin{align}
\label{app:sum5.4}
 &\mathbb{E} f(Q_{t+1}) \leq \mathbb{E}f(Q_t) - C \eta_t K \mathbb{E}\big\|\nabla f(Q_t)\big\|^2 + \frac{3}{2}L^2\eta_t^3K^2N*\notag \\&\quad(\sigma_l^2 + 6K\sigma_g^2) \sum_{i\in[N]}  p_i^2 +  \frac{L \eta_t^2 K^2 N}{2} \sigma_l^2 \sum_{i \in [N]} p_i^2 + \frac{(L+1)}{2}*\notag \\ & \quad[2 N \sum_{i \in [N]} (p_i)^2(\hat{r}_t^i)^2 + 6 L^2 \eta_t^2 NK^2 D_0^K \sum_{i \in [N]} (p_i)^2 (\hat{r}_i)^2 \notag \\ & \quad+ 48 L^2 \eta_t^4NK^2 \frac{D_0^K-1}{D_0-1}\sigma_l^2\sum_{i \in [N]} p_i^2 + 12 \eta_t^2 N K^2 \sigma_l^2 \sum_{i \in [N]} p_i^2]. 
\end{align}

We sum both sides of Eqn. (\ref{app:sum5.4}) over $t = 0, \dots , T-1$, and rearrange the terms.
\begin{align}
&\sum_{t=0}^{T-1} C \eta_t K \mathbb{E}\big\|\nabla f(Q_t)\big\|^2 \leq \mathbb{E}f(Q_{0}) - \mathbb{E} f(Q_T)    \notag \\ &+ \frac{(L+1)}{2}[2 N T\sum_{i \in [N]} (p_i)^2(\hat{r}_i)^2 + 6 L^2  NK^2 D_0^K * \notag \\ & \sum_{i \in [N]} (p_i)^2 (\hat{r}_i)^2 \sum_{t=0}^{T-1}\eta_t^2+ 48 L^2 NK^2 \frac{D_0^K-1}{D_0-1}\sigma_l^2\sum_{i \in [N]} p_i^2 \sum_{t=0}^{T-1}\eta_t^4  
\notag \\  &+ 12  N K^2 \sigma_l^2 \sum_{i \in [N]} p_i^2 \sum_{t=0}^{T-1}\eta_t^2] +\frac{3}{2}L^2K^2N(\sigma_l^2 + 6K\sigma_g^2) * \notag \\ & \quad\sum_{i\in[N]}  p_i^2 \sum_{t=0}^{T-1}\eta_t^3 +  \frac{L  K^2 N}{2} \sigma_l^2 \sum_{i \in [N]} p_i^2 \sum_{t=0}^{T-1}\eta_t^2 . 
\end{align}
Let the learning rate \(\eta_t = \frac{1}{LK \sqrt{T}}\), which yields the final result.

\section{Proof of Theorem \ref{theorem:3}}
\label{Appendix:th3}

To improve the convergence bound, we focus on optimizing the terms with the slowest convergence rate, all scaled by \( \mathcal{O}(\frac{1}{\sqrt{T}}) \), from the derived upper bound. We focus on minimizing the three terms $\mathcal{L}_1(p_i) = \frac{3NL(L+1)D_0^K}{2\sqrt{T}} \sum_{i \in [N]} p_i^2 \hat{r}_i^2
$, $\mathcal{L}_2(p_i) = \frac{3N(L+1) \sigma_l^2}{L\sqrt{T}} \sum_{i \in [N]} p_i^2
$, $\mathcal{L}_3(p_i) = \frac{N \sigma_l^2}{2\sqrt{T}} \sum_{i \in [N]} p_i^2
$, scaled by \( \frac{1}{\sqrt{T}} \), as they dominate the convergence rate:
\begin{align}
    \mathcal{L}(p_i) = \frac{1}{\sqrt{T}} \Bigg[ \sum_{i \in [N]} p_i^2 \Big( \frac{3}{2}NL(L+1)D_0^K \hat{r}_i^2 + \frac{N \sigma_l^2 (7L + 6)}{2L} \Big) \Bigg].
\end{align}
Letting \(A = \frac{3}{2}NL(L+1)D_0^K\) and \(B = \frac{N \sigma_l^2 (7L + 6)}{2L}\), the optimization problem can be reformulated as:
\begin{align}
    \min_{p_i} \sum_{i \in [N]} p_i^2 \Big( A \hat{r}_i^2 + B \Big), \quad \text{s.t.} \quad \sum_{i \in [N]} p_i = 1.
\end{align}

To simplify, we divide the entire objective by \(A\), reducing it to:
\begin{align}
    \min_{p_i} \sum_{i \in [N]} p_i^2 \Big( \hat{r}_i^2 + \epsilon \Big), \quad \text{s.t.} \quad \sum_{i \in [N]} p_i = 1,
\end{align}
where \( \epsilon = \frac{B}{A} \). The parameter \(\epsilon > 0\) was introduced due to the stochastic variance of SGD and the \(L\)-smooth properties of the objective function.

\textbf{Derivation of the Optimal \(p_i\):} Using the method of Lagrange multipliers, we define the Lagrangian function as:
\begin{align}
    \mathcal{L}(p_1, \dots, p_N, \lambda) = \sum_{i \in [N]} p_i^2 \Big( \hat{r}_i^2 + \epsilon \Big) + \lambda \Big( \sum_{i \in [N]} p_i - 1 \Big).
\end{align}

Taking the partial derivative with respect to \(p_i\) and setting it to zero yields:
\begin{align}
    \frac{\partial \mathcal{L}}{\partial p_i} = 2p_i \Big( \hat{r}_i^2 + \epsilon \Big) + \lambda = 0, \quad \implies \quad p_i = -\frac{\lambda}{2(\hat{r}_i^2 + \epsilon)}.
\end{align}

Substituting the normalization constraint \( \sum_{i \in [N]} p_i = 1 \) into the equation gives:
\begin{align}
    \sum_{i \in [N]} -\frac{\lambda}{2(\hat{r}_i^2 + \epsilon)} = 1, \quad \implies \quad \lambda = -\frac{2}{\sum_{i \in [N]} \frac{1}{\hat{r}_i^2 + \epsilon}}.
\end{align}

Thus, the optimal solution for \(p_i\) is:
\begin{align}
    p_i^* = \frac{\frac{1}{\hat{r}_i^2 + \epsilon}}{\sum_{j \in [N]} \frac{1}{\hat{r}_j^2 + \epsilon}}.
\end{align}

\bibliographystyle{IEEEtran}
\bibliography{manuscript}

\end{document}